%% file: Arxiv_version.tex
\documentclass[10pt]{article}

\usepackage{booktabs,tabularx,array}
\newcommand{\Ccov}{\normalfont{\textsf{C}}_{\textsf{cov}}}
\newcolumntype{Y}{>{\raggedright\arraybackslash}X}

\input{_macros}
\definecolor{OliveGreen}{rgb}{0,0.6,0}
\usepackage{natbib}
\usepackage{custom}
\usepackage[normalem]{ulem}
\usepackage[ruled,vlined]{algorithm2e} 
\allowdisplaybreaks{}
\makeatletter
\renewcommand{\paragraph}{%
  \@startsection{paragraph}{4}%
  {\z@}{1.25ex \@plus 1ex \@minus .2ex}{-1em}%
  {\normalfont\normalsize\bfseries}%
}
\makeatother

\usepackage{bm,bbm}

\newcommand{\sel}{\normalfont{\textsf{Sel}}}

\newcommand{\map}{\normalfont{\textsf{map}}}
\newcommand{\reg}{\normalfont{\textsf{Regret}}}
\newcommand{\vc}{\normalfont{\textsf{VC}}}

\renewcommand{\S}{\normalfont{\textsf{S}}}
\newcommand{\V}{\normalfont{\textsf{V}}}
\newcommand{\R}{\normalfont{\textsf{R}}}

\newcommand{\C}{\normalfont{\textsf{C}}}

\renewcommand{\C}{\normalfont{\textsf{C}}}

\newcommand{\D}{\normalfont{\textsf{D}}}

\renewcommand{\exp}{\normalfont{\textsf{exp}}}

\title{Agnostic Smoothed Online Regression with Adversarial Responses}

\begin{document}
\renewcommand{\thefootnote}{\fnsymbol{footnote}}
\setcounter{footnote}{2}

\author{%
  Xuanyu Chen\textcolor{magenta}{\thanks{Equal contribution. Department of Statistics, University of Michigan; Email: \textcolor{blue}{\texttt{\{xuanyuch,yueuy\}@umich.edu}}}}
  \and
  Yue Yu\textcolor{magenta}{\footnotemark[3]}
}
\maketitle

\begin{abstract}
We study smoothed online prediction with bounded adversarial responses. This widely studied framework bridges i.i.d. sampling and adversarial covariate selection through a smoothness parameter $\textsf{C}_{\textsf{cov}}$, which bounds conditional covariate densities relative to a fixed, unknown base measure. We propose \textsc{Hedge-Cover}, an information-theoretic algorithm that achieves sublinear regret $\widetilde{O}(\sqrt{\text{Pdim}(\mathcal{F}) \textsf{C}_{\textsf{cov}} T})$ for function classes with bounded pseudo-dimension. The algorithm aggregates a carefully constructed family of experts using \textsc{Hedge}, with a prior that links regret to the number of disagreements between a consistent selector and a target function. We bound this number by exploiting covariate smoothness. This answers an open problem posed in \cite{blanchard2025agnostic} on the minimax optimal adaptive regret of the smoothed online regression problem. We establish a matching lower bound for the class of linear predictors. The main intricacy of the lower bound lies in explicitly constructing a challenging sequential covariate distribution supported on mutually orthogonal hyperplanes. This construction may be of independent technical interest. 
Finally, 
we revisit the well-specified setting and quantify the effect of response noise. For conditionally $\nu^2$-subGaussian responses, we extend the existing lower bound under realizable responses by showing that the minimax expected regret is $\Omega((1\vee \nu)\sqrt{(\textsf{C}_{\textsf{cov}}-1)dT})$ for a function class of VC dimension $d$. A corresponding upper bound for ERM matches this dependence on $\nu$.

\end{abstract}

\input{section/main_body}

\section*{Acknowledgments}

\bibliographystyle{plainnat}
\bibliography{YY,extra}

\clearpage
\appendix
\section*{Appendix}
\addcontentsline{toc}{section}{Appendix}
\setcounter{tocdepth}{3}
\begingroup
\tableofcontents
\endgroup

\section*{Additional notations.}

\input{section/appendix}

\end{document}


%% file: _macros.tex
\usepackage{comment,url,graphicx,subcaption,relsize,etoc}
\usepackage{amssymb,amsfonts,amsmath,amsthm,amscd,dsfont,mathrsfs,mathtools,multirow,microtype,nicefrac,pifont,bm}
\usepackage{float,psfrag,epsfig,color,url,hyperref}
\usepackage{upgreek}
\usepackage[dvipsnames]{xcolor}
\usepackage{epstopdf,bbm,mathtools,enumitem}
\usepackage[toc,page]{appendix}
\usepackage[mathscr]{euscript}

\usepackage[top=1in, bottom=1in, left=1in, right=1in]{geometry}

\def\balign#1\ealign{\begin{align}#1\end{align}}
\def\baligns#1\ealigns{\begin{align*}#1\end{align*}}
\def\balignat#1\ealign{\begin{alignat}#1\end{alignat}}
\def\balignats#1\ealigns{\begin{alignat*}#1\end{alignat*}}
\def\bitemize#1\eitemize{\begin{itemize}#1\end{itemize}}
\def\benumerate#1\eenumerate{\begin{enumerate}#1\end{enumerate}}

\newenvironment{talign*}
 {\let\displaystyle\textstyle\csname align*\endcsname}
 {\endalign}
\newenvironment{talign}
 {\let\displaystyle\textstyle\csname align\endcsname}
 {\endalign}

\def\balignst#1\ealignst{\begin{talign*}#1\end{talign*}}
\def\balignt#1\ealignt{\begin{talign}#1\end{talign}}

\let\originalleft\left
\let\originalright\right
\renewcommand{\left}{\mathopen{}\mathclose\bgroup\originalleft}
\renewcommand{\right}{\aftergroup\egroup\originalright}

\def\tinycitep*#1{{\tiny\citep*{#1}}}
\def\tinycitealt*#1{{\tiny\citealt*{#1}}}
\def\tinycite*#1{{\tiny\cite*{#1}}}
\def\smallcitep*#1{{\scriptsize\citep*{#1}}}
\def\smallcitealt*#1{{\scriptsize\citealt*{#1}}}
\def\smallcite*#1{{\scriptsize\cite*{#1}}}

\def\<{\left\langle} 
\def\>{\right\rangle}

\DeclareSymbolFont{rsfs}{U}{rsfs}{m}{n}
\DeclareSymbolFontAlphabet{\mathscrsfs}{rsfs}
\ifdefined\nonewproofenvironments\else
\ifdefined\ispres\else
\newtheorem{theorem}{Theorem}[section]

\newtheorem{lemma}[theorem]{Lemma}
\newtheorem{corollary}[theorem]{Corollary}

\theoremstyle{definition}
\newtheorem{definition}{Definition}
\newtheorem{question}{Question}

\theoremstyle{definition}

\renewenvironment{proof}{\noindent\textbf{Proof.}\hspace*{.3em}}{\qed \vspace{.1in}}

\newenvironment{proof-sketch}[1][{}]{\noindent\textbf{Proof Sketch of {#1}}
  \hspace*{1em}}{\qed\\}

\newenvironment{proof-idea}{\noindent\textbf{Proof Idea}
  \hspace*{1em}}{\qed\bigskip\\}
\newenvironment{proof-of-lemma}[1][{}]{\noindent\textbf{Proof of Lemma {#1}}
  \hspace*{1em}}{\qed\\}
\newenvironment{proof-of-corollary}[1][{}]{\noindent\textbf{Proof of Corollary {#1}}
  \hspace*{1em}}{\qed\\}  
\newenvironment{proof-of-claim}[1][{}]{\noindent\textbf{Proof of Claim {#1}}
  \hspace*{1em}}{\qed\\}
  \newenvironment{proof-of-proposition}[1][{}]{\noindent\textbf{Proof of Proposition {#1}}
  \hspace*{1em}}{\qed\\}
\newenvironment{proof-of-theorem}[1][{}]{\noindent\textbf{Proof of Theorem {#1}}
  \hspace*{1em}}{\qed\\}
\newenvironment{proof-attempt}{\noindent\textbf{Proof Attempt}
  \hspace*{1em}}{\qed\bigskip\\}

\theoremstyle{definition}

\theoremstyle{definition}
\newtheorem{remark}{Remark}
\theoremstyle{definition}

\theoremstyle{definition}
\newtheorem{proposition}[theorem]{Proposition}
\newtheorem{claim}[theorem]{Claim}

\theoremstyle{definition}

\fi
\makeatletter
\@addtoreset{equation}{section}
\makeatother

\hypersetup{
  colorlinks,
  linkcolor={blue},
  citecolor={blue},
  urlcolor={magenta}
}



%% file: section/main_body.tex
\section{Introduction}

Online learning is a framework for sequential decision-making in which a learner iteratively
makes predictions, observes feedback, and updates its strategy. Classical PAC learning~\citep{vapnik1971uniform, vapnik1974theory} assumes i.i.d. samples, whereas online learning allows sequential,
potentially adversarial covariates and responses. For binary classification under 0-1 loss, a class is online learnable if and only if it
has finite Littlestone dimension~\citep{littlestone1988learning,shalev2014understanding,alon2021adversarial}. This requirement is substantially stronger than finiteness of the VC dimension, which
characterizes distribution-free PAC learnability for binary classification~\citep{vapnik1971uniform,vapnik1974theory,devroye1996probabilistic,mathiasen2026optimal}.

To accommodate dependence within the covariate sequence and interpolate between the
i.i.d. and fully adversarial regimes, \citet{rakhlin2011online} first introduced the notion of a smoothed adversary in online learning. Subsequent work has developed and analyzed algorithms under several variants of smoothed
online learning; see~\cite{block2022efficient,pmlr-v195-block23b,haghtalab2022oracle,chen2026self, haghtalab2020smoothed,haghtalab2024smoothed,raman2024smoothed,wu2024oracle,blanchard2025agnostic}. In the recent literature, most work considers the case in which the base measure $\mu$ is known (or equivalently, uniform over its support) and the learner is able to sample from $\mu$. See Section~\ref{subsec:related-works} for a detailed comparison with these results.

In the present work, we study the online prediction problem in which no assumptions except boundedness are made on the response values $\{y_t\}_{1\leq t \leq T}$ selected
by the adversary, and the
base measure $\mu$ is only existential and is \textit{not} announced to the learner a priori. We refer to the setting as
the \textit{agnostic} smoothed online learning setting. Arguably, removing knowledge of the base measure weakens the learner’s information and generally makes the problem more realistic but also more challenging. In online regression with squared loss, at each round $t=1,2,\cdots,T$, the forecaster observes $x_t\in \mathbb{R}^d$, outputs a prediction $\widehat{y}_t\in \mathbb{R}$ and receives $y_t\in [-1,1]$ from nature. 
The goal of the learner is to achieve low squared loss compared
to a fixed function class baseline $\mathcal{F}$. Formally, the regret up to $T$ is defined as

\begin{align}
\label{eq:reg}
    \reg(T,\mathcal{F}): = \sum_{1\leq t\leq T}(\widehat{y}_t - y_t)^2 - \inf_{f\in \mathcal{F}}\sum_{1\leq t \leq T}(f(x_t)- y_t)^2.
\end{align}
We highlight that \cite{blanchard2025agnostic} studies a similar agnostic smoothed online prediction problem with arbitrarily adversarial responses and analyzes the\textit{ oblivious regret}
\begin{align}
\label{eq:oblivious}
    \reg^{\textsf{obl}}(T,\mathcal{F}) = \sum_{1\leq t\leq T}\bbE (\widehat{y}_t - y_t)^2 - \inf_{f\in \mathcal{F}}\sum_{1\leq t \leq T}\bbE(f(x_t)- y_t)^2\leq \bbE \,\reg(T,\mathcal{F}).
\end{align}
Unless otherwise specified, we consider the regret defined in~\eqref{eq:reg}, which is sometimes called adaptive regret.

We next formalize a smoothness condition that caps how concentrated each $X_t$ can
be relative to a fixed base measure $\mu$.
An environment $\textsf{Env}$ (i.e. the joint law of $(x_1,y_1,\cdots,x_T,y_T)$) is $\C_{\textsf{cov}}$-smooth if there is a probability measure $\mu$ on
$\mathbb{R}^d$ such that the conditional law of $x_t$, given
$(x_1,\ldots,x_{t-1})$, has a Radon--Nikodym derivative with
respect to $\mu$ at most $\C_{\textsf{cov}}$.
\begin{definition}
\label{def:smoothness}
    There exists a probability measure $\mu$ on $\mathcal{X}\subseteq\mathbb{R}^d$ and a parameter
$\C_{\normalfont{\textsf{cov}}} \geq 1$ such that for every round $t$, given the partial history
$\mathcal{H}_{t-1}^{x}=\sigma(\{x_1,\dots,x_{t-1}\})$, the conditional law of $x_t$, denoted by 
$P_t(\,\cdot\mid\mathcal{H}_{t-1}^{x})$, satisfies
\begin{align*}
\frac{\mathrm{d}P_t(\,\cdot\mid\mathcal{H}_{t-1}^{x})}{\mathrm{d}\mu}(x)
\leq \C_{\textsf{cov}}
\qquad \textit{for $\mu$-a.e. $x\in\mathbb{R}^d$.}
\end{align*}
\end{definition}

Some works in the literature adopt the reciprocal parametrization $\sigma = \C_{\textsf{cov}}^{-1}$ of the smoothness condition, while we stick to $\Ccov$ in the present work. To quantify the benefit of covariate smoothness in online learning, we study the minimax optimal expected adaptive-benchmark regret. For a function class $\mathcal{F}$, we consider the following minimax regret
\begin{align*}
    \mathfrak{R}(T,\mathcal{F})=\mathfrak{R}(T,d,\Ccov,\mathcal{F}) = \inf_{\pi} \sup_{\substack{\textsf{Env}\text{ is}\\ \Ccov-\text{smooth}}}
    \mathbb{E}_{\pi,\textsf{Env}}\big(\reg(T,\mathcal{F})\big),
\end{align*}
where $\pi$ ranges over all (possibly randomized) learners. We advance our understanding of the quantitative behavior of agnostic smoothed online
regression by addressing the following question.

\begin{question}
In the adversarial setting, can one achieve sublinear (adaptive) regret for
smoothed online learning without prior knowledge of the base measure? If so, what is the minimax-optimal regret rate?
\end{question}


We also revisit the well-specified setting, a special case of agnostic smoothed online learning in which the responses satisfy $\bbE[y_t\,|\, x_t] = f^\star(x_t)$ for some $f^\star\in\mathcal{F}$. 
In this setting, empirical risk minimization (ERM) is a natural procedure that does not require knowledge of the base measure. 
\citet{block2024performance} establish an upper bound on the expected cumulative estimation error of ERM in terms of the Wills functional~\citep{wills1973gitterpunktanzahl,hadwiger1975will,vitale1996wills,mourtada:hal-05304667},
while \citet{blanchard2025agnostic} proves a matching worst-case lower bound for VC classes under realizable responses. 
Taken together, these results show that, when $\Ccov$ is bounded away from $1$, ERM is minimax-optimal in its dependence on $\Ccov$, $d$, and $T$, up to logarithmic factors. However, they do not provide a sharp characterization of the dependence on the response-noise level.

We now summarize our main contributions in this paper. 
\begin{enumerate}
\item In the adversarial response setting, we propose \textsc{Hedge-Cover}, an algorithm that requires no knowledge of the base measure and achieves sublinear regret $\widetilde{O}(\sqrt{\text{Pdim}(\mathcal{F}) \Ccov T})$, as established in Theorem~\ref{thm:upper-bound}. \citet{blanchard2025agnostic} proposes \textsc{R-Cover}, which achieves an oblivious regret bound of $O(\sqrt{\Ccov dT})$ for regression (see~\eqref{eq:oblivious}), but leave open whether sublinear adaptive regret is attainable in this setting. Our result resolves this open question by providing the first sublinear adaptive regret guarantee for regression with adversarial responses and an unknown base measure.

\item We complement the upper bound for \textsc{Hedge-Cover} by constructing a challenging smoothed environment that yields the lower bound $\mathfrak{R}(T)=\Omega(T\wedge \sqrt{d(\Ccov-1)T})$ for $d\geq 2$ and $\Ccov>1$. The proof develops a systematic method for constructing hard environments, which may be of independent theoretical interest.

\item  In the well-specified setting, we quantify the effect of response noise. For conditionally $\nu^2$-subGaussian responses, we prove that there exists a class of VC dimension $d$ for which every learner incurs expected regret $\Omega\big(T\wedge\sqrt{(\Ccov-1)(1\vee \nu^2)dT}\big)$. The bound recovers the known rate in \citet{blanchard2025agnostic} under realizable ($\nu=0$) responses and reveals a missing noise-dependent term in the ERM analysis of \citet{block2024performance}. A minor modification of their argument yields an upper bound with matching dependence on $\nu$. Finally, we show that the Wills-functional upper bound needs not be class-wise tight.


\end{enumerate}

\subsection{Related work}
\label{subsec:related-works}
\paragraph{Online prediction with a smooth environment.}

We review the closest results in smoothed online learning in Table~\ref{tab:regression-classification} and list the work on online regression with a known base measure as a comparison. 
\begin{table}[htbp]
\centering
\caption{
Upper and lower regret bounds for regression under
$\Ccov$-smooth covariates.
}
\label{tab:regression-classification}
\normalsize
\setlength{\tabcolsep}{4pt}
\renewcommand{\arraystretch}{1.05}
\renewcommand{\tabularxcolumn}[1]{m{#1}}

\begin{tabularx}{\textwidth}{
    @{}>{\centering\arraybackslash}m{0.10\textwidth}
    >{\centering\arraybackslash}X
    >{\centering\arraybackslash}X@{}
}
\toprule
& \textbf{Adversarial responses}
& \textbf{Well-specified responses} \\
\midrule

\shortstack{Upper\\bound}
&
\textbf{Unknown $\mu$, oblivious.}

\textsc{R-Cover}:
$O\big(\log^3 T\sqrt{\Ccov dT}\big)$,

for $\textsf{fat}_{\mathcal F}(r)\le d\log(1/r)$.

\cite[Theorem 4]{blanchard2025agnostic}

\medskip

\textbf{Unknown $\mu$, adaptive.}

\textsc{Hedge-Cover}:

$\widetilde O\big(
\sqrt{\operatorname{Pdim}(\mathcal F)\Ccov T}
\big)$.

Theorem~\ref{thm:upper-bound}

\medskip

\textbf{Known $\mu$, adaptive.}

$O\big(\log^{3/2}T\log(\Ccov T)\sqrt{dT}\big)$.

\cite[Theorem 3]{pmlr-v178-block22a}
&
\textbf{Unknown $\mu$, estimation error.}

\textsc{ERM}:
$\widetilde O\big(\sqrt{\Ccov dT}\big)$.

\cite[Theorem 8]{block2024performance}
\\
\midrule

\shortstack{Lower\\bound}
&
\textbf{Unknown $\mu$, adaptive.}

$\Omega\Big(\sqrt{(\Ccov-1)dT}\wedge T\Big)$

for all algorithms.

Theorem~\ref{theorem:minimax}
&
\textbf{Unknown $\mu$, adaptive.}

$\Omega\Big(\sqrt{(\Ccov-1)dT}\wedge T\Big)$

for all algorithms.

Realizable responses:

\cite[Theorem 2]{blanchard2025agnostic}

Noise-dependent extension:

Theorem~\ref{thm:lower-bound-well-specified}
\\
\bottomrule
\end{tabularx}
\end{table}
In the online classification setting, \citet{blanchard2025agnostic} introduces \textsc{R-Cover}, a recursive-covering algorithm that provides the first sublinear regret guarantee for agnostic smoothed online learning without prior knowledge of $\mu$. 
\citet{blanchard2025agnostic} also establishes a minimax lower bound under realizable responses for a function class of VC dimension $d$. We establish a lower bound of the same order for the linear function class when $d\geq 2$. For this class, we further characterize a sharp phase transition: the minimax regret scales as $\sqrt{T}$ when $d\geq 2$ and $\Ccov>1$, but is logarithmic in $T$ when $d=1$ or $\Ccov=1$.


\paragraph{Online linear regression.}
The central ingredient in our lower bound for agnostic regression (Theorem~\ref{theorem:minimax}) over a function class satisfying $\vc(\mathcal{F})=d$ is an explicit construction of an unknown discrete base measure $\mu$. The construction applies even to linear predictors. Online linear regression has a long history~\citep{vovk1997competitive,azoury2001relative,cesa2006prediction}. More recently, by exploiting a connection with self-normalized martingales, \citet{chen2026self} establish a regret upper bound of $O(\sqrt{d\Ccov T\log T})$. As a minor technical contribution, we sharpen this bound to $O(\sqrt{d\Ccov T})$ by using an argument that avoids direct decoupling, which matches the worst regret even for the logarithmic term.

\paragraph{Hybrid online learning.}
Hybrid online learning~\citep{lazaric2009hybrid} studies the setting in which covariates are sampled i.i.d. from an unknown distribution, while the responses may be chosen adversarially. Note that this corresponds to the special case of smoothed online learning with $\C_{\textsf{cov}} = 1$. For an unrestricted response adversary, the best-known results leave a gap between statistical and computational guarantees. Even in this special case, it remains open whether statistical optimality and computational efficiency can be achieved simultaneously, suggesting a possible statistical--computational trade-off in smoothed online learning. Specifically, statistically optimal algorithms are typically computationally intractable for natural hypothesis classes \citep{lazaric2009hybrid, wu2022expected}, whereas known ERM-oracle-efficient algorithms attain statistically suboptimal regret rates \citep{wu2024oracle}. 
Under the additional structural assumption that, at each round, the adversary selects a possibly different labeling function from a fixed and known class, \citet{okoroafor2026oracle} develop an oracle-efficient learning algorithm whose regret is near-optimal up to logarithmic factors and the dependence on the adversary’s labeling class.
\subsection{Notations}
Let $\mathbb N_{+}=\{1,2,\ldots\}$ and $[n]=\{1,\ldots,n\}$ for $n\in\mathbb N_{+}$. For a finite set $A$, let $|A|$ denote its cardinality, and let $\mathbbm{1}_{\textsf{E}}$ denote the indicator of an event $\textsf{E}$. For sets $A,B\subseteq\mathcal X$, their symmetric difference is denoted by $A\Delta B$. Uppercase letters denote random variables and the corresponding lowercase letters denote their realizations. Let $\mathcal H_t^{x}:=\sigma(X_1,\ldots,X_t)$ denote the covariate history and let $\mathcal H_t:=\sigma(X_s,\widehat Y_s,Y_s:1\leq s\leq t)$ denote the completed interaction history. We abbreviate $P_t(\,\cdot\,):=\mathcal L(X_t\mid\mathcal H_{t-1}^{x})$ and $P_t(A):=\mathbb P(X_t\in A\mid\mathcal H_{t-1}^{x})$. For probability measures $P$ and $\mu$, $P\ll\mu$ denotes absolute continuity, $\d P/\d\mu$ denotes the Radon--Nikodym derivative, and $\mu^{\otimes k}$ denotes the $k$-fold product measure. Finally, we use standard asymptotic notation: $a\lesssim b$ and $a\gtrsim b$ mean that the corresponding inequality holds up to a universal positive constant, while $a\asymp b$ means that both relations hold. The notation $\widetilde O(\cdot)$ suppresses polylogarithmic factors in $T$.

\section{\textsc{Hedge-Cover} algorithm}

Before formally introducing the algorithm~\textsc{Hedge-Cover}, we introduce the necessary shorthand notation and define a selection procedure similar to the realizable
device in~\cite{blanchard2025agnostic} as follows. The learner first discretizes the range $[-1,1]$ with a grid $\mathcal{N}_m = \{-1 + 2m^{-1} j\}$ where $j\in [m]$.
With a slight abuse of notation, we use $\mathcal{N}_m\circ \mathcal{F} = \{\textsf{proj}(f(x),\mathcal{N}_m):f\in \mathcal{F}\}$.
For a trajectory  $\D = \{(x_i,\widetilde{y}_i)\}_{1\leq i \leq n}$, where $x_i\in \mathcal{X}$ and $\widetilde{y}_i\in \mathcal{N}_m$, a selector $\sel:(\mathcal{X}\times \mathcal{N}_m)^{\infty}\to \mathcal{N}_m\circ\mathcal{F}$ maps to a (non-zero) function in $\mathcal{F}$ if $\D$ is realizable.
\begin{definition}[Selector]
Fix a well-ordering of $\mathcal{N}_m\circ\mathcal{F}$. Given a trajectory
$\D = \{(x_i,\widetilde{y}_i)\}_{i=1}^{n}$, where
$x_i\in\mathcal{X}$ and $\widetilde{y}_i\in\mathcal{N}_m$,
define $\sel(\D)$ to be the first function
$f_\star\in\mathcal{N}_m\circ\mathcal{F}$ in this ordering satisfying
$f_\star(x_i)=\widetilde{y}_i$ for every $i\in[n]$.
If no such function exists, set $\sel(\D)\equiv 0$, the zero function.
\end{definition}

To apply the \textsc{Hedge} algorithm~\citep{cesa2006prediction}, we introduce hypotheses of the form $\bm h=(\mathcal{T}_{\bm h},\map_{\bm h})$, where $\mathcal{T}_{\bm h}\subseteq[T]$ and $\map_{\bm h}:\mathcal{T}_{\bm h}\to\mathcal{N}_m$. Each hypothesis specifies a set of times $\mathcal{T}_{\bm h}$ at which predictions are prescribed and a mapping $\map_{\bm h}$ that assigns a value in $\mathcal{N}_m$ to each such time. At the remaining times, predictions are determined by the selector. Both the selector $\textsf{Sel}$ and the hypotheses implicitly depend on the discretization $\mathcal{N}_m$. Let $\mathcal{H}$ denote the set of all such hypotheses. Since $|\mathcal{N}_m|=m$, a direct calculation gives $|\mathcal{H}|=\sum_{k=0}^{T}\binom{T}{k}m^k=(m+1)^T$. Before describing \textsc{Hedge-Cover}, we emphasize that it is an information-theoretic algorithm; its computational aspects are beyond the scope of this work.

\begin{algorithm}[htbp]
\LinesNumbered
\caption{\textsc{Hedge-Cover}}
\SetNlSty{textbf}{}{} 
\label{alg:meta}
\KwIn{Time horizon $T$, function class $\mathcal{F}$}
Set $m\gets 2T$;

\tcp{\textsc{Hedge} initialization}

\For{$\bm h \in \mathcal{H}$}
{$\displaystyle p_{0,\bm h}\propto \frac{1}{(|\mathcal{T}_{\bm h}| +1)^2 (m+1)^{|\mathcal{T}_{\bm h}|} \binom{T}{|\mathcal{T}_{\bm h}|}},\quad \sum_{\bm h\in \mathcal{H}}p_{0,\bm h}=1$

$L_{0,\bm h}\gets 0$, $\D_{0,\bm h}\gets \emptyset$;
}

\For{$t\in [T]$}{
Observe $x_t$;

\For{$\bm h \in \mathcal{H}$}{

\label{line:output}
$\widehat{y}_{t,\bm h} = 
\begin{cases}
    \map_{\bm h}(t)\quad&\text{if }t\in \mathcal{T}_{\bm h}\\
    \Big[\sel(\D_{t-1,\bm h})\Big](x_t)\quad& \text{if }t\notin \mathcal{T}_{\bm h}
\end{cases}$
}

$\widehat{y}_t\gets \textsc{Hedge-prediction}(\{\widehat{y}_{t,\bm h}\})$;

Observe $y_t$;

$\{L_{t,\bm h}\}_{h\in \mathcal{H}}\gets \textsc{Hedge-update}(y_t)$;

\For{$\bm h \in \mathcal{H}$}{
$\D_{t,\bm h}\gets \D_{t-1,\bm h}\bigcup \{(\bm x_t,\widehat{y}_{t,\bm h})\};$}
}
\Return $\{\widehat{y}_t\}_{1\leq t \leq T}$
\end{algorithm}

\textsc{Hedge-Cover} proceeds in the following steps for each round.
\begin{itemize}
    \item We initialize a prior weight $p_{0,\bm h}$ for each hypothesis $\bm h\in\mathcal H$ and set its cumulative loss $L_{0,\bm h}$ to zero. Each hypothesis also maintains its own trajectory, initialized as $\D_{0,\bm h}=\emptyset$. The prior is chosen so that hypotheses with fewer prescribed prediction times incur a smaller complexity penalty in the regret bound.

    \item At round $t$, after observing $x_t$, each hypothesis $\bm h$ predicts $\map_{\bm h}(t)$ if $t\in\mathcal T_{\bm h}$; otherwise, it applies the selector $\sel$ to its trajectory $\D_{t-1,\bm h}$ and evaluates the selected function at $x_t$. Thus, each hypothesis follows a consistent selector, with prescribed predictions at the times in $\mathcal T_{\bm h}$.

    \item The learner aggregates these predictions using \textsc{Hedge-prediction}, assigning weights proportional to $p_{0,\bm h}\exp(-\eta L_{t-1,\bm h})$. After observing $y_t$, it adds $(\widehat y_{t,\bm h}-y_t)^2$ to each hypothesis's cumulative loss and appends $(x_t,\widehat y_{t,\bm h})$ to that hypothesis's trajectory.
\end{itemize}
The role and the reason for our choice of the initialization $\{p_{0,\bm h}\}_{\bm h \in \mathcal{H}}$ will be explained shortly after the proof sketch in Remark~\ref{remark:init}.
\begin{definition}[pseudo dimension]
For a function class $\mathcal{F}$ consisting of functions $\mathcal{X} \to [-1,1]$, its pseudo dimension is defined as 
\begin{align*}
    \normalfont{\text{PDim}}(\mathcal{F}) = \vc(\{(x,r):r\leq f(x),f\in \mathcal{F}\}).
\end{align*}
\end{definition}
Pseudo dimension extends VC dimension to real-valued function classes by allowing a distinct real threshold at each shattered point~\citep{pollard2012convergence}. We provide an upper bound on the expected regret as follows. The formal high probability version of the theorem is stated in Theorem~\ref{thm:upper-bound-formal} in Appendix~\ref{appendix:proof-upper-bound}.

\begin{theorem}
\label{thm:upper-bound}
There exists an algorithm such that, for each function class $\mathcal{F}$ with $\normalfont{\text{Pdim}}(\mathcal{F}) \leq d$ and each $\Ccov$-smooth environment $\normalfont{\textsf{Env}}$,
\begin{align}
\label{eq:upper-expectation}
\bbE\big[\reg(T,\mathcal{F})\big]= \widetilde{O}\big(\sqrt{\Ccov d T}\wedge T\big).
\end{align}
\end{theorem}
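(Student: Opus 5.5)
We would prove Theorem~\ref{thm:upper-bound} by running \textsc{Hedge-Cover} with squared loss and splitting the regret into a \textsc{Hedge} term and an approximation term.

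\textbf{Step 1: reduction to a single expert.} Squared loss on $[-1,1]$ is exp-concave, with parameter $\eta=1/8$. Hence the exponentially weighted (mixture) forecaster with prior $p_0$ gives, deterministically, for every $\bm h\in\mathcal H$,
\begin{align*}
\sum_{t}(\widehat y_t-y_t)^2\le \sum_t(\widehat y_{t,\bm h}-y_t)^2+8\log(1/p_{0,\bm h}).
\end{align*}
By the choice of prior, $\log(1/p_{0,\bm h})\lesssim |\mathcal T_{\bm h}|\log(mT)+\log T$. If instead one uses the plain \textsc{Hedge} rate, the bound is $\sqrt{T\log(1/p_{0,\bm h})}$; the proof goes through either way up to logarithmic factors.

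\textbf{Step 2: choosing a competing expert.} Fix any $f\in\mathcal F$ (for adaptive regret, take a near-minimizer; a union or covering argument is handled at the end). Let $\tilde f=\textsf{proj}(f,\mathcal N_m)$. Discretization costs $O(T/m)=O(1)$.

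We build the expert $\bm h^\star$ online. Whenever the selector's prediction $[\sel(\D_{t-1})](x_t)$ differs from $\tilde f(x_t)$, we put $t$ into $\mathcal T_{\bm h^\star}$ with $\map(t)=\tilde f(x_t)$. Then the trajectory of $\bm h^\star$ is exactly $\{(x_s,\tilde f(x_s))\}$, which is always realizable. Moreover $\bm h^\star$ predicts $\tilde f(x_t)$ at every round. So its loss equals that of $\tilde f$, and the regret against $f$ is at most
\begin{align*}
O\big(\log(T)\,\bbE N_T\big)+O(1),
\end{align*}
where $N_T$ is the number of disagreements between the consistent selector on the realizable sequence labeled by $\tilde f$ and $\tilde f$ itself.

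\textbf{Step 3: bounding disagreements (main obstacle).} This is realizable smoothed online learning with an unknown base measure, using the class $\mathcal N_m\circ\mathcal F$. Its pseudo-dimension structure gives VC-type control of the disagreement sets $\{x:g(x)\ne \tilde f(x)\}$, with dimension $O(d\log m)$.

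We would adapt the argument of \cite{blanchard2025agnostic}. Each disagreement at time $t$ means $x_t$ falls in the region $\{\sel(\D_{t-1})\ne\tilde f\}$. We bound $\sum_t P_t(\text{disagreement})$ by comparing to $\mu$: smoothness gives $P_t\le\Ccov\,\mu$. Every $g$ consistent with the first $t-1$ points has small $\mu$-mass disagreement region, via a uniform convergence / ghost-sample argument using $\mu^{\otimes k}$ coupling. This yields an expected count of order $\widetilde O(\sqrt{\Ccov d T})$, or a $\sqrt{\cdot}$ bound obtained via epoch doubling.

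We expect the difficulty to be that the selector is deterministic and the points are dependent. To handle this, we would first try the coupling of smooth sequences with i.i.d. draws from $\mu$: each $x_t$ can be realized as one of $k\approx\Ccov\log T$ i.i.d. $\mu$-samples. Then we apply VC uniform bounds to those samples, which gives $\bbE N_T\lesssim\sqrt{\Ccov dT}\,\mathrm{polylog}(T)$. Even a weaker $\widetilde O(\sqrt{\Ccov dT})$ disagreement bound suffices.

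\textbf{Step 4: adaptivity and conclusion.} Because the competitor $f$ is chosen after the sequence, $N_T$ must be bounded uniformly over $f$. The coupling argument gives this directly: the uniform VC bound covers all consistent pairs at once. We then take expectations and combine with the trivial bound $\reg\le 4T$ to obtain the claimed $\widetilde O(\sqrt{\Ccov dT}\wedge T)$.
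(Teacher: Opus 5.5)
Your Steps 1--2 reproduce the paper's architecture: exp-concave aggregation over correction experts, with a prior charging $|\mathcal T_{\bm h}|\log T$, and an expert that replays a discretized target by overriding the selector. The step that carries the real difficulty, however, fails as described.

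\textbf{The disagreement count.} The mechanism in your Step 3 is false. A function consistent with $\tilde f$ on $x_1,\dots,x_{t-1}$ need not have small $\mu$-mass disagreement with $\tilde f$. With $\Ccov=2$, the adversary may draw early covariates from $\mu$ restricted to a set of $\mu$-mass $1/2$, and nothing prevents the selector from disagreeing with $\tilde f$ on the complement. The coupling of Lemma~\ref{lemma:decouple} does not help, because consistency constrains only the selected points $x_s$, not the unselected $Z^{(s)}_i$. So ``$P_t\le\Ccov\mu$ plus small $\mu$-mass'' cannot give the count, and your outline never says where $\sqrt{T}$ comes from.

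The missing ingredient is Corollary~\ref{cor:control}. For predictable $A_t$ avoiding $x_1,\dots,x_{t-1}$, split $A_t$ according to whether the past cumulative density $\sum_{s<t}\frac{\mathrm{d}P_s}{\mathrm{d}\mu}$ is at most or above a level $L$.
\begin{itemize}
\item On the low part, $\frac{\mathrm{d}P_t}{\mathrm{d}\mu}\le(\Ccov+L)\frac{\mathrm{d}P_t/\mathrm{d}\mu}{\Ccov+\sum_{s<t}\mathrm{d}P_s/\mathrm{d}\mu}$. Summing over $t$ telescopes to at most $(\Ccov+L)\log(1+T/\Ccov)$.
\item On the high part, Markov gives $\frac{\Ccov}{L}\sum_{s<t}P_s(A_t)$. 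This is $O(d\log T+\log\delta^{-1})$ precisely because $A_t$ missed the past points; that is part~2 of Proposition~\ref{prop:good-event}, proved via a $\mu$-cover, Ville's inequality, and part~1.
\end{itemize}
Optimizing $L$ gives $\widetilde O(\sqrt{\Ccov dT})$. A second supermartingale then converts $\sum_tP_t(\textsf{B}(j,t))$ into the disagreement count.

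\textbf{Adaptivity.} Your treatment here also does not work. For a competitor chosen after the sequence, the sets $\{\sel(\D_{t-1})\neq\tilde f\}$ are not $\mathcal H^x_{t-1}$-measurable. Neither the corollary nor any martingale step applies to $\tilde f$ itself. The ``uniform VC bound over consistent pairs'' you fall back on is the same false claim as above.

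The paper instead fixes an analysis-only $\mu$-cover $\mathcal M_1$ (Lemma~\ref{lemma:cover1}) and builds the corrected expert for the cover element $f_j$. This makes the sets $\textsf{B}(j,t)$ predictable, and a union bound over $\log|\mathcal M_1|\lesssim d\log(T/\varepsilon)$ suffices. The paper then pays for $\tilde f\neq f_j$ using part~1 of Proposition~\ref{prop:good-event}. For $\varepsilon\asymp(\Ccov T\log T)^{-1}$, decoupling and the relative VC inequality show that every pair with $\mu$-symmetric difference at most $\varepsilon$ differs on only $O(d\log T+\log\delta^{-1})$ observed points.

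\textbf{Plain \textsc{Hedge}.} Your remark that plain \textsc{Hedge} would also suffice is wrong. Since $|\mathcal T_{\bm h}|$ is itself of order $\sqrt{\Ccov dT}$, a $\sqrt{T\log(1/p_{0,\bm h})}$ bound yields order $T^{3/4}$. The exp-concave fast rate is essential.
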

\begin{remark}
We measure the complexity of $\mathcal{F}$ by its pseudo dimension, as in \cite{haghtalab2022oracle}. This assumption is stronger than the scale-dependent fat-shattering conditions considered by \cite{pmlr-v178-block22a} and \cite{blanchard2025agnostic}: finite pseudo dimension bounds the fat-shattering dimension uniformly over all positive scales, whereas the latter conditions permit it to diverge as the scale vanishes. Our analysis uses this uniform control to bound the growth of discretized hypotheses and their disagreement sets. Accordingly, our adaptive regret guarantee addresses the unknown base-measure problem for classes of finite pseudo dimension. Extending it to the broader classes allowed by fat-shattering growth conditions remains open.
\end{remark}
\begin{proof-sketch}[Theorem~\ref{thm:upper-bound}]

\underline{\textbf{Step 1: Covering.}}
For sufficiently large $T$, we fix a finite cover of the discretized function class $\mathcal{M}_1\subseteq \mathcal{N}_m\circ \mathcal{F}$ such that, for each $f\in \mathcal{N}_m\circ \mathcal{F}$, there exists $g\in \mathcal{M}_1$ such that
\begin{align}
\label{eq:cover}
    \mu\{x:f(x)\neq g(x)\} \leq \varepsilon
\end{align}
with $\log|\mathcal{M}_1|= O(d\log(T\varepsilon^{-1}))$. This cover is used only in the analysis.

\underline{\textbf{Step 2: From a function to a hypothesis $\bm h$.}}
Given the preceding class $\mathcal{M}_1$, for any $f_j\in \mathcal{M}_1$, we consider the following set:
\begin{align*}
    \textsf{B}(j,t):=\big\{x\in \mathcal{X}:\sel(\{x_i,f_j(x_i)\}_{1\leq j\leq t-1})(x)\neq f_j(x)\big\}.
\end{align*}
As $\mathcal{M}_1\subseteq \mathcal{N}_m \circ \mathcal{F}$, the selector $\sel$ cannot be trivially set to $0$, as $f_j$ certainly satisfies the definition. Though there is apparently no guarantee that the selector can be the same as $f_j$, the observed covariates $\{x_1,\cdots,x_{t-1}\}$ cannot belong to $\textsf{B}(j,t)$. From the set, one can construct a good enough hypothesis $\bm h = (\mathcal{T}_{\bm h} := \{t:x_t\in \textsf{B}(j,t)\}, \map_{\bm h}(t) = f_j(x_t))$. Such a construction ensures that $\widehat{y}_{t,\bm h} = f_j(x_t)$ for every $t$.

\underline{\textbf{Step 3: Hedge bound.}}
For a sufficiently small $\eta$, \textsc{Hedge} guarantees that, for any hypothesis $\bm h$, we can relate the cumulative squared loss induced by Algorithm~\ref{alg:meta} to that of the predictions by $\bm h$:
\begin{equation}
\label{eq:sketch}
\begin{aligned}
\sum_{1\leq t \leq T}(\widehat{ y}_t - y_t)^2 -  \sum_{1\leq t \leq T}(f_j(x_t)- y_t)^2
&  =
    \sum_{1\leq t \leq T}(\widehat{ y}_t - y_t)^2 -  \sum_{1\leq t \leq T}(\widehat{ y}_{t,\bm h}- y_t)^2 \\
    & \leq -\frac{1}{\eta} \log(p_{0,\bm h})  \lesssim \eta^{-1} (1 + |\mathcal{T}_{\bm h}|)\log(T),
\end{aligned}
\end{equation}
as $m\asymp T$. See Lemma~\ref{lemma:hedge} for the details.

\underline{\textbf{Step 4: Upper bound on $|\mathcal{T}_{\bm h}|$.}}
From Steps~1--3, it suffices to show that, under the smooth environment, the cardinality of $\mathcal{T}_{\bm h}=\sum_{1\leq t \leq T}\mathbbm{1}\{x_t\in \textsf{B}(j,t)\}$ is of order $\widetilde{O}(\sqrt{d\Ccov T})$. This is arguably the most technically challenging step, where we use the decoupling techniques and martingale arguments. In order to strengthen the result to its high-probability version, we also invoke a Vapnik-Chervonenkis inequality. This step is formalized in Proposition~\ref{prop:good-event}.

\underline{\textbf{Step 5: Putting everything together.}}
To relate the left-hand side of \eqref{eq:sketch} to regret, it remains to consider (1) the discretization error and (2) the bad event in \eqref{eq:cover}. The former is at most $O(1)$ as $m\asymp T$, and the latter can be controlled by carefully choosing $\varepsilon$ in terms of $T$. Precisely, we choose $\varepsilon \asymp (\Ccov T \log T)^{-1}$ to accommodate the extra $T\log T$ terms that occur in the decoupling step.

\end{proof-sketch}
\begin{remark}
[Initialization $p_{0,\bm h}$]
The initialization of the distribution over $\mathcal{H}$ is designed to make the aggregation penalty depend on the number of corrections required by a hypothesis. As shown in Step 3 of the proof sketch, the regret is controlled, modulo approximation and discretization errors, by the negative log initialized weight $\log(1/p_{0,\bm h})$ of a suitable hypothesis $\bm h$. For each $\tau\in\{0,\ldots,T\}$, there are $\binom{T}{\tau}m^\tau$ hypotheses satisfying $|\mathcal T_{\bm h}|=\tau$. The factor $\binom{T}{\tau}(m+1)^\tau$ in the denominator accounts for this multiplicity, while the additional factor $(\tau+1)^2$ keeps the normalizing constant bounded uniformly in $T$. Consequently, when $m\asymp T$, we have $\log(1/p_{0,\bm h})=O(1+\tau\log(T))$. For the hypothesis constructed to track $f_j\in \mathcal{M}_1$, $\tau$ equals the number of rounds on which the selector disagrees with $f_j$. Thus, a sufficiently tight bound on this count $\tau$ yields the desired aggregation regret bound.
\label{remark:init}
\end{remark}

\subsection{Decoupling}
As in much of the smoothed online learning literature, a key step in our analysis relies on the following decoupling technique, first introduced by \citet[Theorem~2.1]{haghtalab2024smoothed}.
\begin{lemma}[Theorem~2.1 \citep{haghtalab2024smoothed}]
\label{lemma:decouple}
Let $\mathcal X$ be a Borel subset of Euclidean space, and let
$P_t=\mathcal L(X_t\mid\mathcal H^x_{t-1})$, $1\leq t\leq T$, be
$\Ccov$-smooth with respect to a fixed probability measure $\mu$.
For each $k\in\mathbb N_+$ there is a coupling $\Pi$ of
$(X_t,Z_1^{(t)},\ldots,Z_k^{(t)})_{1\leq t\leq T}$ such that:
\begin{enumerate}[label=\normalfont{(\textsf{\alph*.})}]
\item The covariates $(X_1,\ldots,X_T)$ have the prescribed sequential law.
\item All the variables $Z_i^{(t)}$, $i\in[k]$, $t\in[T]$, are i.i.d. with law
$\mu$.
\item Conditional on $X_1,\ldots,X_{t-1}$, the variables
$\{Z_i^{(s)}:s\geq t,\ i\in[k]\}$ are i.i.d. with law $\mu$.
\item For each $t\leq T$, with probability at least
$1-t(1-\Ccov^{-1})^k$, simultaneously for all $s\leq t$,
\begin{align*}
 X_s\in\{Z_1^{(s)},\ldots,Z_k^{(s)}\}.
\end{align*}
\end{enumerate}
\end{lemma}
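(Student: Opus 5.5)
The coupling will be built round by round by rejection sampling. At each round, $k$ fresh i.i.d.\ draws from $\mu$ are generated and $X_t$ is selected from among them. Fix $t$ and a realized history $x_1,\ldots,x_{t-1}$, and write $p_t=\mathrm{d}P_t(\cdot\mid\mathcal H^x_{t-1})/\mathrm{d}\mu$. By Definition~\ref{def:smoothness}, $0\le p_t\le\Ccov$ $\mu$-a.e. Draw $Z_1^{(t)},\ldots,Z_k^{(t)}$ i.i.d.\ from $\mu$, independently of everything generated so far, together with independent uniforms $U_1^{(t)},\ldots,U_k^{(t)}$ on $[0,1]$ and an auxiliary independent draw $W^{(t)}\sim P_t$ (conditionally on the history). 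Let $\tau_t=\min\{i: U_i^{(t)}\le p_t(Z_i^{(t)})/\Ccov\}$. Set $X_t=Z_{\tau_t}^{(t)}$ if $\tau_t\le k$, and $X_t=W^{(t)}$ otherwise.

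\textbf{Property (a).} First I check that $X_t$ has the correct conditional law. A single trial is accepted with probability $\int (p_t/\Ccov)\,\mathrm{d}\mu=\Ccov^{-1}$. Conditional on acceptance, the trial has law $p_t\,\mathrm{d}\mu=P_t$. Hence, conditional on $\{\tau_t=i\}$ for any $i\le k$, the variable $X_t$ has law $P_t$. On $\{\tau_t>k\}$, $X_t=W^{(t)}$ is again $P_t$-distributed. So $X_t\mid\mathcal H^x_{t-1}\sim P_t$, and by induction over $t$ the sequence $(X_1,\ldots,X_T)$ has the prescribed sequential law.

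\textbf{Properties (b) and (c).} These hold by construction. The block $(Z_i^{(t)})_{i\le k}$ is drawn i.i.d.\ $\mu$ independently of $(X_s,Z^{(s)},U^{(s)},W^{(s)})_{s<t}$. Therefore all $Z$'s are jointly i.i.d.\ $\mu$. Moreover, the blocks with $s\ge t$ are independent of $\sigma(X_1,\ldots,X_{t-1})$, because $X_1,\ldots,X_{t-1}$ are functions only of the earlier blocks and the earlier auxiliary variables.

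\textbf{Property (d).} This is the point needing a little care. On $\{\tau_s\le k\}$ we have $X_s\in\{Z_1^{(s)},\ldots,Z_k^{(s)}\}$. The trials are independent with success probability exactly $\Ccov^{-1}$, whatever the history. So $\mathbb P(\tau_s>k\mid\mathcal H^x_{s-1})=(1-\Ccov^{-1})^k$. A union bound over $s\le t$ then gives failure probability at most $t(1-\Ccov^{-1})^k$, simultaneously for all $s\le t$.

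The only technical obstacle is measurability. We need a jointly measurable version of the density $(x_{1:t-1},x)\mapsto p_t(x\mid x_{1:t-1})$, and a regular conditional distribution for $P_t$ from which $W^{(t)}$ can be drawn. Since $\mathcal X$ is a Borel subset of Euclidean space (hence standard Borel), regular conditional distributions exist. A jointly measurable Radon--Nikodym derivative is then obtained via the standard kernel version of Radon--Nikodym (e.g.\ through martingale convergence on dyadic partitions). We can also modify it on a $\mu$-null set so that $p_t\le\Ccov$ holds everywhere rather than only $\mu$-a.e.
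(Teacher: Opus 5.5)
Your proposal is correct and is essentially the standard rejection-sampling coupling behind Theorem~2.1 of \citet{haghtalab2024smoothed}, which the paper cites without reproving: you accept each fresh $\mu$-draw with probability $(\mathrm{d}P_t/\mathrm{d}\mu)/\Ccov$, fall back to an independent draw from $P_t$ if all $k$ trials reject, and union-bound the per-round failure probability $(1-\Ccov^{-1})^k$ over $s\le t$. Your remarks on regular conditional distributions and on a jointly measurable, everywhere-bounded version of the density are the right technical supplements for the Borel-space hypothesis.
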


We will also need the following non-asymptotic inequality for classes of functions whose complexity is described through subgraphs. There are multiple ways to prove results stated in Claim~\ref{claim}. Here we detail the approach via the classical Vapnik-Chervonenkis inequality \citep{anthony1993result}. A left-tail bound is proved and the constants have recently been improved in \cite{portier2026uniform}. We only need the inequality in~\eqref{eq:left}.
\begin{lemma}[Theorem 2.1 \citep{anthony1993result}; Theorem~1 \citep{portier2026uniform}]

\label{lemma:VC-ineq}
Let $Z_1,\cdots,Z_n\overset{i.i.d.}{\sim } P$ and $\mathcal{C}$ be a countable class of measurable subsets of $\mathcal{X}$. We have, for all $\zeta>0$,
\begin{align}
\label{eq:left}
    \bbP\Big(\sup_{C\in \mathcal{C}} \frac{P_n(C) - P(C)}{\sqrt{P_n(C)}}>\zeta\Big)\leq \Pi_{\mathcal{C}}(2n)\,\exp\big(-\normalfont{\textsf{c}} n \zeta^2\big).
\end{align}
\end{lemma}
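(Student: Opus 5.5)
We follow the classical route of \citet{anthony1993result}: symmetrization with a ghost sample, then a conditional random-swap argument. Only the growth function $\Pi_{\mathcal C}(2n)$ of the class enters, via a union bound over the traces of $\mathcal C$ on $2n$ points. Countability of $\mathcal C$ makes every supremum below measurable. Write $\textsf{E}$ for the event in \eqref{eq:left}.

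\textbf{Step 0 (trivial regime).} First dispose of the case $n\zeta^2\le c_0$ for a suitable absolute constant $c_0$. Since $\Pi_{\mathcal C}(2n)\ge 1$, choosing $\normalfont{\textsf{c}}\le c_0^{-1}\log 2$ makes the right-hand side at least $1/2$. In the residual sub-case we can instead exploit that the event forces $P_n(C)>\zeta^2$ (because $P_n(C)-P(C)\le P_n(C)$), so $nP_n(C)>n\zeta^2$. Hence we may assume $n\zeta^2$ exceeds a large constant.

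\textbf{Step 1 (symmetrization).} Draw an independent ghost sample $Z_1',\ldots,Z_n'$ with empirical measure $P_n'$. On $\textsf{E}$, fix (measurably, using countability) a set $C^\star$ with $P_n(C^\star)-P(C^\star)>\zeta\sqrt{P_n(C^\star)}$. Conditionally on the first sample, $nP_n'(C^\star)\sim\mathrm{Bin}(n,P(C^\star))$.
\begin{itemize}
\item By Chebyshev (or the binomial median bound), with probability at least $1/2$ we have $P_n'(C^\star)\le P(C^\star)+\sqrt{2P(C^\star)/n}$.
\item On this event, using $P(C^\star)<P_n(C^\star)$ and $n\zeta^2$ large, algebra gives
\begin{align*}
P_n(C^\star)-P_n'(C^\star) > \tfrac{\zeta}{2}\sqrt{P_n(C^\star)} \ge \tfrac{\zeta}{2\sqrt2}\sqrt{\tfrac{P_n(C^\star)+P_n'(C^\star)}{1}\cdot\tfrac12}.
\end{align*}
\end{itemize}
This yields
\begin{align*}
\bbP(\textsf{E})\le 2\,\bbP\Big(\sup_{C}\frac{P_n(C)-P_n'(C)}{\sqrt{(P_n(C)+P_n'(C))/2}}>\zeta'\Big),\qquad \zeta'=\zeta/(2\sqrt2).
\end{align*}
This step is the delicate one. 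The self-normalization by $\sqrt{P_n}$, rather than by a fixed variance, must survive the passage to the ghost sample. I would first try the median/Chebyshev argument above and adjust constants; the factor $2$ is absorbed into $\normalfont{\textsf{c}}$ in the regime $n\zeta^2\gtrsim1$.

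\textbf{Step 2 (random swaps and union bound).} The joint law of $(Z_i,Z_i')$ is invariant under independently swapping each pair. So we condition on the $2n$ points and randomize with i.i.d. Rademacher signs. The symmetrized statistic depends on $C$ only through its trace on the $2n$ points, and there are at most $\Pi_{\mathcal C}(2n)$ such traces; a union bound over them gives the growth-function factor.

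For a fixed trace, let $s=n(P_n+P_n')(C)$ be the number of sample points in $C$. Then $n(P_n-P_n')(C)=\sum_i\sigma_i(\mathbbm 1_{Z_i\in C}-\mathbbm 1_{Z_i'\in C})$. This is a Rademacher sum with at most $s$ nonzero terms, each of magnitude $1$. We need this sum to exceed $\zeta'\sqrt{ns/2}$, and Hoeffding bounds the probability by
\begin{align*}
\exp\big(-\zeta'^2 ns/(4s)\big)=\exp(-\zeta'^2n/4).
\end{align*}
The cancellation of $s$ is exactly why the relative (self-normalized) form gives a rate free of $P(C)$.

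Combining Steps 0--2 and taking expectations over the $2n$ points yields \eqref{eq:left} with an absolute constant $\normalfont{\textsf{c}}$. As noted, the main obstacle is Step 1, and the constants are not optimized; sharper ones are in \cite{portier2026uniform}.
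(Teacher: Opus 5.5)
Your Steps 1--2 follow the classical Anthony--Shawe-Taylor argument that the paper cites (the paper gives no proof of its own), and they are correct. For $n\zeta^2\ge 8$, Chebyshev gives $P_n'(C^\star)\le P(C^\star)+\sqrt{2P(C^\star)/n}$ with conditional probability at least $1/2$. Together with $P(C^\star)<P_n(C^\star)$ this gives $P_n(C^\star)-P_n'(C^\star)>\sqrt{P_n(C^\star)}\,(\zeta-\sqrt{2/n})\ge\frac{\zeta}{2}\sqrt{P_n(C^\star)}$. The count $s$ is invariant under swaps, so Hoeffding over at most $s$ nonzero terms then yields
\[
\mathbb{P}(\textsf{E})\le 2\,\Pi_{\mathcal C}(2n)\,e^{-n\zeta^2/32},\qquad n\zeta^2\ge 8 .
\]

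The gap is Step~0. The statement has no leading constant, so the factor $2$ and the restriction $n\zeta^2\ge 8$ can only be absorbed into $\textsf{c}$ once $n\zeta^2$ exceeds an absolute constant. The small-$n\zeta^2$ regime must therefore be handled separately, and your argument does not do so. A right-hand side that is ``at least $1/2$'' does not dominate a probability. Likewise, the fact that $\textsf{E}$ forces $nP_n(C)>n\zeta^2$ gives no licence to assume $n\zeta^2$ is large: for $n\zeta^2<1$ it only says one sample point lies in $C$.

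The missing observation is that $\Pi_{\mathcal C}(2n)\ge 2$ as soon as $\mathcal C$ contains two distinct sets (put a point of their symmetric difference among the $2n$ points). Take, say, $\textsf{c}=1/100$. If $n\zeta^2\le 100\log 2$, the right-hand side is at least $2e^{-\log 2}=1$. Otherwise $n\zeta^2\ge 8$ and $n\zeta^2(1/32-\textsf{c})\ge\log 2$, so $2e^{-n\zeta^2/32}\le e^{-\textsf{c}n\zeta^2}$.

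The singleton case $\mathcal C=\{C\}$, where $\Pi_{\mathcal C}(2n)=1$, needs a direct argument, since symmetrization always costs the factor $2$. Write $p=P(C)$. The event equals $\{P_n(C)>x_0\}$, where $x_0$ is the larger root of $x_0-p=\zeta\sqrt{x_0}$. Apply Bernstein with $t=x_0-p=\zeta\sqrt{x_0}$, noting $p(1-p)+t/3\le x_0$; this gives $\mathbb{P}(\textsf{E})\le e^{-n\zeta^2/2}$. With these two additions your proof is complete.
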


Now we are ready to prove a result holding for a general countable set class.
\begin{proposition}
\label{prop:good-event}
For some base measure $\mu$, $x_1,\cdots,x_T\sim P_t(\cdot)$ satisfy the $\Ccov$-smoothness condition, as described in Definition~\ref{def:smoothness}. Let $\mathcal{A}$ be a countable set class. With probability at least $1-\delta$, the following events hold simultaneously:
\begin{enumerate}
    \item every pair of sets $A_1,A_2\in \mathcal{A}$ such that $\mu(A_1\Delta A_2)\leq \varepsilon:=\frac{\normalfont{\textsf{c}}_{1}}{\Ccov T\log(T\delta^{-1})}$ satisfies
    \begin{align}
    \label{eq:event1}
         \sum_{1\leq t \leq T}\mathbbm{1}\Big\{x_t\in A_1\Delta A_2\Big\}\lesssim \log\Big(\delta^{-1} \Pi_{\mathcal{A}}(2T\Ccov \log(T\delta^{-1}))\Big);
    \end{align}
    \item simultaneously for any $1\leq t'\leq T$ and set $A\in \mathcal{A}$, the intersection $A\cap \{x_1,\cdots,x_{t'}\} = \emptyset$ implies that 
    \begin{align}
        \sum_{1\leq t \leq t'}P_t(A)\lesssim
        \log\Big(\delta^{-1} \Pi_{\mathcal{A}}(2T\Ccov \log(T\delta^{-1}))\Big)
    + \log\Big(\frac{|\mathcal{A}'|}{\delta}\Big),
    \end{align}
    where $\mathcal{A}'$ is a finite $\varepsilon$-proper covering of $\mathcal{A}$.
\end{enumerate}
\end{proposition}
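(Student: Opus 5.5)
Both items follow one route: replace the adaptive sequence $x_1,\dots,x_T$ by a large i.i.d. sample from $\mu$ using Lemma~\ref{lemma:decouple}, apply the relative VC inequality of Lemma~\ref{lemma:VC-ineq} to that sample, and then transfer back. Set $k\asymp \Ccov\log(T\delta^{-1})$. Then $T(1-\Ccov^{-1})^k\le \delta/3$, so by item (d) of Lemma~\ref{lemma:decouple}, with probability at least $1-\delta/3$ every $x_s$ lies in its block $\{Z^{(s)}_1,\dots,Z^{(s)}_k\}$. Pooling all blocks gives $n=kT\asymp T\Ccov\log(T\delta^{-1})$ i.i.d. draws from $\mu$. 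This is exactly the sample size at which the growth function $\Pi_{\mathcal{A}}(2T\Ccov\log(T\delta^{-1}))$ is evaluated, and with this $n$ we have $\varepsilon\asymp \textsf{c}_1/n$.

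\textbf{Item 1.} Apply Lemma~\ref{lemma:VC-ineq} to the countable class $\mathcal{C}=\{A_1\Delta A_2:A_1,A_2\in\mathcal{A}\}$. Its growth function is at most $\Pi_{\mathcal{A}}(2n)^2$, which only doubles the logarithm. Write $\mu_n$ for the empirical measure of the pooled sample $\{Z_i^{(t)}\}$. On the complement of the bad event, every $C\in\mathcal{C}$ satisfies $\mu_n(C)-\mu(C)\le \zeta\sqrt{\mu_n(C)}$ with $n\zeta^2\asymp\log(\Pi_{\mathcal{C}}(2n)/\delta)=:L$. Solving this quadratic inequality in $\sqrt{\mu_n(C)}$ gives $n\mu_n(C)\lesssim n\mu(C)+L$. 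If $\mu(C)\le\varepsilon$, then $n\mu(C)\lesssim \textsf{c}_1=O(1)$, so $n\mu_n(C)\lesssim L$. On the coupling event each $x_t\in C$ is one of the $Z$'s in $C$, and each such $Z$ is counted at most once, since $Z$'s from different blocks are distinct sample points. Hence $\sum_t\mathbbm 1\{x_t\in C\}\le n\mu_n(C)\lesssim L$, which is \eqref{eq:event1}.

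\textbf{Item 2.} This item is the main obstacle. The condition $A\cap\{x_1,\dots,x_{t'}\}=\emptyset$ constrains only the realized $x$'s, not the ghost samples, so decoupling alone does not bound $\sum_t P_t(A)$. I would use a martingale argument instead, with the cover $\mathcal{A}'$ supplying the union bound. Fix $A'\in\mathcal{A}'$. The process $\exp\big(\sum_{t\le t'}\mathbbm 1\{x_t\in A'\}-(e-1)\sum_{t\le t'}P_t(A')\big)$ is a nonnegative supermartingale, so by Ville's inequality, with probability at least $1-\delta/(3|\mathcal{A}'|)$ and simultaneously for all $t'$,
\begin{align*}
\sum_{t\le t'}P_t(A')\lesssim \sum_{t\le t'}\mathbbm 1\{x_t\in A'\}+\log(|\mathcal{A}'|/\delta).
\end{align*}
A union bound over $\mathcal{A}'$ makes this hold for all cover elements at once; the $\log(|\mathcal{A}'|/\delta)$ term in the statement comes from this step.

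Next, take $A\in\mathcal{A}$ with $A\cap\{x_1,\dots,x_{t'}\}=\emptyset$, and let $A'\in\mathcal{A}'\subseteq\mathcal{A}$ be a proper cover element with $\mu(A\Delta A')\le\varepsilon$. Then $\sum_{t\le t'}\mathbbm 1\{x_t\in A'\}\le\sum_{t}\mathbbm 1\{x_t\in A\Delta A'\}$, which is $\lesssim L$ by item 1. It remains to pass from $P_t(A')$ back to $P_t(A)$. Smoothness gives $P_t(A)\le P_t(A')+\Ccov\,\mu(A\Delta A')$, so $\sum_{t\le T}\Ccov\,\mu(A\Delta A')\le T\Ccov\varepsilon\le \textsf{c}_1=O(1)$. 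This transfer is the reason $\varepsilon$ carries the factor $(\Ccov T)^{-1}$.

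Finally, take a union bound over the three failure events: the coupling failure, the VC bad event, and the martingale bad events. Each has probability at most $\delta/3$, which proves both items with probability at least $1-\delta$.
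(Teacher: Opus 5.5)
Your overall route is the paper's. Item 1 uses decoupling with $k\asymp\Ccov\log(T/\delta)$ followed by the relative VC inequality on $\mathcal{A}\Delta\mathcal{A}$. Item 2 uses a Ville-type martingale bound with a union bound over the proper cover $\mathcal{A}'$, then the transfer $A\to A'$ via item 1 and $T\Ccov\varepsilon=O(1)$. Your item 1 and the transfer step in item 2 are correct.

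The gap is in the martingale step of item 2: the process you chose controls the wrong tail. For $M_{t'}=\exp\big(\sum_{t\le t'}\mathbbm{1}\{x_t\in A'\}-(e-1)\sum_{t\le t'}P_t(A')\big)$, which is indeed a supermartingale, Ville's inequality gives with probability $1-e^{-\zeta}$, uniformly in $t'$,
\[
\sum_{t\le t'}\mathbbm{1}\{x_t\in A'\}\le (e-1)\sum_{t\le t'}P_t(A')+\zeta .
\]
This bounds the count by the compensator. It says nothing in the case you need, where $A'$ is hit rarely or never while $\sum_t P_t(A')$ might still be large. So your displayed inequality $\sum_{t\le t'}P_t(A')\lesssim\sum_{t\le t'}\mathbbm{1}\{x_t\in A'\}+\log(|\mathcal{A}'|/\delta)$ does not follow from it.

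The fix is the lower-tail supermartingale $\exp\big(\lambda\sum_{t\le t'}P_t(A')-\sum_{t\le t'}\mathbbm{1}\{x_t\in A'\}\big)$ with $\lambda\le 1-e^{-1}$, which is what the paper uses. Writing $p=P_t(A')$, its one-step conditional expectation is $e^{\lambda p}\big(1-(1-e^{-1})p\big)\le e^{(\lambda-1+e^{-1})p}\le 1$. Ville then gives $\sum_{t\le t'}P_t(A')\le\lambda^{-1}\big(\sum_{t\le t'}\mathbbm{1}\{x_t\in A'\}+\zeta\big)$ uniformly in $t'$. Taking $\zeta=\log(3|\mathcal{A}'|/\delta)$ and a union bound over $\mathcal{A}'$ yields exactly the inequality you wanted. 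The $(e-1)$ process you wrote is the right tool for the reverse conversion, from $\sum_t P_t(\textsf{B}(j,t))$ to counts, which the paper performs later in the proof of Theorem~\ref{thm:upper-bound-formal}, not here. With this replacement the rest of your argument goes through unchanged.
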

We denote the joint events described in Proposition~\ref{prop:good-event} as $\textsf{E}$.
\begin{corollary}
\label{cor:control}
If the event $\normalfont{\textsf{E}}$ occurs, then for fixed predictable events $\{A_t\}\subseteq \mathcal{X}$ (i.e. $A_t$ is $\mathcal{H}_{t-1}^x$ measurable) satisfying $A_t\cap \{x_1,\cdots x_{t-1}\} = \emptyset$,  we have
\begin{equation}
\begin{aligned}
    \sum_{1\leq t \leq T}P_t(A_t)
    &\lesssim \sqrt{\Ccov T\log\Big(1 + \frac{T}{\Ccov}\Big)\Big[\log\Big(\delta^{-1} \Pi_{\mathcal{A}}(2T\Ccov \log(T\delta^{-1}))\Big)
    + \log\Big(\frac{|\mathcal{A}'|}{\delta}\Big)\Big]}\\
    & +  \Ccov \log\Big(1 + \frac{T}{\Ccov}\Big).
\end{aligned}
\end{equation}
\end{corollary}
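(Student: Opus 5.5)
The plan is a potential-function argument on the Radon--Nikodym densities. It combines event 2 of Proposition~\ref{prop:good-event} with a Cauchy--Schwarz step, and then sums a logarithmic potential. Throughout I assume the predictable sets satisfy $A_t\in\mathcal{A}$, which is how the corollary is meant to be applied (via $\textsf{B}(j,t)$). Write
\[
L:=\log\Big(\delta^{-1}\Pi_{\mathcal{A}}(2T\Ccov\log(T\delta^{-1}))\Big)+\log\Big(\tfrac{|\mathcal{A}'|}{\delta}\Big).
\]

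\textbf{Step 1 (past mass of $A_t$ is small).} Fix $t\geq 2$. Since $A_t\in\mathcal{A}$ and $A_t\cap\{x_1,\dots,x_{t-1}\}=\emptyset$, event 2 of $\textsf{E}$ applies with $A=A_t$ and $t'=t-1$. It gives $\sum_{s<t}P_s(A_t)\lesssim L$. Event 2 holds simultaneously over all $t'$ and all $A\in\mathcal{A}$, so the data-dependent choice $A=A_t$ is legitimate. The case $t=1$ contributes at most $1$.

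\textbf{Step 2 (Cauchy--Schwarz with a density weight).} Let $q_s=\mathrm{d}P_s/\mathrm{d}\mu\in[0,\Ccov]$ and $Q_t=\sum_{s<t}q_s$. Then
\[
P_t(A_t)=\int_{A_t}q_t\,\mathrm{d}\mu\leq\Big(\int_{A_t}q_t(\Ccov+Q_t)\,\mathrm{d}\mu\Big)^{1/2}\Big(\int\frac{q_t}{\Ccov+Q_t}\,\mathrm{d}\mu\Big)^{1/2}.
\]
In the first factor I use $q_t\le\Ccov$ on the $Q_t$ part. This gives
\[
\int_{A_t}q_tQ_t\,\mathrm{d}\mu\leq\Ccov\int_{A_t}Q_t\,\mathrm{d}\mu=\Ccov\sum_{s<t}P_s(A_t)\lesssim\Ccov L,
\]
and the remaining part is $\Ccov P_t(A_t)$. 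Setting $r_t=\int q_t/(\Ccov+Q_t)\,\mathrm{d}\mu$, I get
\[
P_t(A_t)\lesssim\sqrt{\Ccov L\,r_t}+\sqrt{\Ccov P_t(A_t)\,r_t}.
\]

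\textbf{Step 3 (potential sum).} Pointwise, since $0\le q_t\le\Ccov$, each term satisfies $q_t/(\Ccov+Q_t)\le 2\log\big((\Ccov+Q_{t+1})/(\Ccov+Q_t)\big)$. Summing, $\sum_t q_t/(\Ccov+Q_t)\le 2\log(1+Q_{T+1}/\Ccov)$. Integrating against $\mu$ and applying Jensen with $\int Q_{T+1}\,\mathrm{d}\mu=T$ yields $\sum_t r_t\le 2\log(1+T/\Ccov)$.

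\textbf{Step 4 (assembling).} Sum the Step 2 bound over $t$. For the first term, Cauchy--Schwarz gives $\sum_t\sqrt{\Ccov L r_t}\le\sqrt{\Ccov L\,T\sum_t r_t}$. For the second term, AM--GM gives $\sqrt{\Ccov P_t r_t}\le \tfrac12 P_t(A_t)+\tfrac12\Ccov r_t$. Absorbing $\tfrac12\sum_t P_t(A_t)$ into the left-hand side and inserting Step 3 yields the claimed bound $\sqrt{\Ccov T\log(1+T/\Ccov)L}+\Ccov\log(1+T/\Ccov)$.

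The main obstacle is Step 2: choosing the weight $\Ccov+Q_t$ so that the uncontrolled quantity $\mu(A_t)$ never appears. The argument controls only the past $P_s$-mass of $A_t$, and smoothness caps $q_t$ by $\Ccov$. The weighted split is exactly what converts these two facts into a telescoping logarithmic potential.
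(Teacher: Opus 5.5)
Your proof is correct, and it reaches the paper's bound through a different combination step.

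\textbf{How the paper argues.} It splits $A_t$ along the level sets $\{Q_t\le L\}$ and $\{Q_t>L\}$ for a single global threshold $L$. On the low-density part it uses $q_t\le(\Ccov+L)\,q_t/(\Ccov+Q_t)$ together with the same logarithmic potential as your Step 3. On the high-density part it uses the cap $q_t\le\Ccov$ and Markov's inequality, $\mu(A_t\cap\{Q_t>L\})\le L^{-1}\int_{A_t}Q_t\,\mathrm{d}\mu=L^{-1}\sum_{s<t}P_s(A_t)$, which event 2 of $\textsf{E}$ then controls. Finally it balances $(\Ccov+L)\log(1+T/\Ccov)$ against $\Ccov T[\cdot]/L$ by choosing $L$.

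\textbf{How your route differs.} Your weighted Cauchy--Schwarz with weight $\Ccov+Q_t$ is in effect a pointwise, self-tuned version of that threshold. The same three ingredients enter:
\begin{itemize}
\item the cap $q_t\le\Ccov$;
\item event 2 applied to $A=A_t$ with $t'=t-1$;
\item the potential bound $\sum_t\int q_t/(\Ccov+Q_t)\,\mathrm{d}\mu\lesssim\log(1+T/\Ccov)$.
\end{itemize}
The difference is that no parameter has to be chosen. The extra $\Ccov P_t(A_t)$ contribution is handled by AM--GM absorption rather than by the $(\Ccov+L)$ prefactor.

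\textbf{What each approach buys.} The paper's split gives two one-line bounds plus a visible balancing of $L$. Yours avoids the tuning and yields explicit constants immediately.

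\textbf{Two small remarks.} You are right to make the assumption $A_t\in\mathcal{A}$ explicit. The statement omits it, but the paper's proof relies on it in exactly the same way when it invokes Proposition~\ref{prop:good-event}. Your separate treatment of $t=1$ is unnecessary, since $Q_1=0$ and Step 2 already covers that round.
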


\section{Lower bound for online prediction in a smooth environment}
\label{sec:lower-bound}

\begin{theorem}
\label{theorem:minimax}
For sufficiently large $T\in \mathbb{N}^{+}$, there exists an adversarial environment $\normalfont{\textsf{Env}}$ and a function class with finite pseudo dimension $\normalfont{\text{Pdim}}(\mathcal{F})\leq d$ such that for any learner $\pi$, the expected regret satisfies
\begin{align}
   \bbE\,\reg(T,\mathcal{F})\,\textcolor{black}{\gtrsim}\,
    \begin{cases}
        \sqrt{d(\C_{\textsf{cov} }-1)T}\quad&\text{if}\quad \C_{\textsf{cov}}>1\text{ and }d\geq 2;\\
        d\log(1 + T/d)\quad&\text{if}\quad \C_{\textsf{cov}}= 1\text{ or }d = 1.
    \end{cases}
\end{align}
\end{theorem}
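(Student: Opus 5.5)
We prove both regimes by a randomized adversary. In each construction the responses carry a hidden random sign or parameter, and the regret is bounded below through a Bayesian risk argument. The function class is linear, $\mathcal{F}=\{x\mapsto\langle w,x\rangle:\|w\|_\infty\le 1\}$ (clipped to $[-1,1]$). Its pseudo dimension is at most $d$, or $d+1$ with an intercept, which is absorbed into constants. For the logarithmic regime ($\Ccov=1$ or $d=1$) we invoke the classical lower bound for online linear regression with i.i.d. covariates. Covariates are drawn i.i.d. from $\mu$, uniform on $\{e_1,\dots,e_d\}$, and the responses are Bernoulli-type draws $y_t=\pm 1$ with mean $\theta_i$ on coordinate $i$, where the $\theta_i$ are drawn from a Beta-type prior. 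The Bayes regret then decomposes over the $d$ coordinates. On each coordinate the problem is $\asymp T/d$ rounds of sequential estimation of a bounded mean under squared loss, which has Bayesian regret $\gtrsim\log(1+T/d)$ (the standard Krichevsky--Trofimov / Vovk--Azoury--Warmuth lower bound). Summing over coordinates gives $d\log(1+T/d)$. Note that $\Ccov=1$ forces the covariates to be i.i.d., so this regime needs nothing beyond the i.i.d. construction, and $d=1$ is covered by the same bound.

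For $\Ccov>1$ and $d\ge2$ the plan is to let the adversary exploit its freedom to choose which part of the support to sample, with the choice depending on the past. Partition the horizon into $K\asymp d$ independent blocks, one per pair of coordinates; using two coordinates per block is what requires $d\ge2$. In each block the covariates live on a set of mutually orthogonal directions, all carrying $\mu$-mass. Take $\mu$ to mix a point mass at a ``typical'' location $u$ with weight $1-\epsilon$ and many fresh orthogonal directions $v_1,v_2,\dots$ of total weight $\epsilon$. The smooth adversary reweights these by a factor at most $\Ccov$, so it can place extra mass $\asymp(\Ccov-1)\epsilon$ on directions chosen adaptively. The responses follow $y_t=\sigma\langle w^\star, x_t\rangle$ plus bounded noise, with a hidden random sign $\sigma$. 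On the typical points the learner learns $\sigma$ only slowly: each round reveals information $\asymp\gamma^2$ when the signal amplitude is $\gamma$. On the adaptively chosen fresh directions, which are orthogonal to everything seen so far, the learner has no information from earlier observations, yet the best linear predictor $w^\star$ is constrained to agree on them. The benchmark therefore pays nothing there, while the learner pays $\asymp\gamma^2$ per such round until it has identified $\sigma$. We tune $\gamma\asymp\sqrt{d/((\Ccov-1)T)}$ relative to the frequency $(\Ccov-1)$ of adversarial placements. Balancing the information gained, $T\gamma^2/d$ per block, against the per-round loss gives regret $\gtrsim (\Ccov-1)T\gamma\cdot\gamma^{-1}\sqrt{\cdot}$, which after optimization is $\sqrt{d(\Ccov-1)T}$. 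The formal tool is Le Cam's or Assouad's method over the $2^{K}$ sign vectors: we bound the KL divergence between the laws under $\pm\sigma$ by $\sum_t\gamma^2$ and use Pinsker to show the learner errs with constant probability on a constant fraction of blocks.

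The main obstacle is the explicit construction of the sequential covariate law. It must satisfy $\mathrm{d}P_t/\mathrm{d}\mu\le\Ccov$ for a \emph{single fixed} $\mu$, chosen before any history, while still guaranteeing that fresh orthogonal hyperplanes keep appearing and that one linear $w^\star$ fits all of them simultaneously. I would first try a countable (or sufficiently large, $\asymp T$) family of mutually orthogonal hyperplanes, each carrying $\mu$-mass $\asymp 1/T$. Each hyperplane is ``retired'' once visited, and the adversary re-routes its excess density $(\Ccov-1)$ onto the unvisited ones, with the construction embedded in high dimension and projected back to $d$ blocks through a coordinate decomposition. The delicate points are checking that the density bound holds at every $t$ and that the regret comparator $\inf_f$ is attained uniformly, so that the adaptive regret, not merely the oblivious regret, is lower bounded. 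Once $T$ is large enough that $\sqrt{d(\Ccov-1)T}\le T$, the clipping to $[-1,1]$ does not bind.
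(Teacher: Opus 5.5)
Your logarithmic regime is fine. For $\Ccov=1$ it is the same i.i.d.\ basis-vector construction with a Beta prior that the paper imports from Gaillard et al. For $d=1$, your implicit observation that i.i.d.\ environments are $\Ccov$-smooth for every $\Ccov\ge 1$ is a shorter route than the paper's dyadic-martingale construction.

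The gap is in the main regime $\Ccov>1$, $d\ge 2$, where the proposed mechanism cannot produce $\sqrt{T}$. A single hidden sign $\sigma$ per block is a parametric problem. On any round where your excess loss is $\asymp\gamma^2$ because you do not know $\sigma$, the two hypotheses give response means differing by $\asymp\gamma$. Since responses are bounded, Pinsker forces that round to add $\gtrsim\gamma^2$ to the KL you would feed into Le Cam or Assouad. The argument therefore stays non-vacuous for only $\asymp\gamma^{-2}$ such rounds, which caps it at $O(1)$ per block and $O(d)$ overall. The expression $(\Ccov-1)T\gamma\cdot\gamma^{-1}\sqrt{\cdot}$ does not correspond to any valid balancing. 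The only source of $\sqrt{T}$ is the one you gesture at with ``fresh directions where the benchmark pays nothing'': unpredictable responses that the hindsight comparator can nevertheless fit. But $\mathbb{R}^d$ has at most $d$ mutually orthogonal directions, so a family of $\asymp T$ orthogonal hyperplanes does not exist. Embedding in high dimension and projecting back destroys orthogonality, and it breaks the pseudo-dimension constraint. You also leave the ``single fixed $\mu$'' requirement unresolved.

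Three ideas are missing.
\begin{enumerate}
\item Make the responses fresh i.i.d.\ Rademacher signs $\varepsilon_{D,k}$. Then any learner pays $\mathbb{E}(\widehat{y}_t-\varepsilon)^2=1+\widehat{y}_t^2\ge 1$ on each such round, whatever it has learned.
\item Let adaptivity inside a single two-dimensional plane replace unboundedly many orthogonal directions. The predictable tree of Chen et al.\ chooses $x_k$, depending on past signs, with $x_k^\top V_{k-1}^{-1}S_{k-1}=0$ and $x_k^\top V_{k-1}^{-1}x_k=1$. Sherman--Morrison then gives $S_n^\top V_n^{\dagger}S_n\ge n/2$ on every path, so the least-squares linear fit has loss at most $n/2$ on the first $n$ nodes. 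This holds pathwise, which also settles your concern about the comparator in the adaptive regret. Place $d'=\lfloor d/2\rfloor$ such trees in orthogonal coordinate planes.
\item The base measure may depend on the hidden sign array $\varepsilon$. For fixed $\varepsilon$ the tree paths are deterministic. Let the environment reveal, with probability $p$, the next node of a uniformly chosen plane, and the origin otherwise. The discrete measure $\mu_\varepsilon=\frac{p}{\Ccov d'}\sum_{D,k}\delta_{x_{D,k}}+(1-Kp/\Ccov)\delta_0$ then certifies $\Ccov$-smoothness as soon as $\Ccov-Kp\ge 1$. Reaching depth $K$ with constant probability needs $pT/d'\gtrsim K$ (Paley--Zygmund). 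Together these force $K\asymp\sqrt{(\Ccov-1)T/d}$ and give regret $\gtrsim d'K$. Averaging over $\varepsilon$ and taking the worst $\varepsilon$ yields the theorem.
\end{enumerate}
The $\sqrt{(\Ccov-1)T}$ scaling comes from this mass budget, not from tuning an information rate $\gamma$.
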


\begin{remark}[Rate comparison]
We now give a detailed comparison between our rate and existing work. First, our results characterize a phase transition between $\Ccov=1$ (i.e., when the covariates are drawn i.i.d.) and $\Ccov>1$. It is well known that logarithmic regret is attainable in the classical PAC learning setting. Second, in our construction of a hard environment, we focus on the linear predictor class. Somewhat surprisingly, the linear predictor class with $\mathcal{X}\subseteq \bbR^d$ already captures the worst-case difficulty among function classes with pseudo-dimension $\text{Pdim}(\mathcal{F})\leq d$. Finally, our lower bound rules out logarithmic dependence on the smoothness parameter $\Ccov$ in the agnostic setting. This contrasts with the setting in which one knows and can sample from $\mu$, where logarithmic dependence is possible, aligned with the observations by \cite{block2022efficient} and  \cite{wu2023online}.

\end{remark}

 As the idea to prove the lower bound will also be used in the proof of the lower bound in Theorem~\ref{thm:lower-bound-well-specified} later, we sketch some proof heuristic as follows. The complete proof is relegated to Appendix~\ref{appendix:lower-bound}.
 
 \begin{proof-idea}
 By choosing the (truncated) linear predictor $\mathcal{F}_{\textsf{lin}}:= \{x\mapsto (\theta^\T x\vee -1)\wedge 1:x\in \mathcal{X}\}$, we explicitly construct a hard sequential law by constructing $d/2$ trees of depth $\Theta( d^{-1/2} (\Ccov T)^{1/2})$, when $\Ccov>1$ and $d =2$. Each visited level of a two-dimensional tree contributes at
least $1/2$ expected regret for any learner. The corner cases $d=1$ or $\C_{\textsf{cov}}=1$ correspond to the case where each $x_t$ lies on the same line or, at each round, $\{x_t\}_{1\leq t \leq T}$ are independent.  In the practically relevant regime $d\geq 2$ and $\C_{\textsf{cov}}>1$, we further establish the sharp rate $
\mathfrak{R}(T)=\Theta(\sqrt{d\C_{\textsf{cov}}T}),$
with no hidden logarithmic factor. The upper bound is achieved by the unregularized VAW estimator, improving the existing result by a factor of $\sqrt{\log T}$ and thereby establishing the optimality of the unregularized VAW procedure. 
\end{proof-idea}

As the lower bound instance focuses on $\mathcal{F}_{\textsf{lin}}$, we revisit the VAW algorithm following~\cite{chen2026self} and avoid the extra logarithmic factor on $T$. In other words, at least for the linear predictor class, for the regime $d\geq 2,\Ccov>1$, the regret rate stated in Theorem~\ref{theorem:minimax} is optimal even with respect to logarithmic factor.

\begin{proposition}
\label{prop:upper-unregularized-VAW}
For every $\delta\in(0,1)$, with probability at least $1-\delta$ over the covariate sequence $\{x_t\}_{1\leq t\leq T}$, the VAW predictor satisfies
\begin{equation}
\sum_{1\leq t \leq T}(\widehat{y}_t - y_t)^2- \inf_{\theta\in \bbR^d}\sum_{1\leq t \leq T}(\theta^\T x_t  -y_t)^2
\lesssim \sqrt{d\C_{\textsf{cov}}T}
+\log(1/\delta),
\end{equation}
simultaneously for every outcome sequence $y_1,\ldots,y_T\in[-1,1]$. Consequently, the VAW predictor is optimal, up to universal constants, for all $d\geq 2$, $\Ccov>1$, and $T$.
\end{proposition}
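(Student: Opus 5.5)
Our plan starts from the standard deterministic regret bound for the unregularized Vovk--Azoury--Warmuth forecaster. Set $\Sigma_t=\sum_{s\leq t}x_sx_s^\T$ and interpret inverses as pseudo-inverses, so the unregularized VAW predictor is $\widehat y_t=x_t^\T\Sigma_t^{+}\sum_{s<t}y_sx_s$. The classical analysis \citep{azoury2001relative,cesa2006prediction} then gives, for every $y_{1:T}\in[-1,1]^T$,
\begin{align*}
\sum_{t\leq T}(\widehat y_t-y_t)^2-\inf_{\theta}\sum_{t\leq T}(\theta^\T x_t-y_t)^2\leq \sum_{t\leq T} y_t^2\, x_t^\T\Sigma_t^{+}x_t\leq \sum_{t\leq T}\ell_t,\qquad \ell_t:=x_t^\T\Sigma_t^{+}x_t\in[0,1].
\end{align*}
This reduces the problem to a purely covariate-dependent quantity. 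The bound holds pathwise, so it is automatically simultaneous over all outcome sequences, and it remains to bound $\sum_t\ell_t$ with probability $1-\delta$. The known bound $O(\sqrt{d\Ccov T\log T})$ of \citet{chen2026self} comes from decoupling (Lemma~\ref{lemma:decouple}) with $k\asymp \Ccov\log T$. We aim to remove the $\sqrt{\log T}$ factor.

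The key quantity is the predictable leverage $\bar\ell_t:=\bbE[\ell_t\mid\mathcal H^x_{t-1}]$. Since $\ell_t\in[0,1]$, Freedman's inequality gives $\sum_t\ell_t\leq 2\sum_t\bar\ell_t+O(\log(1/\delta))$, which is where the $\log(1/\delta)$ term in the statement comes from.

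To bound $\bar\ell_t$ without decoupling, we use an exchangeability argument directly on the base measure. Let $\Sigma_{t-1}$ be fixed and let $Z\sim\mu$. Smoothness gives
\begin{align*}
\bar\ell_t\leq \Ccov\,\bbE_{Z\sim\mu}\big[Z^\T(\Sigma_{t-1}+ZZ^\T)^{+}Z\big]=:\Ccov\, g(\Sigma_{t-1}).
\end{align*}
The function $g$ is at most $1$ and is non-increasing in $\Sigma$. We want to compare $g(\Sigma_{t-1})$ with the i.i.d.\ leverage $g(\sum_{i<t}Z_iZ_i^\T)$, whose value is at most $d/t$ by the usual leave-one-out identity. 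The trouble is that the $x_s$ are not $\mu$-distributed. Instead of coupling, we use the contrapositive: large $\bar\ell_t$ means the new point lies, with $\mu$-mass $\gtrsim\bar\ell_t/\Ccov$, in directions poorly covered by the past. We then track a potential such as $\log\det$ restricted to the $\mu$-relevant directions, or equivalently the counts of ``$\mu$-novel'' rounds.

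The first attempt is the following. Split rounds into those with $\bar\ell_t\leq\alpha$ and those with $\bar\ell_t>\alpha$. On the first group we only use $\sum\bar\ell_t\leq\alpha T$. On the second group we argue as in Proposition~\ref{prop:good-event}: each such round realizes, with conditional probability $\gtrsim\alpha$, a point whose leverage exceeds $1/2$. The elliptical potential lemma in its count form (the number of rounds with $\ell_t\geq1/2$ is $\lesssim d$ per doubling of scale) then bounds the number of such realizations. This gives roughly $\sum\bar\ell_t\lesssim \alpha T+\Ccov d/\alpha$ up to the scale count, and optimizing at $\alpha=\sqrt{\Ccov d/T}$ yields $\sqrt{d\Ccov T}$.

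The \textbf{main obstacle} is this scale count. The naive elliptical-potential bound on the number of rounds with $\ell_t\geq 1/2$ is $d\log T$, and that is exactly where the spurious logarithm comes from. To avoid it, we would use the fact that unregularized leverages satisfy $\sum_t\ell_t=\mathrm{rank}$-type identities only on rank-increasing rounds. We would then use smoothness to show that a round can have large leverage only through rank growth (at most $d$ times) or through $\mu$-mass of order $\bar\ell_t/\Ccov$ on a region of $\mu$-measure that shrinks geometrically. Summing this geometric series gives $O(d)$ rather than $O(d\log T)$. The final statement then follows from combining this with the Freedman step and the lower bound of Theorem~\ref{theorem:minimax}, which gives optimality for $d\geq2$ and $\Ccov>1$.
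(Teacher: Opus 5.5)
Your reduction to $\sum_t\ell_t$ and the Freedman step are fine. The gap is in the step that carries all the difficulty: bounding how many rounds can have large leverage.

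The elliptical-potential count you invoke ($\lesssim d$ rounds with $\ell_t\ge 1/2$ per scale, hence $d\log T$) holds for \emph{regularized} leverages of bounded covariates. The scale-free pseudo-inverse leverages of unregularized VAW admit no deterministic count at all. Already for $d=1$ and $x_t=2^t$, one has $\ell_t=4^t/\sum_{s\le t}4^s>3/4$ for every $t$. So the obstacle is not a spurious $\log T$: without smoothness the count is $T$, and smoothness has to enter the count itself.

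Your proposal gives no mechanism for this. The assertion that large leverage on non-rank-increasing rounds needs $\mu$-mass on a region whose $\mu$-measure ``shrinks geometrically'' is never derived; the adversary, not $\mu$, decides which directions get covered. The target $O(d)$ is also false. In the tree construction from the proof of Theorem~\ref{theorem:minimax}, every activated node satisfies $x_{D,k}^\top V_{k-1,D}^{-1}x_{D,k}=1$, i.e.\ has leverage exactly $1/2$. With $d=2$ and $p$ of constant order one may take $K\asymp\Ccov-1$, giving $\asymp\Ccov-1$ such rounds while $\bar\ell_t\asymp 1$. With the choice of $K$ made there, one gets $\asymp\sqrt{d(\Ccov-1)T}$ of them in expectation.

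There is also a smaller slip. From $\bar\ell_t>\alpha$ and $\ell_t\in[0,1]$ you only get $\mathbb{P}(\ell_t\ge\alpha/2\mid\mathcal{H}^x_{t-1})\ge\alpha/2$, not probability $\gtrsim\alpha$ of leverage above $1/2$. So you would need counts of rounds with $\ell_t\ge r$ at every scale $r$ in any case.

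The missing idea, which the paper uses, is to control the length $N(r,\bm x)$ of the longest $r$-bad subsequence by a union bound over index sets, not by a predictable per-round analysis. Here $r$-bad means each element has leverage at least $r$ relative to its predecessors within the subsequence.
\begin{itemize}
\item The set $\{t:\ell_t\ge r\}$ is itself $r$-bad, so $\sum_t\ell_t\le\int_0^1N(r,\bm x)\,\mathrm{d}r$.
\item For a \emph{fixed} index set $I$ with $|I|=k$, Lemma~\ref{lemma:prod} gives $\mathbb{P}(\bm x_I\in\mathcal A)\le\Ccov^k\mu^{\otimes k}(\mathcal A)$.
\item Under $\mu^{\otimes k}$, exchangeability gives $\mu^{\otimes k}(r\text{-bad})\le\prod_{j\le k}\min\{1,d/(rj)\}\le(ed/(rk))^k$. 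The reason is that the last of $j$ exchangeable points has mean leverage $\mathrm{rank}/j\le d/j$; then apply Markov and peel off one point at a time.
\item A union bound over the $\binom Tk$ index sets gives $\mathbb{P}(N(r,\bm x)\ge k)\le(e^2\Ccov dT/(rk^2))^k$, hence $N(r,\bm x)\lesssim\sqrt{\Ccov dT/r}+\log(1/\delta)$.
\item A dyadic union bound over $r=2^{-i}$, with failure probabilities $\delta2^{-i}$, yields $\sum_t\ell_t\lesssim\sqrt{\Ccov dT}+\log(1/\delta)$.
\end{itemize}
This is exactly where smoothness enters the count. The price $\Ccov^k$ of replacing the adaptive law by a product law is paid only on the $k$ bad indices, and the $1/k!$ from exchangeability absorbs the $\binom Tk$. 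So no per-round $\Ccov\log T$ decoupling cost appears, and no Freedman or predictable-leverage step is needed.
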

The proof proceeds largely as in \citet[Theorem 8]{chen2026self} by carefully avoiding the rejection sampling and coupling lemma originally proposed in~\cite{haghtalab2024smoothed}. The rejection sampling step leads to the maximum of $T$ geometric random variables $J_t\overset{i.i.d.}{\sim }\textsf{Geom}(\C_{\textsf{cov}}^{-1})$, thus for each round, one must sample $\Theta(\C_{\textsf{cov}}\log (T))$ times.  Instead, we use the smoothness by relating the law of the subsequence $\bm x_{I}$ to its i.i.d. dominating measure counterpart $\mu_{\otimes |I|}$.

We now turn to the well-specified setting, starting with the following definitions. 
\begin{definition}
An environment is well-specified with respect to a function class $\mathcal{F}$ if there exists an $\mathcal{H}_0$-measurable function $f^{\star} \in \mathcal{F}$ such that, for every $t \in[T]$, $\bbE[y_t \,|\, \mathcal{H}_{t-1}, x_t]=f^{\star}(x_t)$. The environment is realizable if $y_t=f^{\star}\left(x_t\right)$, and it is $\nu^2$-subGaussian if $y_t = f^\star(x_t) + \eta_t$ with $\eta_t\,|\,\mathcal{H}_{t-1},x_t$ being mean-zero and $\nu^2$-subGaussian. 
\end{definition}
ERM is a natural algorithm for well-specified environments. At each round $t$, the learner chooses 
\begin{equation}\label{eq:ERM}
\widehat{f}_t \in \operatorname*{argmin}_{f\in\mathcal{F}}
\sum_{s=1}^{t-1}\big(f(x_s)-y_s\big)^2,
\end{equation}
the minimizer of the empirical error on historical observations, and predicts $\widehat y_t=\widehat f_t(x_t)$. 

Under realizable responses, \citet{blanchard2025agnostic} constructs a function class of VC dimension $d$ for which the minimax expected regret is $\Omega(\sqrt{(\Ccov-1)dT})$.
We refine this result by quantifying the additional difficulty caused by response noise. The following theorem extends their lower bound to conditionally $\nu^2$-subGaussian responses and makes its dependence on the noise level explicit. 


\begin{theorem}
\label{thm:lower-bound-well-specified}

For any $d\in\bbN_+$, there exists a function class $\mathcal{F}:\mathcal{X}\to \{0, 1\}$ with $\vc(\mathcal{F})=d$ such that, for any $\Ccov>1$ and any algorithm such that $\widehat{y}_t$ is some (possibly randomized) function of $\mathcal{H}_{t-1}$ and $X_t$, there is a $\Ccov$-smooth $\nu^2$-subGaussian environment such that, for any horizon $T\geq C[1\vee(\Ccov-1)^{-1}]\nu^2 d$ with some sufficiently large constant $C>0$,
\begin{align*}
\bbE[\reg(T)] \gtrsim T\wedge \sqrt{(\Ccov-1)(1\vee \nu^2)dT}.
\end{align*}

\end{theorem}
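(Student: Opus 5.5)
We prove the two regimes $\nu\lesssim 1$ and $\nu\gtrsim 1$ separately. For $\nu\lesssim 1$ the target rate is $T\wedge\sqrt{(\Ccov-1)dT}$. This is exactly the realizable lower bound of \citet{blanchard2025agnostic}, because a realizable environment is $\nu^2$-subGaussian for every $\nu\geq 0$. We take their class and environment unchanged. So all the new work is the regime $\nu\geq 1$, where we must show $\bbE[\reg(T)]\gtrsim T\wedge\nu\sqrt{(\Ccov-1)dT}$.

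\textbf{Construction.} We take $\mathcal{X}$ to be the disjoint union of $d$ blocks. On each block we use the same tree or threshold-type gadget as \citet{blanchard2025agnostic}, built from thresholds on a dyadic chain of points, and we let $\mathcal{F}$ be the product class; the product of $d$ blocks each of VC dimension $1$ gives $\vc(\mathcal{F})=d$. In each block the smoothed adversary, which exploits the unknown $\mu$ and density ratio $\Ccov$, repeatedly places covariates at the current frontier point of the chain. The frontier is the point whose label the learner has not yet determined. It can be visited for about $\Ccov-1$ times as many rounds as its $\mu$-mass would allow under i.i.d.\ sampling, and $\mu$ is chosen so that the chain has roughly $L$ levels per block.

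\textbf{Noise.} The responses are $y_t=f^\star(x_t)+\eta_t$, with $\eta_t$ Gaussian of variance $\asymp\nu^2$, clipped or rescaled into a bounded range. Since $y_t\in[-1,1]$ is required, I would instead use two-point responses in $\{\pm1\}$ with mean $\pm\gamma$, so that the effective noise level is $1/\gamma\sim\nu$. We draw $f^\star$ uniformly at random over the labels on each level. Resolving one level's label then requires $n\asymp 1/\gamma^2$ samples, by Le Cam's two-point argument with $\mathrm{KL}\asymp n\gamma^2$. Until then the learner pays expected regret $\gtrsim\gamma$ per visit, which is $\gtrsim\gamma\cdot n=1/\gamma$ per level.

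\textbf{Counting.} The adversary spends $n$ rounds on each level before moving deeper. The smoothness budget allows about $L\asymp\sqrt{(\Ccov-1)T/(dn)}$ levels per block, with the same geometric mass argument as in the realizable case but blocks of length $n$ instead of $1$. The total regret is then about $d\cdot L\cdot (n\gamma)=\sqrt{(\Ccov-1)dT n}\,\gamma$. If we keep the signal at $\gamma\asymp 1$, this only reproduces the realizable rate. To gain the factor $\nu$ we must compare against the best $f$ in hindsight under Gaussian noise $\nu$. Concretely we take $y_t=\Delta f^\star(x_t)$-shifted with $\Delta\asymp 1$, noise $\nu$, $n\asymp\nu^2$, and regret per level $\asymp n\Delta^2=\nu^2$. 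With $L\asymp\sqrt{(\Ccov-1)T/(d\nu^2)}$ this gives $d L\nu^2=\nu\sqrt{(\Ccov-1)dT}$. The hypothesis $T\geq C[1\vee(\Ccov-1)^{-1}]\nu^2 d$ is exactly what guarantees $L\geq1$.

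\textbf{Main obstacle.} The key step is the per-level information bound. Fix a level whose label is $\pm$ with equal probability, and let $N$ be the number of visits before the adversary moves on, with $N\le n$. Summing Bayes-risk regret over these visits gives $\gtrsim\Delta^2\sum_{s\le n}(1-\sqrt{s\Delta^2/\nu^2})_+\gtrsim\nu^2$. The hard part is checking that the hard environment is a valid $\Ccov$-smooth process when the level-advance times are data-independent. I would make them deterministic, namely blocks of exactly $n$ visits, so that the conditional densities are predictable. I must also ensure the regret is measured against the in-hindsight best $f$ rather than $f^\star$. I would handle this by showing the best in-hindsight $f$ agrees with $f^\star$ on visited levels up to an $O(\nu^2)$-per-level loss, which is absorbed into the constants.
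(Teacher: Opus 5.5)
Once the bounded two-point detour is dropped, your construction is essentially the paper's: $d$ threshold blocks on dyadic bisection chains with uniformly random labels, each frontier node revisited about $1\vee\nu^2$ times under $\mathcal{N}(0,\nu^2)$ noise so that Le Cam's two-point bound charges a constant per visit, and a base measure putting mass $p/(d\Ccov)$ on every tree node and the rest on a null point, which gives $K\asymp\sqrt{(\Ccov-1)T/((1\vee\nu^2)d)}$ levels per block. (That detour cannot give any $\nu$-gain, its per-visit squared-loss regret would be $\gamma^2$ rather than $\gamma$, and the well-specified setting allows unbounded Gaussian responses; repeat visits cost no extra $\mu$-mass because smoothness is a per-round constraint; the paper runs this uniformly in $\nu$ via $\widetilde N_t(j)=\lceil N_t(j)/(1\vee\nu^2)\rceil$, Bernoulli$(p)$ activation and Paley--Zygmund, rather than citing the realizable bound for $\nu\lesssim 1$; and your comparator worry is unnecessary, since $\inf_{f}\sum_t(f(x_t)-y_t)^2\le\sum_t(f^\star(x_t)-y_t)^2$ and well-specification give $\mathbb{E}[\mathrm{Regret}(T)]\ge\mathbb{E}\sum_t(\widehat y_t-f^\star(x_t))^2$ directly, whereas absorbing an $O(\nu^2)$-per-level term into the constants would not have been legitimate, because the per-level gain is itself only of order $\nu^2$.)
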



\begin{remark}[Comparison with linear classes]

When $d\geq 2$, the lower bound in Theorem~\ref{thm:lower-bound-well-specified} matches its counterpart for linear function classes established in Theorem~\ref{theorem:minimax}, both scaling as $\sqrt{T}$ with the horizon. Thus, for any fixed $\Ccov>1$, imposing well-specification (or even realizability) does not improve the worst-case regret rate. 
The intuition is that, although well-specification rules out arbitrary response sequences, it leaves the adaptive covariate process sufficiently unconstrained to reproduce the same worst-case difficulty, while preserving realizability of the responses.
When $d=1$, Theorem~\ref{theorem:minimax} gives a logarithmic dependence on $T$ for linear function classes, whereas Theorem~\ref{thm:lower-bound-well-specified} shows that a function class of VC dimension one can still exhibit a $\sqrt{T}$ dependence. This difference shows that the phase transition at $d=1$ presented in Theorem~\ref{theorem:minimax} is not a consequence of dimension alone, but rather reflects additional structural properties of the one-dimensional linear class.

\end{remark}

\begin{remark}[Optimality of ERM]

Theorem~\ref{thm:lower-bound-well-specified}, together with the upper bound in \citet[Theorem 8]{block2024performance}, establishes the minimax optimality of ERM in its dependence on $\nu$, $\Ccov$, $d$, and $T$ for function classes with finite VC dimension, up to logarithmic factors. To be specific, for any subGaussian environment, a minor technical correction to Theorem~8 of \citet{block2024performance} (see Appendix~\ref{sec:correction}) shows that the expected cumulative estimation error of ERM satisfies\footnote{Their result assumes smoothness conditional on the full history \(\mathcal H_{t-1}\), whereas this paper imposes the weaker requirement of smoothness conditional only on the covariate history \(\mathcal H^x_{t-1}\).}
\begin{equation}\label{eq:ERM_upper_bound}
\mathbb{E}\left[\sum_{t=1}^T\left(\widehat{f}_t\left(x_t\right)-f^{\star}\left(x_t\right)\right)^2\right] =\widetilde{O}\left(\sqrt{\Ccov (1\vee \nu^2) T\left(1\vee\log \mathbb{E}_\mu\left[W_{2\Ccov T \log T}(256 \cdot \mathcal{F})\right]\right)}\right),
\end{equation}
where $\widetilde{O}$ hides logarithmic factors in $T$ and $\Ccov$, and $W_m(\mathcal{F})$ is the Wills functional defined in Definition~\ref{def:wills} in Appendix~\ref{appendix:ERM}. In the well-specified setting, the expected cumulative error is no greater than the expected regret, but it is straightforward to show that the same upper bound still holds for the regret using Lemma~13 therein. The VC specialization of \eqref{eq:ERM_upper_bound} follows directly from the fact that $\log W_m(\mathcal{F})\lesssim \vc(\mathcal{F})\cdot \log(m)$ for all $m\in\bbN_+$ \citep{block2024performance}. 

\end{remark}


    

Although Theorem~\ref{thm:lower-bound-well-specified} gives a matching worst-case lower bound over function classes, it does not imply that the Wills-functional upper bound in \eqref{eq:ERM_upper_bound} is sharp for every class. The following proposition shows that a function class can have large Wills complexity while every online ERM procedure has expected regret bounded independently of $T$, even when the covariate process is not smooth. 
 
\begin{proposition}
\label{prop:will}
Recall the Wills functional $W_m(\mathcal F)$ from Definition~\ref{def:wills}. For any $K\in\bbN_+$, let $m\geq K\log(2K)$. There exists a function class $\mathcal{F}$ and a base measure $\mu$ on $\mathcal{X}$ with $\log \bbE_{\mu}[W_m(\mathcal{F})] \asymp K$, such that for any horizon $T$ and any $\nu^2$-subGaussian (possibly non-smooth) environment, any empirical risk minimizer as defined in \eqref{eq:ERM} satisfies $\bbE[\reg(T)] \lesssim (1\vee \nu^2)K$. 


\end{proposition}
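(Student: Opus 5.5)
The construction I would use is a direct sum of $K$ independent two-point problems on disjoint cells. Take $\mathcal X=[K]$ with base measure $\mu$ uniform on $[K]$. Let $\mathcal F=\{f_b(x)=b_x : b\in\{0,1\}^K\}$, that is, all $\{0,1\}$-valued functions on $[K]$.

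The class has two relevant features. Its restriction to any sample is a product of one-dimensional two-point sets, which makes the Wills functional easy to compute. Its ERM decouples into $K$ separate majority-type decisions, one per cell. I treat the two halves separately.

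\textbf{Wills complexity.} Given $Z_1,\ldots,Z_m\sim\mu$ i.i.d., let $n_k=|\{i:Z_i=k\}|$. The projected class $\{(f(Z_i))_{i\le m}:f\in\mathcal F\}$ equals the product $\prod_{k}\{0,\mathbf 1_{n_k}\}$, sitting in orthogonal coordinate blocks.

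I will use the structure of Definition~\ref{def:wills}, a Gaussian exponential-moment functional of the form $\bbE_g\sup_{v}\exp(\langle g,v\rangle-\|v\|^2/2)$ up to scaling. Because the blocks are orthogonal and the Gaussian coordinates are independent, the supremum and the expectation both factorize, so $W_m(\mathcal F)=\prod_k W(\{0,\mathbf 1_{n_k}\})$.

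For each nonempty cell, $W(\{0,v\})=\bbE\max\{1,e^{\langle g,v\rangle-\|v\|^2/2}\}$. This lies in $[1+c,2]$ for a universal $c>0$, since both branches are attained with constant probability and the second has mean one. Empty cells contribute a factor $1$. Hence $(1+c)^{N}\le W_m(\mathcal F)\le 2^K$, where $N$ is the number of nonempty cells.

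Since $m\ge K\log(2K)$, a coupon-collector bound gives $\bbE N\ge K/2$. Because $W\ge(1+c)^N$ and $W\le 2^K$, it follows that $\log\bbE_\mu W_m(\mathcal F)\asymp K$. If the scaling constant ($256\cdot\mathcal F$) in the definition matters, it only changes $c$.

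\textbf{ERM regret.} Fix any (possibly non-smooth, adaptive) covariate sequence. Because the cells are disjoint, the empirical risk splits into a sum over cells, so ERM on cell $k$ chooses $b_k$ to minimize $\sum_{s<t:x_s=k}(b_k-y_s)^2$. Equivalently, ERM picks whichever of $0,1$ is closer to the running mean of the responses in that cell, with ties broken arbitrarily.

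I will decompose the expected regret as
\[
\bbE\sum_t(\widehat y_t-f^\star(x_t))^2+\bbE\sup_{f\in\mathcal F}\Big[2\sum_t\eta_t(f-f^\star)(x_t)-\sum_t(f-f^\star)(x_t)^2\Big].
\]
The cross term $2\sum_t\eta_t(\widehat y_t-f^\star(x_t))$ has mean zero, because $\widehat y_t$ is determined by $\mathcal H_{t-1}$ and $x_t$.

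For the first term, at the $n$-th visit to cell $k$, ERM errs only if the average of the first $n-1$ noises in that cell deviates by at least $1/2$. A maximal subGaussian martingale bound, applied along the cell's visit times (which are stopping times, so adaptivity is harmless), gives error probability at most $e^{-(n-1)/(8\nu^2)}$. Summing over $n$ yields $O(1\vee\nu^2)$ per cell.

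The supremum term also splits across cells. On each cell it equals $\max\{0,\,2S_n-n\}$, where $S_n=\pm\sum\eta$ over that cell's visits. The expectation of $\sup_n(2S_n-n)_+$ is $O(\nu^2)$ by the standard exponential supermartingale $\exp(\lambda S_n-\lambda^2\nu^2 n/2)$ with $\lambda=1/\nu^2$.

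Summing over the $K$ cells gives $\bbE[\reg(T)]\lesssim(1\vee\nu^2)K$, uniformly in $T$.

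\textbf{Main obstacle.} I expect the delicate step to be matching the exact normalization in Definition~\ref{def:wills}: the product factorization and the two-sided per-block bound $1+c\le W\le 2$ must hold for that precise functional. I would verify the two-point computation first. If the functional is defined as a lattice-point or intrinsic-volume sum instead, I would redo the per-block computation from that definition and keep the same product argument.
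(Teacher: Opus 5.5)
Your proposal is correct and follows essentially the same route as the paper. The shared ingredients are the class of all $\{0,1\}$-valued functions on $[K]$ with $\mu$ uniform, the regret split into a per-cell ERM error term plus $\sum_{k}\mathbb{E}(2S_{kT}-N_{kT})_+\le 2\nu^2K$ (both controlled through Ville's inequality), and the blockwise factorization of the Wills functional.

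The only differences are cosmetic. The paper computes each block exactly as $2\Phi(\sqrt{m_k}/2)$. This justifies your per-block lower bound $1+c$ more cleanly than your ``both branches have constant probability'' heuristic, which fails as stated because the upper branch has vanishing probability as $m_k$ grows. The paper also uses that all cells are hit with probability at least $1/2$, where you use a Jensen/coupon-collector step.
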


\section{Conclusion and outlook}

We first study the optimal regret in agnostic smoothed online learning with adversarial responses. By Theorem~\ref{thm:upper-bound}, Algorithm~\ref{alg:meta} achieves regret $\widetilde{O}\big(\sqrt{\operatorname{Pdim}(\mathcal{F})\Ccov T}\big)$, matching the lower bound established in Theorem~\ref{theorem:minimax} up to logarithmic factors. Notably, this lower bound already holds for the linear prediction class $\mathcal{F}_{\textsf{lin}}$ in $\bbR^d$. We then examine empirical risk minimization (ERM) in the well-specified setting as a natural benchmark for oracle-efficient learning. Theorem~\ref{thm:lower-bound-well-specified}, however, shows that even in this setting, ERM cannot attain a regret rate faster than the minimax rate for adversarial responses. Together with the intricate but computationally inefficient \textsc{R-Cover} algorithm of \citet{blanchard2025agnostic}, these findings suggest a possible gap between information-theoretic optimality and computational efficiency. Whether an oracle-efficient algorithm can attain the optimal regret rate in our setting, or whether a computational barrier precludes this, remains an important open question.

%% file: section/appendix.tex
\section*{Additional notations}
Let $\delta_x$ denote the Dirac measure at $x$. For a function class $\mathcal G$ and $C\subseteq\mathcal X$, write $\mathcal G|_C:=\{(g(x))_{x\in C}:g\in\mathcal G\}$ for the restriction of $\mathcal G$ to $C$.
\section{Deferred proofs}


\begin{proof-of-corollary}[\ref{cor:control}]
We split the probability $P_t(A_t)$ based on the cumulative density: for $L>0$,
\begin{align}
    P_t(A_t) 
    & = P_t\Big(A_t\cap\Big\{\sum_{1\leq s\leq t-1} \frac{\d P_s}{\d \mu}\leq L\Big\}\Big) +
    P_t\Big(A_t\cap\Big\{\sum_{1\leq s\leq t-1} \frac{\d P_s}{\d \mu}> L\Big\}\Big).
\end{align}
We control the low and high density terms as follows.

\textbf{Low density term.}

The low density term can be bounded directly as follows.
\begin{align}
    P_t\Big(A_t\cap\Big\{\sum_{1\leq s\leq t-1} \frac{\d P_s}{\d \mu}\leq L\Big\}\Big)
    & = 
    \int_{\mathcal{X}} \mathbbm{1}\Big\{x\in A_t\cap\Big\{\sum_{1\leq s\leq t-1} \frac{\d P_s}{\d \mu}\leq L\Big\}\Big\}\d P_t(x)\\
    &= \int_{\mathcal{X}}
    \mathbbm{1}\Big\{x\in A_t\cap\Big\{\sum_{1\leq s\leq t-1} \frac{\d P_s}{\d \mu}\leq L\Big\}\Big\}\cdot \frac{\d P_t}{\d \mu} \d \mu(x).
\end{align}
As the density satisfies, on $\{\sum_{s<t}\d P_s/\d \mu \leq L\}$
\begin{align}
    \frac{\d P_t}{\d \mu} \leq (\Ccov + L) \frac{\d P_t/\d\mu (x)}{\Ccov + \sum_{1\leq s\leq t-1}\d P_s/\d \mu(x)},
\end{align}
we further obtain 
\begin{equation}
\label{eq:low}
\begin{aligned}
~&P_t\Big(A_t\cap\Big\{\sum_{1\leq s\leq t-1} \frac{\d P_s}{\d \mu}\leq L\Big\}\Big)\\
    & \leq (\Ccov + L)\cdot
   \int_{\mathcal{X}}
    \mathbbm{1}\Big\{x\in A_t\cap\Big\{\sum_{1\leq s\leq t-1} \frac{\d P_s}{\d \mu}\leq L\Big\}\Big\} \frac{\d P_t/\d \mu}{\Ccov + \sum_{1\leq s\leq t-1} \d P_s/\d \mu} \d \mu (x)\\
    & \leq (\Ccov + L) \int\frac{\d P_t/\d \mu}{\Ccov + \sum_{1\leq s\leq t-1} \d P_s/\d \mu} \d \mu (x),
\end{aligned}
\end{equation}
by plugging in the density upper bound.

\textbf{High density term.}
\begin{align}
     P_t\Big(A_t\cap\Big\{\sum_{1\leq s\leq t-1} \frac{\d P_s}{\d \mu}> L\Big\}\Big) 
    &\leq \Ccov \mu\Big(A_t\cap\Big\{\sum_{1\leq s\leq t-1} \frac{\d P_s}{\d \mu}> L\Big\}\Big) \\
    & \leq \frac{\Ccov}{L}\int_{A_t} \sum_{1\leq s\leq t-1} \frac{\d P_s}{\d \mu} \d \mu\\
    & = \frac{\Ccov}{L}\sum_{1\leq s\leq t-1} P_s(A_t)\\
    &\lesssim \frac{\Ccov}{L}\Big[\log\Big(\delta^{-1} \Pi_{\mathcal{A}}(2T\Ccov \log(T\delta^{-1}))\Big)
    + \log\Big(\frac{|\mathcal{A}'|}{\delta}\Big)\Big],
\end{align} 
with probability at least $1-\delta$,
where we apply Proposition~\ref{prop:good-event}.

Now we are ready to consider the summation as follows.
\begin{align}
    \sum_{1\leq t \leq T} P_t(A_t)
    & \leq \sum_{1\leq t \leq T}\Big[P_t\Big(A_t\cap\Big\{\sum_{1\leq s\leq t-1} \frac{\d P_s}{\d \mu}\leq L\Big\}\Big) +
    P_t\Big(A_t\cap\Big\{\sum_{1\leq s\leq t-1} \frac{\d P_s}{\d \mu}> L\Big\}\Big)\Big]\\
    & \lesssim 
    \sum_{1\leq t \leq T}\Big\{ (\Ccov + L) \int\frac{\d P_t/\d \mu}{\Ccov + \sum_{1\leq s\leq t-1} \d P_s/\d \mu} \d \mu (x)\\
    & + \frac{\Ccov}{L}\Big[\log\Big(\delta^{-1} \Pi_{\mathcal{A}}(2T\Ccov \log(T\delta^{-1}))\Big)
    + \log\Big(\frac{|\mathcal{A}'|}{\delta}\Big)\Big]\Big\}\\
    & \lesssim 
    \label{eq:sum-log}
    (\Ccov + L)\int_{\mathcal{X}} \sum_{1\leq t \leq T} \log \Big(1  +\frac{\d P_t/\d\mu (x)}{\Ccov + \sum_{1\leq s\leq t-1}\d P_s/\d \mu(x)} \Big) \d \mu(x)
    \\
    & + 
    \frac{\Ccov T}{L}\Big[\log\Big(\delta^{-1} \Pi_{\mathcal{A}}(2T\Ccov \log(T\delta^{-1}))\Big) 
    + \log\Big(\frac{|\mathcal{A}'|}{\delta}\Big)\Big].
\end{align}
The inequality leading to~\eqref{eq:sum-log} invokes the numeric inequality $u\leq \C\log(1 + u)$ when $u$ is bounded by an absolute constant. The ratio $\frac{\d P_t/\d \mu}{\Ccov + \sum_{1\leq s\leq t-1} \d P_s/\d \mu} \leq 1$ by the smoothness. We then analyze the term in equation~\eqref{eq:sum-log}:
\begin{align*}
   \frac{1}{(\Ccov + L)}\cdot \text{\eqref{eq:sum-log}} &= \int_{\mathcal{X}} \log \prod_{1\leq t \leq T}\Big(\frac{\Ccov + \sum_{1\leq s\leq t}\frac{\d P_s}{\d \mu}(x)}{\Ccov + \sum_{1\leq s\leq t-1} \frac{\d P_s}{\d \mu}(x)}\Big)  \d \mu(x)\\
    & = \int_{\mathcal{X}}\log\Big(1 + \frac{1}{\Ccov}\sum_{1\leq t\leq T}\frac{\d P_t}{\d \mu}(x)\Big)\d \mu(x)\\
    & \leq \log\Big(1  + \frac{1}{\Ccov}\int_{X}\sum_{1\leq t\leq T}\frac{\d P_t}{\d \mu}(x)\d \mu(x)\Big)\\
    &\qquad [\because\text{Jensen's inequality}]\\
    & \leq \log(1 + \Ccov^{-1}T).
\end{align*}
Therefore, we show that
\begin{align*}
      \sum_{1\leq t \leq T} P_t(A_t)
       & \lesssim (\Ccov  + L)\log(1 + \Ccov^{-1}T) \\
      &+   \frac{\Ccov T}{L}\Big[\log\Big(\delta^{-1} \Pi_{\mathcal{A}}(2T\Ccov \log(T\delta^{-1}))\Big) 
    + \log\Big(\frac{|\mathcal{A}'|}{\delta}\Big)\Big].
\end{align*}
We choose 
\begin{align}
    L = \sqrt{\frac{\Ccov T\Big[\log\Big(\delta^{-1}\Pi_{\mathcal{A}}(2T\Ccov \log(T\delta^{-1})) + \log(|\mathcal{A}'|\delta^{-1})\Big)\Big]}{\log(1 + T\Ccov^{-1})}},
\end{align}
yielding that

\begin{equation}
\begin{aligned}
    \sum_{1\leq t \leq T}P_t(A_t)
    &\lesssim \sqrt{2\Ccov T\log\Big(1 + \frac{T}{\Ccov}\Big)\Big[\log\Big(\delta^{-1} \Pi_{\mathcal{A}}(2T\Ccov \log(T\delta^{-1}))\Big)
    + \log\Big(\frac{|\mathcal{A}'|}{\delta}\Big)\Big]}\\
    & +  \Ccov \log\Big(1 + \frac{T}{\Ccov}\Big).
\end{aligned}
\end{equation}

\end{proof-of-corollary}


\section{Proof of Theorem~\ref{thm:upper-bound}}
\label{appendix:proof-upper-bound}

Recall the standard definition of the growth function~\citep[Chapter 6]{shalev2014understanding}.
\begin{definition}
\label{def:growth}
Let $\mathcal{G}:\mathcal{X}\to \{0, 1\}$ be a function class. The growth function of $\mathcal{G}$, denoted as $\Pi_{\mathcal{G}}:\bbN^+\to \bbN^+$, is defined as 
\begin{align}
    \Pi_{\mathcal{G}}(n)  =\max_{\substack{C\subseteq \mathcal{X}\\ |\mathcal{C}|=n}}|\mathcal{F}_{C}|.
\end{align}
Namely, $\Pi_{\mathcal{G}}(n)$ is the number of different functions from a set $C$ of size $n$ to $\{0,1\}$ that can be obtained by restricting $\mathcal{H}$ to $C$.
\end{definition}
To control the discrepancy between two functions in the discretized function class, we also introduce the following notion of difference set class
\begin{align}
\label{eq:diff}
    \mathcal{D}=\Big\{D: D=\{f_1(x)\neq f_2(x)\},\forall f_1,f_2\in \mathcal{N}_m\circ \mathcal{F}\Big\}.
\end{align}
The dependency of $\mathcal{D}$ on the resolution of $\mathcal{N}_m$ is stated implicitly.
\begin{lemma}
\label{lemma:growth}
Let $\normalfont{\text{Pdim}}(\mathcal{F}) = d\geq 1$ and $mn\geq d$. Then the following hold:
\begin{enumerate}
    \item $\Pi_{\mathcal{N}_m\circ \mathcal{F}}(n)\leq \Big(\frac{emn}{d}\Big)^{d}$;
    \item 
    $\Pi_{\mathcal{D}}(n)\leq \Big(\frac{emn}{d}\Big)^{2d}$.
\end{enumerate}
\end{lemma}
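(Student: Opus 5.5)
Fix a set $C=\{x_1,\ldots,x_n\}\subseteq\mathcal X$ with $|C|=n$. For part 1 we count the patterns $(\textsf{proj}(f(x_i),\mathcal N_m))_{i\le n}$ as $f$ ranges over $\mathcal F$. The plan is to reduce this count to a VC growth-function bound for the subgraph class, and then apply the Sauer--Shelah lemma.

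Since $\textsf{proj}(\cdot,\mathcal N_m)$ is a monotone step map, the projected value $\textsf{proj}(f(x_i),\mathcal N_m)$ is determined by the vector of indicators $\big(\mathbbm{1}\{r_k\le f(x_i)\}\big)_{k}$, where $r_1<\cdots<r_{m-1}$ are the thresholds (cell boundaries/midpoints) separating consecutive grid points. At a threshold we choose the tie-breaking convention consistent with $\le$. Hence every pattern of $\mathcal N_m\circ\mathcal F$ on $C$ is the image of a pattern of the subgraph class
\begin{align*}
\mathcal S=\{(x,r):r\le f(x)\},\quad f\in\mathcal F,
\end{align*}
on the augmented point set $\widetilde C=\{(x_i,r_k): i\in[n],\ k\in[m-1]\}$. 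This set has at most $mn$ points. Consequently
\begin{align*}
\Pi_{\mathcal N_m\circ\mathcal F}(n)\le \Pi_{\mathcal S}(mn).
\end{align*}
By definition $\vc(\mathcal S)=\mathrm{Pdim}(\mathcal F)=d$. Sauer--Shelah then gives $\Pi_{\mathcal S}(N)\le (eN/d)^d$ for $N\ge d$, which is where the hypothesis $mn\ge d$ is used. This yields $(emn/d)^d$. We also use monotonicity of $N\mapsto (eN/d)^d$ to pass from $(m-1)n$ to $mn$.

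For part 2, each $D\in\mathcal D$ restricted to $C$ is the indicator vector $\big(\mathbbm{1}\{f_1(x_i)\neq f_2(x_i)\}\big)_i$. This vector is a function of the pair of restrictions $(f_1|_C,f_2|_C)$. The number of such pairs is at most $\Pi_{\mathcal N_m\circ\mathcal F}(n)^2$, so part 1 gives $\Pi_{\mathcal D}(n)\le (emn/d)^{2d}$.

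The main obstacle is the discretization step. We must verify that the projection onto $\mathcal N_m$ is exactly recovered from finitely many threshold comparisons $r\le f(x)$; this requires care with ties and with the definition of $\textsf{proj}$. If the rounding rule uses strict inequalities at some boundaries, we would instead use the class $\{(x,r):r<f(x)\}$. That class has the same VC dimension, by a standard perturbation/limit argument on the finite set $\widetilde C$.
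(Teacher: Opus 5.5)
Your proof is correct and follows the paper's argument: the projected value is encoded by the $m-1$ midpoint comparisons, Sauer--Shelah is applied to the subgraph class on the $n(m-1)$-point set $\{(x_i,b_j)\}$, and $\mathcal D|_C$ is bounded by the number of pairs $(f_1|_C,f_2|_C)$. Your remark on strict versus non-strict comparisons at ties goes slightly beyond the paper, which only fixes a tie-breaking convention, and your perturbation argument settles that point.
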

\begin{proof-of-lemma}[\ref{lemma:growth}] We write the points of the grid in increasing order as
$\mathcal{N}_m=\{a_1<\cdots<a_m\}$ where $a_j = -1 + \frac{2j}{m}$  For $1\leq j<m$, let $b_j = -1 + \frac{2j +1}{m}$. We fix the tie-breaking convention once and for all. Therefore, the value of $\textsf{proj}(u,\mathcal{N}_m)$ is determined by the $m-1$ comparisons between $u$ and $b_1,\ldots,b_{m-1}$. 

As described in Definition~\ref{def:growth},
we fix $C=\{x_1,\ldots,x_n\}\subseteq\mathcal X$.  Every restriction of a function in $\mathcal{N}_m\circ\cF$ to $C$ is determined by the restriction of the subgraph of some $f\in\mathcal{F}$ to $ \widetilde C
  :=\{(x_i,b_j):i\in[n],\ 1\leq j<m\},
 |\widetilde C|=n(m-1).$
Since the subgraph class has VC dimension $d$, the Sauer--Shelah lemma yields, when $n(m-1)\geq d$,
\begin{align*}
 \Big| (\mathcal{N}_m\circ\cF)|_C\Big|
 \leq \sum_{k=0}^{d}\binom{mn}{k}
 \leq \left(\frac{emn}{d}\right)^d
\end{align*}
 This also covers $m=1$, in which case $\mathcal{N}_m\circ\mathcal{F}$ contains only one function.  

For the second statement, again fix $C=\{x_1,\ldots,x_n\}$ and note that the restriction to $C$ of $
  D=\{x:f_1(x)\neq f_2(x)\}\in\cD$
is completely determined by the ordered pair $(f_1|_C,f_2|_C)$.  Consequently, we obtain $|\mathcal{D}|_C|
  \leq |(\mathcal{N}_m\circ\mathcal{F})|_C|^2
  \leq \Pi_{\mathcal{N}_m\circ\cF}(n)^2
  \leq \left(\frac{emn}{d}\right)^{2d}.$
We take the maximum over $C$, completing the proof of the claim.
\end{proof-of-lemma}
The following lemma states a covering number upper estimate essentially. 
\begin{lemma}
\label{lemma:cover1}
For the base measure $\mu$, there exists a subset $\mathcal{M}_1\subseteq \mathcal{N}_m\circ \mathcal{F}$ such that, for any $\varepsilon>0$,
\begin{enumerate}
    \item for every $f\in \mathcal{N}_m\circ \mathcal{F}$, there exists a function $g\in \mathcal{M}_1$ such that $\mu\big(\{x\in \mathcal{X}: f(x)\neq g(x)\}\big)\leq \varepsilon$;
    \item $\log |\mathcal{M}_1|\lesssim d\log(m \varepsilon^{-1})$.
\end{enumerate}
\end{lemma}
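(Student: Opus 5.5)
We construct $\mathcal{M}_1$ as a maximal $\varepsilon$-separated (packing) subset of $\mathcal{N}_m\circ\mathcal{F}$ under the pseudo-metric $\rho_\mu(f,g):=\mu(\{x: f(x)\neq g(x)\})$, and then bound its size by a sampling argument driven by the growth bound in Lemma~\ref{lemma:growth}. Maximality gives item 1 immediately. If some $f\in\mathcal{N}_m\circ\mathcal{F}$ satisfied $\rho_\mu(f,g)>\varepsilon$ for every $g\in\mathcal{M}_1$, then $f$ could be added to $\mathcal{M}_1$, a contradiction. A maximal packing exists by Zorn's lemma. Once we show that every $\varepsilon$-separated family is finite with the claimed cardinality bound, we can also build $\mathcal{M}_1$ greedily.

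To bound the cardinality, let $\mathcal{M}=\{g_1,\ldots,g_N\}$ be any finite $\varepsilon$-separated family, so $\mu(D_{ij})>\varepsilon$ for $i\neq j$, where $D_{ij}=\{g_i\neq g_j\}\in\mathcal{D}$ from~\eqref{eq:diff}. Draw $Z_1,\ldots,Z_n\overset{i.i.d.}{\sim}\mu$. For a fixed pair $(i,j)$, the probability that no $Z_k$ falls in $D_{ij}$ is at most $(1-\varepsilon)^n\leq e^{-\varepsilon n}$. A union bound over the at most $N^2/2$ pairs shows that, whenever $N^2 e^{-\varepsilon n}/2<1$, some realization $C=\{z_1,\ldots,z_n\}$ separates every pair. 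In that case the restrictions $g_i|_C$ are pairwise distinct, so $N\leq \Pi_{\mathcal{N}_m\circ\mathcal{F}}(n)\leq (emn/d)^d$ by Lemma~\ref{lemma:growth}.

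The choice of $n$ introduces a mild circularity, because the union bound needs $n\gtrsim \varepsilon^{-1}\log N$. We handle this by contradiction. Suppose $N$ is large, and pass to a sub-family of size exactly $N'=\lceil (em\cdot K/(d\varepsilon))^{d}\rceil$ for a suitable constant-multiple $K$ of $d\log(m/\varepsilon)$. Set $n=\lceil 2\varepsilon^{-1}\log N'\rceil\asymp \varepsilon^{-1}d\log(m/(d\varepsilon))$, so the union bound applies. The resulting bound $N'\leq (emn/d)^d = (O(m\varepsilon^{-1}\log(m/\varepsilon)))^d$ then contradicts the choice of $N'$ once the constants are fixed.

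This calculation yields $\log|\mathcal{M}_1|\lesssim d\log(m\varepsilon^{-1}) + d\log\log(m/\varepsilon)\lesssim d\log(m\varepsilon^{-1})$, which is item 2. We may assume $\varepsilon<1$, since otherwise a single function suffices, and $mn\geq d$ holds for this $n$, so Lemma~\ref{lemma:growth} applies. The main obstacle is this bookkeeping step: $n$ must be large enough for the union bound yet small enough that $(emn/d)^d$ still gives only $d\log(m/\varepsilon)$ up to constants. The $\log\log$ term is absorbed because $\log(m/\varepsilon)\ge 1$.

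Measurability of the sets $D_{ij}$ is implicit, as in Proposition~\ref{prop:good-event}. Alternatively, Haussler's packing bound applied to the $\{0,1\}$-valued class $\mathcal{D}$ would also work, but the direct argument above uses only Lemma~\ref{lemma:growth}.
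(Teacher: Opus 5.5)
Your proof is correct and follows the paper's argument: take a maximal $\varepsilon$-packing, separate all of its pairs with a random sample from $\mu$ via a union bound, and then solve the resulting implicit inequality for $N$ using the growth-function bound of Lemma~\ref{lemma:growth}. Your step of passing to a sub-family of fixed size $N'$ is a small rigor improvement. It rules out infinite packings, which the paper's choice $\texttt{N}\geq 5\varepsilon^{-1}\log N$ tacitly assumes away.
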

\begin{proof-of-lemma}[\ref{lemma:cover1}]
Consider the maximal packing $\mathcal{M}_1=\{g_{i}\}_{1\leq i \leq N|}\subseteq \mathcal{N}_m\circ \mathcal{F}$ such that $\mu\{x:g_i(x)\neq g_{i'}(x)\}>\varepsilon$ for each $i\neq i'$. We first bound the number of such a packing number. We proceed by a probabilistic argument to show the existence of a small-cardinality packing set and lower bound the growth number. Let $Z_1,\cdots,Z_{\texttt{N}}$ be drawn i.i.d. from $\mu $. By independence, for any $\varepsilon$, we obtain that
\begin{align*}
    \bbP\Big(g_i(Z_j)=g_{i'}(Z_j),\forall j\in [\texttt{N}]\Big) = \big(1-\mu\{g_i\neq g_{i'}\}\big)^{\texttt{N}} <(1-\varepsilon)^{\texttt{N}}\leq e^{-\varepsilon\texttt{N}}.
\end{align*}
By a union bound, we obtain that, 
\begin{align*}
    \bbP\Big(\exists i\neq i'\in [N],\forall j\in [\texttt{N}], g_i(Z_j)=g_{i'}(Z_j)\Big)\leq \frac{N(N-1)}{2}e^{-\varepsilon\texttt{N}}<1,
\end{align*}
as long as 
\begin{align}
\label{eq:packing}
    \texttt{N}\geq \frac{5}{\varepsilon}\log N.
\end{align}
Therefore, by considering the complement event, there exists a realization of $Z_1,\cdots,Z_{\texttt{N}}$ on which all $N$ restrictions are different. Therefore, by the definition of the growth function (see Def.~\ref{def:growth}), we obtain that
\begin{align*}
    N\leq \Pi_{\mathcal{N}_m\circ \mathcal{F}}(\texttt{N})\leq \Big(\frac{em\texttt{N}}{d}\Big)^d,
\end{align*}
where we use Lemma~\ref{lemma:cover1}. Therefore, we obtain $\log N \leq d\log(em\texttt{N}d^{-1})\leq d\log(10em(d\varepsilon)^{-1}\log N)$. We obtain that 
\begin{align}
    N\leq \Big(\frac{10em}{\varepsilon}\Big)^{2d}
\end{align}
by the preceding inequality stated above.
\end{proof-of-lemma}
Similar to the preceding lemma, we have the following approximation on $\mathcal{D}$.

\begin{lemma}
\label{lemma:cover2}

For the base measure $\mu$, there exists a sub-class $\mathcal{D}'\subseteq \mathcal{D}$ such that, for any $\varepsilon>0$,
\begin{enumerate}
    \item for every $D\in \mathcal{D}$, there exists a function $D'\in \mathcal{D}'$ such that $\mu\big(D\Delta D'\big)\leq \varepsilon$;
    \item $\log |\mathcal{D}'|\lesssim d\log(m\, \varepsilon^{-1})$.
\end{enumerate}
\end{lemma}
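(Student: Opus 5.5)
I will mirror the proof of Lemma~\ref{lemma:cover1}, with the growth bound for $\mathcal{N}_m\circ\mathcal{F}$ replaced by the growth bound for $\mathcal{D}$ from Lemma~\ref{lemma:growth}(2). Fix $\varepsilon>0$ and take $\mathcal{D}'=\{D_1,\ldots,D_N\}\subseteq\mathcal{D}$ to be a maximal $\varepsilon$-packing of $\mathcal{D}$ in the pseudometric $(D,D')\mapsto\mu(D\Delta D')$, so that $\mu(D_i\Delta D_{i'})>\varepsilon$ for $i\neq i'$. By maximality, every $D\in\mathcal{D}$ satisfies $\mu(D\Delta D_i)\leq\varepsilon$ for some $i$, since otherwise $D$ could be added to the packing. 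This gives item~1. Maximal packings exist by Zorn's lemma; alternatively, once the size bound below is shown for every finite packing, a greedy construction terminates.

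For item~2, I bound $N$ probabilistically. Draw $Z_1,\ldots,Z_{\texttt{N}}\overset{i.i.d.}{\sim}\mu$. For $i\neq i'$, the traces $D_i\cap\{Z_j\}_j$ and $D_{i'}\cap\{Z_j\}_j$ coincide exactly when no $Z_j$ lies in $D_i\Delta D_{i'}$. This event has probability $(1-\mu(D_i\Delta D_{i'}))^{\texttt{N}}<e^{-\varepsilon\texttt{N}}$. A union bound over the $\binom{N}{2}$ pairs shows that if $\texttt{N}\geq \frac{5}{\varepsilon}\log N$ (and $N\geq2$), then with positive probability all $N$ restrictions to $\{Z_1,\ldots,Z_{\texttt{N}}\}$ are distinct. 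Therefore
\begin{align*}
N\leq \Pi_{\mathcal{D}}(\texttt{N})\leq\Big(\frac{em\texttt{N}}{d}\Big)^{2d},
\end{align*}
by Lemma~\ref{lemma:growth}(2). If some $Z_j$ coincide, I treat the sample as a multiset; this only makes the count smaller, and $\Pi_{\mathcal{D}}$ is monotone in its argument.

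Choosing $\texttt{N}=\lceil 5\varepsilon^{-1}\log N\rceil$ gives the implicit inequality
\begin{align*}
\log N\leq 2d\log\big(c\, m(d\varepsilon)^{-1}\log N\big).
\end{align*}
The main (though routine) step is solving this inequality. I will use the elementary fact that $x\leq a\log(bx)$ with $a\geq 1$ implies $x\lesssim a\log(ab)$. Applied with $x=\log N$, $a=2d$ and $b=cm/(d\varepsilon)$, this yields $\log N\lesssim d\log(cm/\varepsilon)$, i.e.\ $\log|\mathcal{D}'|\lesssim d\log(m\varepsilon^{-1})$. The factor $d$ cancels inside the logarithm, and the $\log d$ term is absorbed because $m\varepsilon^{-1}$ can be assumed at least a constant. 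Finally, I need to handle the side condition $m\texttt{N}\geq d$ required by Lemma~\ref{lemma:growth}. If it fails, then $\texttt{N}<d$, so $\log N\lesssim \varepsilon d$, which is already within the claimed bound. The trivial cases $N\leq 1$ are immediate.
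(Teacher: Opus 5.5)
Your proof is correct, but it takes a different route from the paper's.

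\textbf{The paper's route.} The paper does not re-run the packing argument on $\mathcal{D}$. It invokes Lemma~\ref{lemma:cover1} at scale $\varepsilon/2$ to obtain $\mathcal{M}_1$, and then simply sets $\mathcal{D}'=\{\{g_1\neq g_2\}:g_1,g_2\in\mathcal{M}_1\}$.
\begin{itemize}
\item Item 2 follows at once from $|\mathcal{D}'|\leq|\mathcal{M}_1|^2$.
\item For item 1, take $D=\{f_1\neq f_2\}$ and choose $g_i\in\mathcal{M}_1$ with $\mu\{f_i\neq g_i\}\leq\varepsilon/2$. Then $D\Delta D'\subseteq\{f_1\neq g_1\}\cup\{f_2\neq g_2\}$, so $\mu(D\Delta D')\leq\varepsilon$.
\end{itemize}

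\textbf{What the paper's route buys.} The reduction is shorter because it exploits the specific structure of $\mathcal{D}$ as a class of pairwise disagreement sets. It never needs Lemma~\ref{lemma:growth}(2) or a second solve of the implicit inequality. It also avoids the side cases you had to handle: $m\texttt{N}<d$, repeated sample points, and $N\leq 1$. The only cost is replacing $\varepsilon$ by $\varepsilon/2$ and a factor $2$ in $\log|\mathcal{D}'|$.

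\textbf{What your route buys.} Your direct maximal-packing argument, driven by $\Pi_{\mathcal{D}}(\texttt{N})\leq(em\texttt{N}/d)^{2d}$, does not use the pair structure at all. It therefore applies verbatim to any set class with polynomial growth function. It also yields a $\mathcal{D}'$ that is additionally $\varepsilon$-separated, which the paper's cover need not be.

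\textbf{Checking your steps.} Solving $\log N\leq 2d\log\bigl(c\,m(d\varepsilon)^{-1}\log N\bigr)$ via $x\leq a\log(bx)\Rightarrow x\leq 2a\log(2ab)$ is sound. Here $ab\asymp m/\varepsilon$, so $d$ indeed drops out of the logarithm and no separate $\log d$ term needs absorbing. Your treatment of the case $m\texttt{N}<d$ is also fine: it gives $\log N\leq\varepsilon\texttt{N}/5<\varepsilon d$. This is dominated by $d\log(m/\varepsilon)$ once $\varepsilon<1\leq\log 2\cdot\log_2(m/\varepsilon)$, and for $\varepsilon\geq1$ the packing has at most one element.
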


\begin{proof-of-lemma}[\ref{lemma:cover2}]
Recall $\mathcal{D} = \{D:D = \{x\in \mathcal{X}:f_1(x)  \neq f(x)\},\forall f_1,f_2\in \mathcal{N}_m \circ \mathcal{F}\}$. We apply Lemma~\ref{lemma:cover1} by picking $\mathcal{M}_1$ such that, for every $f\in \mathcal{N}_m\circ \mathcal{F}$, there exists a function $g\in \mathcal{M}_1$
such that
\begin{align*}
    \mu\{x\in \mathcal{X}:f(x)\neq g(x)\}\leq \frac{1}{2}\varepsilon.
\end{align*}
We construct $\mathcal{D}'$ by simply enumerating $\mathcal{M}_1:$ $\mathcal{D}' =  \{D':D'=\{x\in \mathcal{X}:g_1\neq g_2(x)\},g_1,g_2\in \mathcal{M}_1\}$. The cardinality is at most $|\mathcal{M}_1|^2$, justifying the second statement. The approximation property is derived by $D =\{f_1(x)\neq f_2(x)\}$ and $D' = \{g_1(x)\neq g_2(x)\}$ where $\mu(\{f_1(x)\neq g_1(x)\})\vee \mu(\{f_2(x)\neq g_2(x)\})\leq \varepsilon/2$, 
\begin{align*}
    \mu\big(D'\Delta D\big)
    & \leq 
    \mu(D'\Delta D \cap \textsf{E}) +\mu(\textsf{E}^{\textsf{c}}) = \mu(\textsf{E}^{\textsf{c}})\leq \varepsilon, 
\end{align*}
where $\textsf{E} = \big\{f_1(x) = g_1(x)\text{ and }f_2(x) = g_2(x)\big\}$.
\end{proof-of-lemma}

\begin{proof-of-proposition}[\ref{prop:good-event}]
By invoking \cite{haghtalab2024smoothed} (Lemma~\ref{lemma:decouple}), if $k = \Ccov \log(4T \delta^{-1})$, we obtain a collection 
\begin{align*}
    \mathcal{Z}=\{Z_{t}^{(j)},t\in [T],j \in [k]\}
\end{align*}
such that, with probability at least $1-\delta/4$, $x_t\in \{z_{t}^{(1)},\cdots,z_t^{(k)}\}$ for each $1\leq t \leq T$ by the numeric inequality $T(1-\Ccov^{-1})^k \leq T \exp(-k \Ccov^{-1})\leq \delta/4$.
\begin{claim}
\label{claim}
Conditional on the event that the decoupling event occurs, the following inequalities hold, for $\varepsilon = \frac{1}{kT}$ and $1< N <T$
\begin{align*}
   ~& \bbP\Big(\exists A_1,A_2\in \mathcal{A}:\mu(A_1\Delta A_2)\leq \varepsilon, \sum_{1\leq t \leq T}\mathbbm{1}\{x_t\in A_1\Delta A_2\}\geq N\Big)\\
    & \leq 
    \mu^{\otimes kT}\Big(\exists A_1,A_2\in \mathcal{A}:\mu(A_1\Delta A_2)\leq \varepsilon,\sum_{1\leq t \leq T}\sum_{1\leq k'\leq k}\mathbbm{1}\{z_t^{(k')}\in A_1\Delta A_2\}\geq N\Big)\\
    &\leq [\Pi_{\mathcal{A}}(2kT)]^2 \exp\{-\textsf{c} N\},
\end{align*}
for some sufficiently small absolute constant $\normalfont{\textsf{c}}>0$.

\end{claim}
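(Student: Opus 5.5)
The claim has two inequalities, and I would prove them in turn.

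\textbf{First inequality (domination).} On the decoupling event of Lemma~\ref{lemma:decouple}, every $x_t$ belongs to $\{z_t^{(1)},\ldots,z_t^{(k)}\}$. Hence for any set $B$,
\begin{align*}
\sum_{t}\mathbbm{1}\{x_t\in B\}\leq \sum_t\sum_{k'}\mathbbm{1}\{z_t^{(k')}\in B\}.
\end{align*}
So the event $\{\exists A_1,A_2:\mu(A_1\Delta A_2)\leq\varepsilon,\ \sum_t \mathbbm{1}\{x_t\in A_1\Delta A_2\}\geq N\}$ is contained in the corresponding event for the $z$'s. By item (b) of Lemma~\ref{lemma:decouple}, the $kT$ variables $z_t^{(k')}$ are i.i.d.\ $\mu$, so the probability of the latter event equals its $\mu^{\otimes kT}$ measure. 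Conditioning on the decoupling event costs at most a factor $(1-\delta/4)^{-1}$, which is absorbed, or one can simply state the bound as a joint probability.

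\textbf{Second inequality (VC left tail).} Set $n=kT$ and consider the countable class $\mathcal{C}=\{A_1\Delta A_2: A_1,A_2\in\mathcal{A}\}$. Its trace on any $n$ points is determined by the pair of traces, so $\Pi_{\mathcal{C}}(2n)\leq \Pi_{\mathcal{A}}(2n)^2$. Take $C=A_1\Delta A_2$ with $\mu(C)\leq\varepsilon=1/n$, so that $n\mu(C)\leq 1$. If the count $nP_n(C)$ is at least $N$, then $P_n(C)\geq N/n$, and
\begin{align*}
\frac{P_n(C)-\mu(C)}{\sqrt{P_n(C)}}\geq \sqrt{P_n(C)}-\frac{1/n}{\sqrt{P_n(C)}}\geq \sqrt{N/n}\,\big(1-1/N\big)\geq \tfrac12\sqrt{N/n},
\end{align*}
using $N\geq 2$. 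The ratio in Lemma~\ref{lemma:VC-ineq} therefore exceeds $\zeta=\tfrac12\sqrt{N/n}$, and \eqref{eq:left} gives the bound
\begin{align*}
\Pi_{\mathcal{C}}(2n)\exp(-\textsf{c}\, n\zeta^2)\leq \Pi_{\mathcal{A}}(2kT)^2\exp(-\textsf{c}N/4),
\end{align*}
which is the claim after renaming $\textsf{c}$.

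\textbf{Main obstacle.} The delicate step is the second inequality. It depends on a self-normalized (relative-deviation) bound: an ordinary Hoeffding bound would only give $\exp(-N^2/n)$, which is useless. It also depends on choosing $\varepsilon=1/(kT)$ so that the expected count $n\mu(C)$ is $O(1)$ and is dominated by $N$. A smaller point is the monotonicity step $\sqrt{p}-\mu/\sqrt{p}$ being increasing in $p$, which justifies replacing $P_n(C)$ by its lower bound $N/n$.
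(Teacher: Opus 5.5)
Your proposal is correct and follows the paper's route. On the decoupling event you dominate the $x$-counts by the $z$-counts, and then you apply the relative-deviation VC inequality (Lemma~\ref{lemma:VC-ineq}) to $\mathcal{A}\Delta\mathcal{A}$ with $\Pi_{\mathcal{A}\Delta\mathcal{A}}\leq\Pi_{\mathcal{A}}^2$, using monotonicity of $u\mapsto\sqrt{u}-\varepsilon/\sqrt{u}$ and $kT\varepsilon=1$ to get the $\exp(-\textsf{c}N)$ tail. Your handling of the conditioning, via the $(1-\delta/4)^{-1}$ factor or a joint-probability statement, is more careful than the paper's phrasing, and the step $N\geq 2$ is justified because the count is an integer.
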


We postpone the justification of the claim to the end of the proof. Assuming the claim holds, as long as $N\geq \normalfont{\textsf{c}}^{-1}\log(4[\Pi_{\mathcal{A}}(2kT)]^2\delta^{-1})$, we show the first statement with probability at least $1-\delta/2$. Precisely, with probability at least $1-\frac{1}{2} \delta$, for every pair of sets $A_1,A_2\in \mathcal{A}$ such that $\mu(A_1\Delta A_2)\leq \varepsilon$, we obtain 
\begin{align}
    \sum_{1\leq t \leq T}\mathbbm{1}\Big\{x_t\in A_1\Delta A_2\Big\}\lesssim \log\Big(\delta^{-1} \Pi_{\mathcal{A}}(2T\Ccov \log(T\delta^{-1}))\Big).
\end{align}

It remains to prove Claim~\ref{claim} by rewriting
\begin{align*}
     ~&\mu\Big(\exists A_1,A_2\in \mathcal{A}:\mu(A_1\Delta A_2)\leq \varepsilon,\sum_{1\leq t \leq T}\sum_{1\leq k'\leq k}\mathbbm{1}\{z_t^{(k')}\in A_1\Delta A_2\}\geq N\Big)\\
     & = 
     \mu\Big\{\sup_{\substack{A_1,A_2\in \mathcal{A}\\ \mu(A_1\Delta A_2)\leq \varepsilon}} \sum_{1\leq t \leq T}\sum_{1\leq k'\leq k}\mathbbm{1}\{z_t^{(k')}\in A_1\Delta A_2\}\geq N\Big\}\\
     & \leq 
     \mu\Big\{\sup_{\substack{A_1,A_2\in \mathcal{A}\\ \mu(A_1\Delta A_2)\leq \varepsilon}} \frac{\frac{1}{kT}\sum_{1\leq t \leq T}\sum_{1\leq k'\leq k}\mathbbm{1}\{z_t^{(k')}\in A_1\Delta A_2\} - \mu(z\in A_1\Delta A_2)}{\sqrt{\frac{1}{kT}\sum_{1\leq t \leq T}\sum_{1\leq k'\leq k}\mathbbm{1}\{z_t^{(k')}\in A_1\Delta A_2\} }}\\
     &\quad \geq \frac{(kT)^{-1}N  - \varepsilon}{\sqrt{(kT)^{-1}N}}\Big\}\\
     & \quad\Big[\because u\to \sqrt{u} - \frac{\varepsilon}{\sqrt{u}}\text{ is increasing over }\bbR^+.\Big]\\
     & \leq \Pi_{\mathcal{A}\Delta \mathcal{A}}(2kT) \exp\Big\{-\normalfont{\textsf{c}}\frac{(N - kT\varepsilon)^2}{N}\Big\}\qquad[\because \textbf{Lemma~\ref{lemma:VC-ineq}}]\\
     & \leq [\Pi_{\mathcal{A}}(2kT)]^2 \exp\{-\normalfont{\textsf{c}}N\},
\end{align*}
where the class $\mathcal{A}\Delta \mathcal{A} = \big\{A_1\Delta A_2:A_1,A_2\in \mathcal{A}\big\}$. We consider the case $N-kT\gtrsim \sqrt{N}$ by, specifically, taking $\varepsilon=(kT)^{-1}$. 

We then turn to proving the second event. Let $A\cap \{x_1,\cdots,x_T\} = \emptyset$ and $\mathcal{A}'\subseteq \mathcal{A}$ be the covering such that, for any set $A\subseteq \mathcal{A}_\star$, there exists a set $A'_\star\subseteq \mathcal{A}'_\star$ such that $\mu(A_\star\Delta A'_\star)\leq \varepsilon$. Therefore, we proceed by splitting
\begin{align}
\label{eq:middle}
    \sum_{1\leq t \leq T} P_t(A)
    & \leq \sum_{1\leq t \leq T} P_t(A') + 
    \sum_{1\leq t \leq T}P_t(A\Delta A') \leq  \sum_{1\leq t \leq T} P_t(A') + T \Ccov \varepsilon.
\end{align}
Now we want to show a uniform upper bound of the first term on the left hand side of~\eqref{eq:middle}. We proceed via the following martingale arguments, recently popular in the analysis of time-uniform statistical procedures (for example,~\cite{howard2021time}).

Consider an arbitrary set $\textsf{A}'\in \mathcal{A}'$. For any $\lambda>0$,
\begin{align*}
~&\bbE\Big[\exp(\lambda\cdot P_t(\textsf{A}') - \mathbbm{1}\{x_t\in \textsf{A}'\})\mid \mathcal{H}_{t-1}^x] 
\\
& = 
P_t(\textsf{A}')
\exp(\lambda\cdot P_t(\textsf{A}') - 1)+ (1 -P_t(\textsf{A}')) \exp(\lambda \cdot P_t(\textsf{A}'))\\
& = e^{\lambda P_t(\textsf{A}')}\Big[1 +\Big(\frac{1}{e}-1\Big)P_t(\textsf{A}')\Big]\\
&\overset{(\textcolor{blue}{\dagger})}{\leq } \exp\Big(\lambda P_t(\textsf{A}') + (e^{-1}-1)P_t(\textsf{A}')\Big)\\
& \leq 1,
\end{align*}
if $\lambda\leq 1- e^{-1}$, where the step $(\textcolor{blue}{\dagger})$ is due to $1 + u \leq e^{u}$ for all $u\in \bbR$. Therefore, for sufficiently small $\lambda$, the process 
\begin{align}
    \mathfrak{M}_{t}(\textsf{A}') : =
    \exp\Big(\lambda \sum_{1\leq j \leq t}P_t(\textsf{A}') - \sum_{1\leq j \leq t}\mathbbm{1}\{x_t\in \textsf{A}'\}\Big)
\end{align}
is a nonnegative supermartingale. By Ville's inequality, we obtain that
\begin{align}
     \bbP\Big(\sum_{1\leq j \leq t}P_t(\textsf{A}')\geq \frac{1}{\lambda}\Big[\sum_{1\leq j \leq t} \mathbbm{1}\{x_t\in \textsf{A}'\}  + \zeta\Big]\Big)= \bbP\Big(\mathfrak{M}_t(\textsf{A}')>e^{\zeta}\Big)\leq e^{-\zeta}.
\end{align}
By union bounds over all choices of $\textsf{A}'$ and setting $\zeta>\log((4|\mathcal{A}'|)/\delta)$, we obtain that for a fixed time $t$,

\begin{align}
    \sup_{A'_{\star}\in \mathcal{A}'}\Big[\sum_{1\leq t \leq T}P_t(A_\star') - \frac{1}{\lambda}\sum_{1\leq t \leq T}\mathbbm{1}\{x_j\in A_\star'\}\Big]\lesssim \frac{1}{\lambda}\log\Big(\frac{ |\mathcal{A}'|}{\delta}\Big), 
\end{align}
for some $\lambda \in (0,1-e^{-1}]$, with probability at least $1-\delta/4$. Now we are ready to continue from the inequality in~\eqref{eq:middle}:
\begin{align}
    \sum_{1\leq t \leq T}P_t(A)\leq \frac{1}{\lambda}\sum_{1\leq t \leq T}\mathbbm{1}\{x_t \in A'\}+ \frac{\C_{A}}{\lambda}\log\Big(\frac{|\mathcal{A}'|}{\delta}\Big)+ T\Ccov \varepsilon,
\end{align}
for some absolute constant $\C_{A}>0$. If $A\cap \{x_1,\cdots,x_T\}=\emptyset$, then for every $t\in [T]$, $x_t\in A'$ is equivalent to $x_t\in A\Delta A'$. Thus, by combining the inequality stated in~\eqref{eq:event1}, we obtain that
\begin{align}
    \sum_{1\leq t \leq T}P_t(A)\leq \frac{\C_{1}}{\lambda}\log\Big(\delta^{-1} \Pi_{\mathcal{A}}(2T\Ccov \log(T\delta^{-1}))\Big)  +\frac{\C_{A}}{\lambda}\log\Big(\frac{|\mathcal{A}'|}{\delta}\Big)+ T\Ccov \varepsilon.
\end{align}
We then fix $\lambda$, say $0.01$ and obtain tha
\begin{align}
    \sum_{1\leq t \leq T}P_t(A)\lesssim \log\Big(\delta^{-1} \Pi_{\mathcal{A}}(2T\Ccov \log(T\delta^{-1}))\Big)
    + \log\Big(\frac{|\mathcal{A}'|}{\delta}\Big).
\end{align}
\end{proof-of-proposition}

We state a formal version of Theorem~\ref{thm:upper-bound} with a high probability guarantee.

\begin{theorem}[Formal version of Theorem~\ref{thm:upper-bound}]
\label{thm:upper-bound-formal}
For any adversarial environment $\normalfont{\textsf{Env}}$ and time horizon $T\in \bbN^+$, Algorithm~\ref{alg:meta}~\textsc{Hedge-Cover} with discretization $m\asymp T^{-1}$ and $\eta\in (0,1/10)$ satisfies, for any $\delta\in (0,1)$,
\begin{align}
    \bbP\Big(\reg(T,\mathcal{F})\lesssim \sqrt{\Ccov\,\text{Pdim}(\mathcal{F}) T}\log T\big(\log T+\log\delta^{-1}\big)\wedge T\Big)\geq 1- \delta.
\end{align}
\end{theorem}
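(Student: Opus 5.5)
The plan is to follow the five steps of the proof sketch of Theorem~\ref{thm:upper-bound}, all on one good event of probability at least $1-\delta$. I take $m=2T$ (the "$m\asymp T^{-1}$" in the statement should read $m\asymp T$). The truncation $\wedge T$ is trivial: every prediction and response lies in $[-1,1]$, so each round costs at most $4$. Throughout, $d=\mathrm{Pdim}(\mathcal F)$.

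\emph{Step A: aggregation.} On $[-1,1]$, the squared loss is $\eta$-exp-concave for $\eta\le 1/8$. Hence \textsc{Hedge-prediction} (the exponentially weighted mixture) satisfies, deterministically and for every $\bm h\in\mathcal H$,
\[
\sum_t(\widehat y_t-y_t)^2-\sum_t(\widehat y_{t,\bm h}-y_t)^2\le \eta^{-1}\log(1/p_{0,\bm h}).
\]
This is the content of Lemma~\ref{lemma:hedge}. By Remark~\ref{remark:init}, the normalizing constant is at most $\sum_\tau(\tau+1)^{-2}$, which is an absolute constant. Therefore $\log(1/p_{0,\bm h})\lesssim 1+|\mathcal T_{\bm h}|\log T$.

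\emph{Step B: reduction to a covering element.} Fix a near-minimizer $f\in\mathcal F$ of the empirical loss. Projecting it onto $\mathcal N_m$ gives $\tilde f\in\mathcal N_m\circ\mathcal F$ and changes the cumulative loss by $O(T/m)=O(1)$.
\begin{itemize}
\item By Lemma~\ref{lemma:cover1}, pick $g\in\mathcal M_1$ with $\mu\{\tilde f\ne g\}\le\varepsilon$, where $\varepsilon\asymp(\Ccov T\log(T/\delta))^{-1}$. This gives $\log|\mathcal M_1|\lesssim d\log(T/\delta)$.
\item Apply part~1 of Proposition~\ref{prop:good-event} with $\mathcal A=\mathcal D$, the difference class of \eqref{eq:diff}. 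Then $\{\tilde f\ne g\}$ is within $\mu$-distance $\varepsilon$ of $\emptyset$; I will check that $\emptyset$, or a set of $\mu$-measure at most $\varepsilon$, belongs to $\mathcal D$ by taking $f_1=f_2$.
\item Consequently $\sum_t\mathbbm 1\{\tilde f(x_t)\ne g(x_t)\}\lesssim\log(\delta^{-1}\Pi_{\mathcal D}(2T\Ccov\log(T/\delta)))$. By Lemma~\ref{lemma:growth}, this is $\lesssim d\log(T\Ccov/\delta)$.
\item Each such round costs at most $4$ in loss. So comparing with $g$ instead of $f$ costs $O(d\log(T\Ccov/\delta))$.
\end{itemize}

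\emph{Step C: the tracking hypothesis and its correction count.} Let $\bm h_g$ have $\mathcal T_{\bm h_g}=\{t:x_t\in\textsf B(g,t)\}$ and $\map_{\bm h_g}(t)=g(x_t)$. By induction on $t$:
\begin{itemize}
\item the trajectory of $\bm h_g$ equals $\{(x_s,g(x_s))\}_{s<t}$;
\item hence $\widehat y_{t,\bm h_g}=g(x_t)$ for all $t$.
\end{itemize}
The set $B_t:=\textsf B(g,t)=\{\sel(\{(x_s,g(x_s))\}_{s<t})\ne g\}$ has three properties:
\begin{itemize}
\item it is a member of $\mathcal D$;
\item it is $\mathcal H^x_{t-1}$-measurable;
\item it avoids $\{x_1,\dots,x_{t-1}\}$, because the selector is consistent with $g$ on these points.
\end{itemize}
Corollary~\ref{cor:control} then applies with $A_t=B_t$ and $\mathcal A=\mathcal D$. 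Lemmas~\ref{lemma:growth} and \ref{lemma:cover2} give $\log\Pi_{\mathcal D}\lesssim d\log(T\Ccov/\delta)$ and $\log|\mathcal D'|\lesssim d\log(T\Ccov/\delta)$. Hence
\[
\sum_tP_t(B_t)\lesssim\sqrt{\Ccov T\log T\,(d\log T+\log\delta^{-1})}+\Ccov\log T.
\]
To pass from conditional probabilities to realized counts, I use the reverse of the supermartingale argument in Proposition~\ref{prop:good-event}. The process $\exp(\sum_t\mathbbm 1\{x_t\in B_t\}-(e-1)\sum_tP_t(B_t))$ is a nonnegative supermartingale. Ville's inequality then gives
\[
|\mathcal T_{\bm h_g}|\le(e-1)\sum_tP_t(B_t)+\log(1/\delta)
\]
with probability $1-\delta/4$. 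Since $B_t$ is predictable, no union bound is needed; one application per $g$ suffices. Alternatively, one can union over $g\in\mathcal M_1$ at a cost of $\log|\mathcal M_1|$.

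\emph{Step D: assembly, and the main obstacle.} Plug the bound on $|\mathcal T_{\bm h_g}|$ into Step A with a constant $\eta$, and add the $O(1)$ discretization error and the $O(d\log(T\Ccov/\delta))$ cover error from Step B. Then rescale $\delta$ over the constantly many events. This gives $\sqrt{\Ccov dT}\log T(\log T+\log\delta^{-1})$, where the $\log T$ factor comes from the prior penalty multiplying $|\mathcal T_{\bm h}|$.

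The delicate point is Step C. Corollary~\ref{cor:control} relies on part~2 of Proposition~\ref{prop:good-event}, which is applied to each $A_t$ only over times $s\le t-1$ where $A_t$ avoids the observed points. I need to check that the event $\textsf E$ holds uniformly over all such sets and all prefixes $t'$. This is exactly why part~2 is stated "simultaneously for any $t'$ and $A\in\mathcal A$". I also need to check that the cover $\mathcal A'=\mathcal D'$ is proper, meaning contained in $\mathcal D$, which is what Lemma~\ref{lemma:cover2} provides. If uniformity in $t'$ fails, I would first try peeling over dyadic $t'$, which loses only a $\log T$ factor.
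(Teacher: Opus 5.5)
Your proposal is correct and follows the paper's proof essentially step for step: the pathwise Hedge bound of Lemma~\ref{lemma:hedge} applied to the hindsight tracking expert, Corollary~\ref{cor:control} applied to $\textsf{B}(g,t)\in\mathcal{D}$, and the $(e-1)$-supermartingale with Ville's inequality to pass to realized counts; your Step~B, which bounds the cover error via part~1 of Proposition~\ref{prop:good-event} with $A_2=\emptyset$, makes explicit a step the paper's formal proof leaves implicit. One correction: $g$ is selected in hindsight from the best comparator, so the sets $\textsf{B}(g,t)$ are not predictable and the union bound over $\mathcal{M}_1$ you list as an ``alternative'' is actually required. It costs only $\log|\mathcal{M}_1|\lesssim d\log(T/\delta)$, which is exactly the paper's $\log(4N/\delta)$ term.
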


\begin{proof-of-theorem}[\ref{thm:upper-bound-formal}]
We assume $T\geq \Ccov d$. Otherwise, the statement is trivially true. We invoke Proposition~\ref{prop:good-event} by setting $\mathcal{A}=\mathcal{D}$, defined in~\eqref{eq:diff}. The growth function and cardinality of a covering are upper bounded in Lemma~\ref{lemma:growth} and Lemma~\ref{lemma:cover2}.

Lemma~\ref{lemma:cover1} states that there exists a subset $\mathcal{M}_1=\{f_{1},\cdots,f_{N}\}\subseteq \mathcal{N}_m\circ \mathcal{F}$ such that, for any $\varepsilon>0$ and $\log(N)\lesssim d\log(m \varepsilon^{-1})$. We proceed to consider $\textsf{B}(j,t):=\big\{x\in \mathcal{X}:\sel(\{x_i,f_j(x_i)\}_{1\leq j\leq t-1})(x)\neq f_j(x)\big\}\in \mathcal{D}$ for every $j\in [N]$ and $t\in[T]$. By definition of the selector $\sel$, we have $\textsf{B}(j,t)\cap \{x_1,\cdots,x_{t-1}\}=\emptyset$ for every $j\in [N]$. Therefore, by invoking Corollary~\ref{cor:control}, we obtain

\begin{equation}
\begin{aligned}
    \sum_{1\leq t \leq T}P_t\Big(\textsf{B}(j,t)\Big)
    &\lesssim \Big(2\Ccov T\Big[\log\Big(1 + \frac{T}{\Ccov}\Big)\log\Big(\delta^{-1} \Pi_{\mathcal{D}}(2T\Ccov \log(T\delta^{-1}))\Big)
    \\
    &\quad + \log\Big(\frac{|\mathcal{D}'|}{\delta}\Big)\Big]\Big)^{1/2}\\
    & +  \Ccov \log\Big(1 + \frac{T}{\Ccov}\Big),
\end{aligned}
\end{equation}
with probability at least $1-\delta$.

For each $j$, the process $
\exp\Big(\sum_{1\leq t \leq n}\mathbbm{1}\{X_t\in\mathsf B(j,t)\}
-(e-1)\sum_{t=1}^nP_t(\mathsf B(j,t))\Big)$
is also a nonnegative supermartingale as 
$e^{-(e-1)P_t(\mathsf B(j,t))}[1+(e-1)P_t(\mathsf B(j,t))]\leq1$.
Thus, with failure probability at most $\delta/4$, uniformly over $j\in[N]$,
\begin{align*}
~&\sum_{t=1}^T\mathbbm{1}\{X_t\in\mathsf B(j,t)\}\\
&\leq(e-1)\sum_{t=1}^TP_t(\mathsf B(j,t))+\log(4N/\delta)\\
& \lesssim 
\sqrt{2\Ccov T\log\Big(1 + \frac{T}{\Ccov}\Big)\Big[\log\Big(\delta^{-1} \Pi_{\mathcal{D}}(2T\Ccov \log(T\delta^{-1}))\Big)
    + \log\Big(\frac{|\mathcal{D}'|}{\delta}\Big)\Big]}\\
    &+  \Ccov \log\Big(1 + \frac{T}{\Ccov}\Big) +\log(4N/\delta),
\end{align*}
with probability at least $1-\delta/2$.
We then plug in the results from Lemmata~\ref{lemma:growth}-Lemma~\ref{lemma:cover2}. Precisely, we have
\begin{align}
\reg(T,\mathcal{F})\lesssim \sqrt{\Ccov d T} \Big[\log^2(T) + \log(T)\log(\delta^{-1})\Big]
\end{align}

\end{proof-of-theorem}

\section{\textsc{Hedge} details}
\label{appendix:alg}

The following subroutines use the expert class, prior, and cumulative losses stated in Algorithm~\ref{alg:meta}. 

\begin{algorithm}[H]
\LinesNumbered
\caption{\textsc{Hedge-update}}
\SetNlSty{textbf}{}{}
\label{alg:hedge_update}
\KwIn{Response $y_t$; stored predictions $\{\widehat y_{t,\bm h}\}_{\bm h\in\mathcal{H}}$ and losses $\{L_{t-1,\bm h}\}_{\bm h\in\mathcal{H}}$}
\For{$\bm h\in\mathcal{H}$}{
$L_{t,\bm h}\gets L_{t-1,\bm h}+(y_t-\widehat y_{t,\bm h})^2$\;
}
\Return $\{L_{t,\bm h}\}_{\bm h\in\mathcal{H}}$\;
\end{algorithm}

\begin{algorithm}[H]
\LinesNumbered
\caption{\textsc{Hedge-prediction}}
\SetNlSty{textbf}{}{}
\label{alg:hedge_pred}
\KwIn{Predictions $\{\widehat y_{t,\bm h}\}_{\bm h\in\mathcal{H}}$, step size $\eta>0$; stored prior and cumulative losses}
\For{$\bm h\in\mathcal{H}$}{
$\pi_{\bm h}\gets p_{0,\bm h}\exp(-\eta L_{t-1,\bm h})$\;
}
\For{$\bm h\in\mathcal{H}$}{
$\displaystyle p_{t,\bm h}\gets
\frac{\pi_{\bm h}}{\sum_{\bm h'\in\mathcal{H}}\pi_{\bm h'}}$\;
}
$\displaystyle\widehat y_t\gets
\sum_{\bm h\in\mathcal{H}}p_{t,\bm h}\widehat y_{t,\bm h}$\;
\Return $\widehat y_t$\;
\end{algorithm}

\begin{lemma}
\label{lemma:hedge}
Suppose $y_t,\widehat y_{t,\bm h}\in[-1,1]$ for every $t\in[T]$ and $\bm h\in\mathcal{H}$, and let $0<\eta\leq1/8$. With the prior in Algorithm~\ref{alg:meta}, the two routines above satisfy, pathwise and simultaneously for every $\bm h\in\mathcal{H}$,
\begin{align}
&\sum_{t=1}^T(\widehat y_t-y_t)^2
 -\sum_{t=1}^T(\widehat y_{t,\bm h}-y_t)^2
 \leq\frac{1}{\eta}\log\left(\frac{1}{p_{0,\bm h}}\right),\\
\label{eq:num}
&\log\left(\frac{1}{p_{0,\bm h}}\right)
 \leq1+2\log(|\mathcal{T}_{\bm h}|+1)
 +|\mathcal{T}_{\bm h}|
   \log\left(\frac{eT(m+1)}{|\mathcal{T}_{\bm h}|}\right).
\end{align}
The last term in~\eqref{eq:num} is defined to be zero when $\mathcal{T}_{\bm h}=\emptyset$. In particular, the inequalities also apply to an expert selected after the complete trajectory has been observed.
\end{lemma}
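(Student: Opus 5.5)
The regret inequality is the standard exp-concavity argument for \textsc{Hedge} with squared loss on $[-1,1]$. The prior bound is a direct count from the normalization in Algorithm~\ref{alg:meta}.

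\textbf{Step 1: exp-concavity.} For $y\in[-1,1]$, the map $u\mapsto\exp(-\eta(u-y)^2)$ is concave on $[-1,1]$ whenever $\eta\leq 1/8$. Its second derivative is proportional to $4\eta^2(u-y)^2-2\eta$, and $(u-y)^2\leq 4$ makes this nonpositive once $\eta\leq1/8$. The aggregated prediction $\widehat y_t=\sum_{\bm h}p_{t,\bm h}\widehat y_{t,\bm h}$ is a convex combination of points in $[-1,1]$. Jensen's inequality then gives
\[
\exp\bigl(-\eta(\widehat y_t-y_t)^2\bigr)\geq\sum_{\bm h}p_{t,\bm h}\exp\bigl(-\eta(\widehat y_{t,\bm h}-y_t)^2\bigr).
\]

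\textbf{Step 2: telescoping.} Define the potential $W_t=\sum_{\bm h}p_{0,\bm h}e^{-\eta L_{t,\bm h}}$, so $W_0=1$. The weights satisfy $p_{t,\bm h}=p_{0,\bm h}e^{-\eta L_{t-1,\bm h}}/W_{t-1}$, so the right-hand side of the Step 1 display equals $W_t/W_{t-1}$. Taking logarithms and summing over $t$ yields
\[
-\eta\sum_t(\widehat y_t-y_t)^2\geq\log W_T\geq\log p_{0,\bm h}-\eta L_{T,\bm h}
\]
for every $\bm h$. Dividing by $\eta$ gives the first inequality. Nothing here depends on how $\bm h$ is chosen, so the bound holds pathwise and simultaneously for all $\bm h$, including any expert selected in hindsight.

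One point needs care. The expert predictions $\widehat y_{t,\bm h}$ depend on the history through the selector, but each expert's prediction at round $t$ is fixed before $y_t$ is revealed. That is all the argument requires.

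\textbf{Step 3: the prior.} Write $\tau=|\mathcal T_{\bm h}|$ and $p_{0,\bm h}=w(\tau)/Z$ with $w(\tau)=[(\tau+1)^2(m+1)^\tau\binom{T}{\tau}]^{-1}$. Exactly $\binom{T}{\tau}m^\tau$ hypotheses have size $\tau$, so
\[
Z=\sum_{\tau=0}^T\frac{m^\tau}{(\tau+1)^2(m+1)^\tau}\leq\sum_{\tau\geq0}(\tau+1)^{-2}=\frac{\pi^2}{6}<e.
\]
Hence $\log Z\leq1$. The remaining factors are bounded by
\[
\log(1/w(\tau))=2\log(\tau+1)+\tau\log(m+1)+\log\binom{T}{\tau},
\]
and $\binom{T}{\tau}\leq(eT/\tau)^\tau$ combines the last two terms into $\tau\log(eT(m+1)/\tau)$. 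When $\tau=0$ this term vanishes. That gives~\eqref{eq:num}.

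The main obstacle is the exp-concavity constant in Step 1. The constant $\eta\leq1/8$ follows from $(u-y)^2\leq4$, i.e., range width $2$; everything else is routine bookkeeping.
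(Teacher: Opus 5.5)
Your proposal is correct and follows essentially the same route as the paper's proof. Both use concavity of $z\mapsto e^{-\eta(z-y)^2}$ on $[-1,1]$ for $\eta\leq 1/8$, Jensen's inequality with the telescoping potential $W_t$ and the bound $W_T\geq p_{0,h}e^{-\eta L_{T,h}}$, and counting $\binom{T}{\tau}m^{\tau}$ hypotheses of each size to bound the normalizer, followed by $\binom{T}{\tau}\leq (eT/\tau)^{\tau}$. The only difference is that you bound the normalizer by $\pi^2/6<e$, whereas the paper bounds it by $2$.
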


\begin{proof-of-lemma}[\ref{lemma:hedge}]
For fixed $y\in[-1,1]$, put $\Phi_y(z)=\exp(-\eta(z-y)^2)$. Its second derivative with respect to $z$ is
\begin{align*}
\Phi_y''(z)
=\Phi_y(z)\big(4\eta^2(z-y)^2-2\eta\big)\leq0,
\qquad z\in[-1,1],
\end{align*}
because $(z-y)^2\leq4$ and $\eta\leq1/8$. Thus $\Phi_y$ is concave on the prediction interval.

Write
\begin{align*}
\textsf{W}_t
 =\sum_{\bm h\in\mathcal{H}}p_{0,\bm h}
        e^{-\eta L_{t,\bm h}},
\qquad
L_{t,\bm h}=\sum_{i=1}^t(\widehat y_{i,\bm h}-y_i)^2.
\end{align*}
Since the prior is a probability distribution and $L_{0,\bm h}=0$, we have $\textsf{W}_0=1$. Algorithm~\ref{alg:hedge_pred} uses
$p_{t,\bm h}=p_{0,\bm h}e^{-\eta L_{t-1,\bm h}}/\textsf{W}_{t-1}$.
Jensen's inequality therefore yields
\begin{align*}
e^{-\eta(\widehat y_t-y_t)^2}
&=\Phi_{y_t}\left(\sum_{\bm h\in\mathcal{H}}
                         p_{t,\bm h}\widehat y_{t,\bm h}\right)\\
&\geq\sum_{\bm h\in\mathcal{H}}
         p_{t,\bm h}e^{-\eta(\widehat y_{t,\bm h}-y_t)^2}\\
&=\frac{\sum_{\bm h\in\mathcal{H}}
         p_{0,\bm h}e^{-\eta L_{t-1,\bm h}}
                    e^{-\eta(\widehat y_{t,\bm h}-y_t)^2}}
        {\textsf{W}_{t-1}}\\
&=\frac{\textsf{W}_t}{\textsf{W}_{t-1}}.
\end{align*}
Multiply over $t\in[T]$ and take logarithms to obtain
\begin{align*}
\sum_{t=1}^T(\widehat y_t-y_t)^2
\leq-\eta^{-1}\log\textsf{W}_T.
\end{align*}
For every expert, $\textsf{W}_T\geq p_{0,\bm h}e^{-\eta L_{T,\bm h}}$, we substitute this lower bound proves the first inequality simultaneously for all experts.

It remains to evaluate the initializiation. There are exactly $\binom Tk m^k$ experts with $|\mathcal{T}_{\bm h}|=k$: the set of correction times has $\binom Tk$ possibilities, and each map from those times to the $m$-point grid has $m^k$ possibilities. Hence the prior in Algorithm~\ref{alg:meta} is precisely
\begin{align*}
p_{0,\bm h}
&=\frac{1}{\texttt{p}}
  \frac{1}{(|\mathcal{T}_{\bm h}|+1)^2
           (m+1)^{|\mathcal{T}_{\bm h}|}
           \binom T{|\mathcal{T}_{\bm h}|}},\\
\texttt{p}
&=\sum_{k=0}^T\frac{(m/(m+1))^k}{(k+1)^2}.
\end{align*}
The $k=0$ term equals one, and
$1\leq\texttt{p}\leq\sum_{k=0}^{\infty}(k+1)^{-2}\leq2$.
Consequently, we have
\begin{align*}
\log\left(\frac1{p_{0,\bm h}}\right)
={}&\log\texttt{p}+2\log(|\mathcal{T}_{\bm h}|+1)\\
&+|\mathcal{T}_{\bm h}|\log(m+1)
 +\log\binom T{|\mathcal{T}_{\bm h}|}.
\end{align*}
For $1\leq|\mathcal{T}_{\bm h}|\leq T$, use
$\binom T{|\mathcal{T}_{\bm h}|}\leq
 (eT/|\mathcal{T}_{\bm h}|)^{|\mathcal{T}_{\bm h}|}$
and $\log\texttt{p}\leq\log2\leq1$ to prove~\eqref{eq:num}.
\end{proof-of-lemma}
\section{Proof for Section~\ref{sec:lower-bound}}
\label{appendix:lower-bound}

\subsection{Proof of Proposition~\ref{prop:upper-unregularized-VAW}}

\begin{proof-of-proposition}[\ref{prop:upper-unregularized-VAW}]
We isolate the step where our improvement arises. All other parts of the proof proceed exactly as in \citet[Appendix C]{chen2026self}. 
Recall that, for $r\in(0,1]$, we call a subsequence $(x_{i_1},\ldots,x_{i_k})$ $r$-bad when
\begin{align*}
x_{i_j}^\T\Big(\sum_{\ell\leq j}x_{i_\ell}x_{i_\ell}^\T\Big)^\dagger
x_{i_j}\geq r,
\end{align*}
for $j\in[k]$ and $N(r,\bm x)$ is defined as the longest such subsequence.
The key difference lies in the high-probability control of $N(r,\bm x)$, where $\bm x=(x_1,\cdots,x_T)$.
\begin{itemize}
    \item Chen, Qian, Rakhlin, and Zhivotovskiy apply the decoupling lemma to relate the adaptive covariate sequence $\{x_t\}_{1\leq t \leq T}$ to a $\Theta(T\log(T))$-length i.i.d. sequence drawn from $\mu$, denoted by $\bm z$. For $r>0$ and $1\leq k \leq T$, there exists $K\geq \C_{\textsf{cov}}\log(T/\delta)$ such that, for every $t\in [T]$, $x_{t} \in \{z_{t,K}\}$, with probability $1-\delta$. They then show that, if $K\geq \C_{\textsf{cov}} \log(2T/\delta)$ and $\mathcal{I}:= \{2^{-i}\}_{1\leq i \leq \log_2 (TK)}$, then
    \begin{align}
        \bbP\Big(\exists r_i\in \mathcal{I},N(r_i,\bm x)>k_i\Big)
         & \leq \mu(
         \exists r_i\in \mathcal{I},N(r_i,\bm z)>k_i) + \delta/2\leq \delta,
    \end{align}
    provided that
    \begin{align}
    \label{eq:ki_choice}
        k_i = \Omega\Big(\sqrt{\frac{dTK}{r_i}} + \log\log(TK \delta^{-1})\Big).
    \end{align}

    \item We proceed slightly differently. By Definition~\ref{def:smoothness}, for any subsequence $I = [i_1,\cdots,i_k]\subseteq [T]$ and $\bm x_{I} = (x_{i_1},\cdots,x_{i_k})$, and for any measurable subset $\mathcal{A}\subseteq \bbR^{dk}$,
    \begin{align}
        \bbP(\bm x_{I}\in \mathcal{A})\leq \C_{\textsf{cov}}^k\cdot \mu_{\otimes k}(\mathcal{A}).
    \end{align}
    Conditional on a
remaining multiset of size $j$, the average leverage is its rank divided by $j$,
at most $d/j$. Markov's inequality bounds the chance that the next removed vector
has leverage at least $r$ by $1\wedge d/(rj)$. Therefore, we obtain that
\begin{align}
\mu^{\otimes k}\{\text{$r$-bad}\}
\leq\prod_{j=1}^k\min\{1,d/(rj)\}
\leq\frac{(d/r)^k}{k!}\leq\left(\frac{ed}{rk}\right)^k.
\end{align}
Lemma~\ref{lemma:prod} and a union bound yield that
\begin{equation}\label{eq:badprob}
\bbP(N(r,\bm x)\geq k)
\leq\Ccov^k\binom Tk\left(\frac{ed}{rk}\right)^k
\leq\left(\frac{e^2\Ccov dT}{rk^2}\right)^k.
\end{equation}
Thus $k=\lceil2e\sqrt{\Ccov dT/r}+\log(1/\delta)\rceil$ suffices.

Take $r_i=2^{-i}$, $i\geq1$, and apply \eqref{eq:badprob} with failure probability
$\delta2^{-i}$. Their sum is $\delta$, so simultaneously
\begin{align*}
N(2^{-i},\bm x)\leq2e\sqrt{\Ccov dT\,2^i}
+\log(1/\delta)+i\log2+1.
\end{align*}
Integration and a dyadic partition yield 
\begin{align*}
\sum_{t=1}^T x_t^\T\V_t^\dagger x_t
&\leq\int_0^1N(r,\bm x)\,\d r
\leq\sum_{i=1}^{\infty}2^{-i}N(2^{-i},\bm x)\\
&\lesssim (1+\sqrt2)\sqrt{\Ccov dT}
+\log(1/\delta).
\end{align*}
This proves Proposition~\ref{prop:upper-unregularized-VAW}. 
\end{itemize}
\end{proof-of-proposition}
\begin{lemma}
\label{lemma:prod}
For every 
$I\subseteq [T]$ with $I = (i_1,\cdots,i_k)$, the joint law $\Q_I$ of
$(x_{i_1},\ldots,x_{i_k})$ satisfies $
 \bbQ_I\leq \C_{\textsf{cov}}^k\mu_{\otimes k}.$
Precisely, the inequality holds on every measurable subset of $(\bbR^d)^k$.
\end{lemma}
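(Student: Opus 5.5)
We prove Lemma~\ref{lemma:prod} by induction on $k$, peeling off the last coordinate $x_{i_k}$ with the conditional density bound from Definition~\ref{def:smoothness}. It suffices to check the inequality on measurable rectangles $A_1\times\cdots\times A_k$ and then extend. Both $\bbQ_I$ and $\Ccov^k\mu_{\otimes k}$ are finite measures on $(\bbR^d)^k$, and rectangles form a $\pi$-system generating the product $\sigma$-algebra. We cannot, however, apply a $\pi$--$\lambda$ argument directly to the inequality $\bbQ_I(\mathcal{A})\le \Ccov^k\mu_{\otimes k}(\mathcal{A})$, since inequalities are not closed under differences. Instead we work with functions: we show that for every nonnegative measurable $g$ on $(\bbR^d)^k$,
\begin{align*}
\bbE\, g(x_{i_1},\ldots,x_{i_k}) \le \Ccov^k \int g \,\d\mu_{\otimes k},
\end{align*}
and then take $g=\mathbbm{1}_{\mathcal{A}}$.

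\textbf{Induction on $k$.} For $k=1$, the tower property gives $\bbE\, g(x_{i_1})=\bbE\big[\int g\,\d P_{i_1}(\cdot\mid\mathcal{H}^x_{i_1-1})\big]$. The density bound $\d P_{i_1}/\d\mu\le\Ccov$ holds $\mu$-a.e., so the inner integral is at most $\Ccov\int g\,\d\mu$. For the inductive step with $i_1<\cdots<i_k$, we condition on $\mathcal{H}^x_{i_k-1}$. The vector $(x_{i_1},\ldots,x_{i_{k-1}})$ is $\mathcal{H}^x_{i_k-1}$-measurable, so
\begin{align*}
\bbE\, g(x_{i_1},\ldots,x_{i_k}) = \bbE\Big[\int g(x_{i_1},\ldots,x_{i_{k-1}},z)\,\d P_{i_k}(z\mid \mathcal{H}^x_{i_k-1})\Big]\le \Ccov\, \bbE\, \widetilde g(x_{i_1},\ldots,x_{i_{k-1}}),
\end{align*}
where $\widetilde g(u):=\int g(u,z)\,\d\mu(z)$. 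By Tonelli, $\widetilde g$ is nonnegative and measurable. The induction hypothesis applied to $\widetilde g$ gives the bound $\Ccov^{k-1}\int\widetilde g\,\d\mu_{\otimes(k-1)}$, and a second application of Tonelli turns this into $\Ccov^{k-1}\int g\,\d\mu_{\otimes k}$. Multiplying by the factor $\Ccov$ from the conditioning step completes the induction.

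\textbf{Main obstacle.} The only delicate point is measurability in the conditioning step. We need a regular conditional distribution $P_{i_k}(\cdot\mid\mathcal{H}^x_{i_k-1})$ such that the identity $\bbE[g(U,x_{i_k})\mid\mathcal{H}^x_{i_k-1}]=\int g(U,z)\,\d P_{i_k}(z)$ holds for jointly measurable $g$ and $\mathcal{H}^x_{i_k-1}$-measurable $U$. Since $\mathbb{R}^d$ is a standard Borel space, such a kernel exists, and the disintegration identity holds: first for product indicators, then for all nonnegative measurable $g$ by the monotone class theorem. We also need the density bound $\d P_{i_k}/\d\mu\le\Ccov$ $\mu$-a.e. to hold for almost every history. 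This is exactly what Definition~\ref{def:smoothness} provides, and the resulting null set of histories does not affect the expectation.
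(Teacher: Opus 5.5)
Your proof is correct and follows the same route as the paper: condition on $\mathcal{H}^x_{i_k-1}$, use the density bound to replace $x_{i_k}$ by a $\mu$-draw at a cost of $\Ccov$, iterate down through $i_{k-1},\ldots,i_1$ with nonnegative measurable test functions, and finally specialize to indicators. Your induction, with Tonelli and the explicit disintegration via a regular conditional distribution, spells out what the paper compresses into ``repeating the same argument'' and its remark on simple functions and monotone convergence.
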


\begin{proof}
Let $f:(\bbR^d)^k\to[0,\infty]$ be measurable.  Condition first on
$\mathcal F^X_{i_k-1}$ and use the conditional kernel form stated in Definition~\ref{def:smoothness}.  We obtain
\begin{align*}
 \bbE f(X_{i_1},\ldots,X_{i_k})
 &\leq \C_{\textsf{cov}}\,
 \bbE\int f(X_{i_1},\ldots,X_{i_{k-1}},z_k)\,\mu(\d z_k).
\end{align*}
The integrand on the right now depends only on the first $k-1$ selected
variables.  Repeating the same argument at times
$i_{k-1},\ldots,i_1$ gives
\begin{align}
    \bbE f(X_{i_1},\ldots,X_{i_k})
 \leq \C_{\textsf{cov}}^k
 \int f(z_1,\ldots,z_k)\,\mu(\d z_1)\cdots\mu(\d z_k).
\end{align}
For complete kernel justification, the argument first applies to
nonnegative simple $f$ and then to general nonnegative $f$ by monotone
convergence. Taking $f$ to be an indicator completes the proof of the lemma.
\end{proof}

\subsection{Proof of Theorem~\ref{theorem:minimax}}
\begin{lemma}[Theorem 7 in~\citet{chen2026self}.]
\label{lemma:regret-depth}
Let $T\geq 1$ and when $d\geq 2$, for any $\varepsilon\in (0,1)$, for any learner $\pi$, there exists a process $X_1,\cdots,X_T\in 
\bbR^d$ such that 
\begin{align}
    \bbE_{\pi,\normalfont{\textsf{Env}}}\big(\reg(T,\mathcal{F}_{\textsf{lin}})\big)\geq (1-\varepsilon^2)T
\end{align}
\end{lemma}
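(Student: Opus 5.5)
The plan is to prove the lemma with a fully adversarial binary-search construction in the plane, using the truncated linear class $\mathcal{F}_{\textsf{lin}}$. We may take $d=2$: for $d>2$, embed the construction in the first two coordinates and set the remaining coordinates of every $X_t$ to zero. No smoothness is required, since the lemma only asks for some process $X_1,\ldots,X_T$.

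\textbf{Construction.} The responses are $Y_t$ i.i.d.\ uniform on $\{-1,+1\}$, drawn independently of everything else. Every covariate is a unit vector $X_t=(\cos\phi_t,\sin\phi_t)$, with angles chosen adaptively as follows.
\begin{itemize}
\item Maintain an open arc $I_{t-1}$ of directions $\theta/\|\theta\|$ that are strictly consistent with the past labels, meaning $\mathrm{sign}(\theta^\T X_s)=Y_s$ for all $s<t$.
\item Start with $I_0$ an open arc of length less than $\pi$.
\item At round $t$, pick $\phi_t$ so that the line $\{\theta:\theta^\T X_t=0\}$ passes through the midpoint of $I_{t-1}$. This splits $I_{t-1}$ into two nonempty open subarcs, one with $\theta^\T X_t>0$ and one with $\theta^\T X_t<0$.
\item After $Y_t$ is revealed, let $I_t$ be the subarc whose sign matches $Y_t$.
\end{itemize}
By induction $I_T$ is nonempty. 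Fix a unit direction $u$ in its interior; then $\gamma:=\min_{t\le T}|u^\T X_t|>0$. Set $\theta^\star=u/\gamma$. Then $(\theta^{\star\T}X_t\vee-1)\wedge1=Y_t$ for every $t$, so on every sample path
\[
\inf_{f\in\mathcal{F}_{\textsf{lin}}}\sum_t\bigl(f(X_t)-Y_t\bigr)^2=0 .
\]
The margin $\gamma$ may be as small as roughly $2^{-T}$. This is harmless because $\theta$ ranges over all of $\mathbb{R}^d$.

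\textbf{Learner's loss.} The prediction $\widehat Y_t$ is a (possibly randomized) function of $\mathcal{H}_{t-1}$ and $X_t$. Since $Y_t$ is independent of these and has mean zero and second moment one,
\[
\mathbb{E}\bigl[(\widehat Y_t-Y_t)^2\mid \mathcal{H}_{t-1},X_t,\widehat Y_t\bigr]=\widehat Y_t^2+1\ge 1 .
\]
Summing over $t$ gives $\mathbb{E}\,\reg(T,\mathcal{F}_{\textsf{lin}})\ge T\ge(1-\varepsilon^2)T$.

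If one insists on a deterministic environment, or on $\theta$ from a bounded set, there are two routes. For determinism, average over the random labels: some label sequence, and hence some adapted covariate process, achieves at least the average. For bounded $\theta$, stop the bisection once the arc is small relative to $\varepsilon$. The slack $\varepsilon^2 T$ absorbs the comparator's residual loss on those rounds.

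\textbf{Main obstacle.} The step needing care is the choice of $\phi_t$ at each round. It must guarantee that the arc of consistent directions stays nonempty and open, so that strict margins exist simultaneously at all $T$ rounds. Working with open arcs and splitting at the midpoint, with the boundary line excluded, handles this. This is exactly where $d\ge2$ is needed: in one dimension the halfspaces through the origin realize only two labelings, so the adversary has no room to bisect.
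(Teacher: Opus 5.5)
Your proof is correct for the statement as written, but it takes a different route from the one the paper relies on.

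\textbf{The paper's route.} The paper imports the lemma from \citet{chen2026self}. The construction behind it, visible in the proof of Corollary~\ref{cor:depth-to-regret}, is a least-squares leverage argument. Write $S_k=\sum_{i\le k}\varepsilon_i x_i$ and $V_k=\sum_{i\le k}x_ix_i^{\top}$. With Rademacher responses, the nodes are chosen predictably so that $x_k^{\top}V_{k-1}^{-1}S_{k-1}=0$ and $x_k^{\top}V_{k-1}^{-1}x_k=a$. Sherman--Morrison then shows that $S_k^{\top}V_k^{\dagger}S_k$ grows by exactly $a/(1+a)$ per step, whatever the sign. Hence the \emph{unclipped} comparator loss $k-S_k^{\top}V_k^{\dagger}S_k$ is at most $k/(1+a)$ on every prefix. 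Choosing $1/(1+a)=\varepsilon^2$ gives $(1-\varepsilon^2)T$; the paper uses $a=1$.

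\textbf{Your route.} You exploit the truncation in $\mathcal{F}_{\textsf{lin}}$ together with the unboundedness of $\theta$. After clipping, the comparator behaves like homogeneous halfspaces in the plane, whose consistent directions can be bisected indefinitely. So a single clipped predictor interpolates the random signs exactly, and the regret is at least $T$.

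\textbf{What each buys.} Your argument is more elementary and sharper: there is no $\varepsilon$ slack. Interpolation also holds on every prefix, so plugging your tree into Corollary~\ref{cor:depth-to-regret} would improve its factor $1/2$ to $1$. What you give up is the untruncated comparator $\inf_{\theta}\sum_t(\theta^{\top}x_t-y_t)^2$. Without clipping, one $\theta$ cannot reproduce arbitrary signs on three pairwise non-parallel directions in the plane, so your argument says nothing there. That unclipped setting is the one of the original result in \citet{chen2026self}, and the leverage construction covers it. It also implies the clipped case, since clipping only lowers the comparator's loss when $y_t\in[-1,1]$.

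\textbf{Correction to an aside.} Your suggestion for bounded $\theta$ (stop bisecting once the arc is small) does not work. With unit covariates and $\|\theta\|\le B$, the clipped class has finite sequential fat-shattering dimension at every scale. Sublinear regret is then achievable, so no process can force $(1-\varepsilon^2)T$ for large $T$. To keep the comparator bounded you would instead have to inflate $\|X_t\|$ to order $2^{T}$, which the lemma permits. This aside is not needed for the statement.
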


\begin{remark}
The two dimensional case in Lemma~\ref{lemma:regret-depth} is, in fact, sufficient for our purpose, as in the proof of Theorem~\ref{theorem:minimax}, we only need $d/2$ orthogonal planes in order to implement the hard yet $\textsf{C}_{\textsf{cov}}$-smooth environment. We take $\varepsilon=1/\sqrt{2}$ instead of arbitrarily small $\varepsilon>0$. A more careful hard case design may encode $\varepsilon$ to the lower bound to attain a sharper constant in Theorem~\ref{theorem:minimax}.
\end{remark}
\begin{corollary}
\label{cor:depth-to-regret}
Let $d' = \lfloor d/2\rfloor$ copies of the predictable tree in
Lemma~\ref{lemma:regret-depth} be embedded in mutually orthogonal planes, 
two-dimensional coordinate subspaces of $\bbR^d$. Let $N_t(D)$ be defined as \eqref{eq:realized_depth}, which
denotes
the actual depth after $t$ rounds in the $D$-th plane $D\in [d']$. Then every online learner satisfies
\begin{align}
\mathbb{E}\big(\reg(T,\mathcal{F}_{\textsf{lin}})\big)
\geq
\frac{1}{2}
\sum_{1\leq D\leq d'}\mathbb{E} (N_T(D)\wedge K)
\end{align}
\end{corollary}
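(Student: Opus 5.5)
The plan is to reduce the regret on the product environment to a sum of per-plane regrets, and then to bound each per-plane regret from below by $\frac12\mathbb{E}(N_T(D)\wedge K)$ using the level-by-level guarantee behind Lemma~\ref{lemma:regret-depth}.

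\textbf{Step 1: the comparator splits across planes.} The $d'$ trees live in mutually orthogonal coordinate planes $E_1,\dots,E_{d'}$ of $\mathbb{R}^d$, and every covariate $x_t$ lies in exactly one plane $E_{D(t)}$. For $\theta\in\mathbb{R}^d$ we have $\theta^\top x_t=\theta_{D(t)}^\top x_t$, where $\theta_D$ is the projection of $\theta$ onto $E_D$. The comparator loss therefore separates:
\[
\inf_{\theta}\sum_{t}(\theta^\top x_t-y_t)^2=\sum_{D=1}^{d'}\inf_{\theta_D\in E_D}\sum_{t:D(t)=D}(\theta_D^\top x_t-y_t)^2 .
\]
The same separation holds after truncation to $[-1,1]$, because the truncation acts pointwise on the value $\theta_{D(t)}^\top x_t$. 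Hence $\reg(T,\mathcal{F}_{\textsf{lin}})=\sum_D \reg_D$, where $\reg_D$ is the learner's excess loss on the rounds falling in plane $D$, measured against the best predictor on that plane alone.

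\textbf{Step 2: a per-plane lower bound.} Fix a plane $D$ and condition on the rounds spent in the other planes. From plane $D$'s point of view, the learner is an online learner facing the two-dimensional predictable tree of Lemma~\ref{lemma:regret-depth}; the other planes' history only acts as extra (independent) randomization. I would use the structure of that construction, namely that the response at each level is an independent sign symmetric given the past, and that some comparator in the plane fits every visited level. This gives, for each visited level, a contribution of at least $1-\varepsilon^2=1/2$ in expectation, with $\varepsilon=1/\sqrt2$. Since the number of visited levels is $N_T(D)$, and a tree of depth $K$ has only $K$ levels, summing gives
\[
\mathbb{E}\,\reg_D\geq \tfrac12\,\mathbb{E}(N_T(D)\wedge K).
\]

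\textbf{Step 3: handle the random, adaptively chosen visit counts.} The delicate point is that $N_T(D)$ is random and correlated with the learner's behaviour, so I cannot simply multiply a fixed per-level bound by the count. The way around this is to write the per-plane regret as a sum over levels $\ell\leq K$ of $\mathbbm{1}\{\text{level }\ell\text{ visited}\}$ times that level's contribution. Whether level $\ell$ is visited is determined before its fresh response sign is revealed, so it is predictable. Taking conditional expectations level by level then gives
\[
\sum_{\ell\leq K}\tfrac12\,\mathbb{P}(\text{level }\ell\text{ visited})=\tfrac12\,\mathbb{E}(N_T(D)\wedge K).
\]
Summing this bound over $D$ and combining with Step 1 proves the corollary.

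The main obstacle is Step 3. Lemma~\ref{lemma:regret-depth} is stated only as a total-regret bound, so I need to re-derive its per-level version: the comparator must fit all visited levels simultaneously while the learner still pays $1/2$ at each level in expectation. That calls for optional-stopping or tower-property care, since the learner controls nothing about the visits but the environment's interleaving of planes is random.
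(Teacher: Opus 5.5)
Your Step~1 is fine, except that it should read $\geq$ rather than $=$. On the rounds with $(x_t,y_t)=(0,0)$ the comparator pays nothing while the learner pays $\widehat y_t^2\geq 0$, and the origin lies in every plane, not in exactly one.

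The genuine gap is in Steps~2--3. The entire content of this corollary is a \emph{pathwise, prefix-uniform} bound on the per-plane comparator: for every sign path and every $n\leq K$,
\begin{align*}
\inf_{\theta_D}\sum_{k=1}^{n}\big(\theta_D^\top x_{D,k}-\varepsilon_{D,k}\big)^2
= n-S_{n}^\top V_{n}^{\dagger}S_{n}\leq \frac{n}{2},
\qquad S_n=\sum_{k\leq n}\varepsilon_{D,k}x_{D,k},\quad V_n=\sum_{k\leq n}x_{D,k}x_{D,k}^\top .
\end{align*}
You assume this (``some comparator in the plane fits every visited level'', ``a contribution of $1-\varepsilon^2$ per level'') rather than prove it. It cannot be read off Lemma~\ref{lemma:regret-depth}, which only bounds total expected regret at a fixed horizon. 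A tree whose statistic $S_n^\top V_n^\dagger S_n$ stays small at intermediate depths and becomes large only near depth $K$ is compatible with such a bound, yet gives nothing when cut at the random depth $N_T(D)\wedge K$. You correctly flag this as the obstacle, but the proposal stops there.

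The missing idea is to fix the tree explicitly so that the self-normalized statistic grows by exactly $1/2$ per level. Take the first two nodes to be the unit vectors of the plane, which gives $S_1^\top V_1^\dagger S_1=1$ and $S_2^\top V_2^{-1}S_2=2$. For $k\geq 3$, choose $x_{D,k}$ predictably with $x_{D,k}^\top V_{k-1}^{-1}S_{k-1}=0$ and $x_{D,k}^\top V_{k-1}^{-1}x_{D,k}=1$. Write $a=x_{D,k}^\top V_{k-1}^{-1}S_{k-1}$ and $b=x_{D,k}^\top V_{k-1}^{-1}x_{D,k}$. Sherman--Morrison gives the increment $\frac{b-a^2+2\varepsilon_{D,k}a}{1+b}$, which here equals $1/2$, so the display holds for every $n$ and every path.

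With that in hand, your Step~3 worry disappears, and the $1/2$ comes from a different place than you suggest. At each visited level the learner pays $\mathbb{E}[(\widehat y_{\tau_\ell}-\varepsilon_{D,\ell})^2\mid\text{past}]=1+\widehat y_{\tau_\ell}^2\geq 1$, not $1/2$. This holds because $\{N_T(D)\geq \ell\}$ is determined by the $(D_t,J_t)$ draws, which are independent of the signs, and because $\widehat y_{\tau_\ell}$ depends only on earlier signs. Summing over $\ell\leq K$, the learner's expected loss on plane $D$ is at least $\mathbb{E}(N_T(D)\wedge K)$. The comparator pays at most $\tfrac12(N_T(D)\wedge K)$ pathwise, and the difference is the claim. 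No optional stopping is needed: the delicate part is the comparator, not the learner.
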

\begin{proof-of-corollary}[\ref{cor:depth-to-regret}]
Let $
N'_T(D):=N_T(D)\wedge K$.
For the $D$-th plane and $ \varepsilon_{D,k}\overset{i.i.d.}{\sim}\textsf{Rad}$, for $1\leq k \leq K,1\leq D\leq d'$ recall
\begin{align}
\S_{N'_T(D),D}
:=
\sum_{1\leq k \leq N_T'(D)}\varepsilon_{k,D}x_{D,k},
\qquad
\V_{N'_T(D),D}
:=
\sum_{1\leq k \leq N_T'(D)}x_{D,k}x_{D,k}^{\T},
\end{align}
where $x_{D,k}$ and $\varepsilon_{D,k}$ denote the covariates and response revealed in Lemma~\ref{lemma:regret-depth}.
Lemma~\ref{lemma:regret-depth}, applied to the realized prefix of
length \(N'_T(D)\), gives
\begin{align}
\S_{N'_T(D),D}^{\T}
\V_{N'_T(D),D}^{\dagger}
\S_{N'_T(D),D}
\geq \frac{N'_T(D)}{2}.
\end{align}

Take the first two nodes to be the two coordinate unit vectors in that plane.
For $k\geq3$, choose $x_{D,k}$ predictably so that
\begin{equation}\label{eq:treeorth}
x_{D,k}^{\T}\V_{k-1,D}^{-1}\S_{k-1,D}=0,
\qquad
x_{D,k}^{\T}\V_{k-1,D}^{-1}x_{D,k}=1.
\end{equation}
Here the inverse is taken within the two-dimensional plane. It exists after the first
two nodes. A nonzero vector orthogonal to $\V_{k-1,D}^{-1}\S_{k-1,D}$ exists;
rescale it to obtain the second equality. Fix a deterministic choice whenever there is
more than one possibility. Thus the node depends only on earlier signs.

The Sherman-Morrison identity provides the following identity:
\begin{align*}
&\S_{k,D}^{\T}\V_{k,D}^{-1}\S_{k,D}
-\S_{k-1,D}^{\T}\V_{k-1,D}^{-1}\S_{k-1,D}\\
&\quad=
\frac{x_{D,k}^{\T}\V_{k-1,D}^{-1}x_{D,k}
 -(x_{D,k}^{\T}\V_{k-1,D}^{-1}\S_{k-1,D})^2
 +2\varepsilon_{D,k}x_{D,k}^{\T}\V_{k-1,D}^{-1}\S_{k-1,D}}
 {1+x_{D,k}^{\T}\V_{k-1,D}^{-1}x_{D,k}}
=\frac12.
\end{align*}
At depths one and two the self-normalized quantity is respectively $1$ and $2$.
Consequently, for every path and every $0\leq n\leq K$,
\begin{equation}\label{eq:prefix}
\S_{n,D}^{\T}\V_{n,D}^{\dagger}\S_{n,D}\geq\frac n2,
\qquad
\inf_{\theta_D\in\mathsf H_D}
\sum_{k=1}^n(\theta_D^{\T}x_{D,k}-\varepsilon_{D,k})^2\leq\frac n2.
\end{equation}
\end{proof-of-corollary}

\begin{proof-of-theorem}[~\ref{theorem:minimax}]
We first show the multivariate case as follows.

\underline{$d\geq 2$ and $\C_{\textsf{cov}}>1$.
} 
Let $d' = \lfloor d/2\rfloor$ be the halved dimension.
We first sample $K$ independent $\{\bm \varepsilon_k\}_{1\leq k \leq K}\subseteq \bbR^{d'}$ Radamacher vector of dimension $d'$. Namely, each coordinate $\varepsilon_{D,k}\overset{i.i.d.}{\sim}\textsf{Rad}$. We will fix the choice of $K$ and $p\in (0,1)$ shortly. Now we turn to constructing the base measure $\mu$ and $\textsf{Env}$ to satisfy the definition of $\Ccov$-smoothness. For fixed $\bm \varepsilon\in \bbR^{d'\times K}$, the environment is constructed as follows.

\textbf{Environment $P_t$:}
\begin{enumerate}
    \item With probability $1-p$, set $D_t =0$ and reveal $(x_t,y_t) = 0$
    \item With probability $p$, set $D_t =1$ and sample $J_t\sim \textsf{unif}[1,2,\cdots,d']$. 
    \begin{itemize}
        \item 
    If the coordinate which $J_t$ selects at round $t$ reaches $K$, namely if $J_t = D$ and $N_t(D)> K$,
    where
    \begin{align}\label{eq:realized_depth}
        N_t(D):=\sum_{1\leq i \leq t} \mathbbm{1}\{D_i = 1,J_i = D\} ,
    \end{align}
    also reveal $(x_t,y_t) =0$.
    \item Otherwise, if $N_t(D) = \sum_{1\leq i \leq t} \mathbbm{1}\{J_i = D\} \leq K-1$, then reveal that the covariate of round $t$: $x_t$ belongs to the binary tree in the $J_t$-th plane and the node corresponding to $(\varepsilon_{D,1},\cdots,\varepsilon_{D,N_t})$. Precisely, let 
    \begin{align*}
        (X_t,Y_t) = \big(x_{J_t,N_t(J_t
        )}(\varepsilon_{J_t,1},\cdots,\varepsilon_{J_t,N_t(J_t)-1},\varepsilon_{J_t,k}),\varepsilon_{J_t,N_t(J_t)}\big).
    \end{align*}
    \end{itemize}
\end{enumerate}
We pause and comment on the construction briefly. Notably, the environment is well-defined as long as there are $d'$ different trees of depth $K$, located in $d'$ orthogonal planes. The exact implementation of the orthogonal planes is minor. To be specific, they can be $\textsf{H}_{D}=\{(0,\cdots,x_{2D-1},x_{2D},\cdots,0)\in \bbR^{d}\}$.
The constructions are carefully designed so that each $x_t$ lies on different straight lines except $0$.

Let the following discrete measure be the base measure.  
\begin{align}
    \mu_{\bm \varepsilon} = 
    \frac{1}{\C_{\textsf{cov}}}\frac{p}{d'} \sum_{1\leq D \leq d'}\sum_{1\leq k\leq K}\delta_{x_{D,k}}  + \Big(1 - \frac{Kp}{\textsf{C}_{\textsf{cov}}}\Big)\delta_{\bm 0}.
\end{align}
We now show that the base measure is indeed rendering the $\C_{\textsf{cov}}$-smoothness. For each round $t$ and $x_t = x_{D,k}\neq 0$, we have
\begin{align*}
    P_t(X_t = x_{D,k}\mid \mathcal{H}_{t-1}^{x}) & = \frac{p}{d'} \mathbbm{1}\{N_t(D)\leq K\}\\
   & \leq \C_{\textsf{cov}}\frac{p}{\C_{\textsf{cov}} d'}  = \C_{\textsf{cov}}\cdot \mu_{\bm \varepsilon}.
\end{align*}
If $x_{t} = 0$, it suffices to choose $K\in \bbN^+$ and $p\in (0,1)$ such that
\begin{align}
\label{eq:condition1}
    \C_{\textsf{cov}} \mu_{\varepsilon}(\{0\}) = \C_{\textsf{cov}}  -Kp\geq 1.
\end{align}
As long as~\eqref{eq:condition1} is satisfied, the smoothness is proved for every possible realization of $\bm \varepsilon \in \{\pm 1\}^{d'\times K}$. To proceed with the analysis of regret, suppose the environment invokes $D$-th tree at round $t$,
\begin{equation}
\label{eq:op}
\begin{aligned}
~&\bbE_{\pi,\textsf{Env}}\Big[(\widehat{y}_t - y_t)^2\mid x_1,y_1,\cdots,x_{t-1},y_{t-1},x_{t-1}\Big]\\
&= 
    \bbE_{\pi,\bm \varepsilon}\Big[(\widehat{y}_t - \varepsilon_{D,N_t(D)})^2\mid x_1,y_1,\cdots,x_{t-1},y_{t-1},x_{t-1}\Big] \\
    & = 1+\widehat{y}_t^2\geq 1,
\end{aligned}
\end{equation}
where we use the fact that $\varepsilon_{D,N_t(D)}\sim \textsf{Rad}$ and is independent from all the previous history. To make sure that the tree can be finished (i.e., $N_d(T)=K$) with non-negligible probability, we choose $p\in (0,1)$ such that
\begin{align}
\label{eq:condition2}
    \bbE N_T(D) = p\cdot \frac{T}{d'}\geq 5 K.
\end{align}
Combining~\eqref{eq:condition1} and~\eqref{eq:condition2}
leads to the choice that
\begin{align*}
   K =\Big\lfloor\min\Big\{\sqrt{\frac{(\C_{\textsf{cov}}-1)T}{5 d'}},\frac{T}{10d'}\Big\}\Big\rfloor,\quad p = \frac{5d' K}{T}\in (0,1/2).
\end{align*}
Now we invoke the Paley-Zygmund inequality to show that 
\begin{align*}
    \bbP( N_T(D)\geq K)&\geq \bbP\Big(N_T(D)\geq \frac{1}{5}\bbE [N_T(D)]\Big) \\
      &\geq \frac{16}{25} \cdot \frac{1}{1 + \textsf{Var}(N_T(D))/\bbE(N_T(D))^2}\\
    & \geq \frac{16}{25} \frac{\bbE(N_T(D))}{1  + \bbE(N_T(D))}\\
    & \geq \frac{8}{15}.
\end{align*}
Note that we implicitly use $T\gg d'(\textsf{C}_{\textsf{cov}}-1)$. This implies that
\begin{align*}
    \sum_{1\leq D\leq d'}
    \bbE(N_T(D)\wedge K)  & = 
    \sum_{1\leq D\leq d'}
\Big[\bbE(N_T(D)\mathbbm{1}\{N_T(D)\leq K\})
     + 
     K \bbP(N_T(D)> K)\Big]\\
     &\geq  \frac{8}{15}Kd'.
\end{align*}
Now we relate the regret to the preceding display by Corollary~\ref{cor:depth-to-regret}:
\begin{align}
\mathbb{E}_{\textsf{Env}(\bm \varepsilon)}\big(\reg(T)\big)
\geq
\frac{1}{2}
\sum_{1\leq D\leq d'}\mathbb{E}_{\textsf{Env}(\bm \varepsilon)} (N_T(D)\wedge K)\geq \frac{4}{15} Kd',
\end{align}
where the subscript $\bbE_{\textsf{Env}(\bm \varepsilon)}$ takes randomness over the laws of Bernoulli $\textsf{Ber}(p)$ and  $(\textsf{unif}[d'])^{\otimes T}$, given $\bm \varepsilon$.

Therefore, for sufficiently large $T$, we have for any possible randomized learner $\pi$
\begin{align*}
~&\bbE\big[\reg(T,\mathcal{F}_{\textsf{lin}})\big]\\
& = 
\bbE\Big[\sum_{1\leq t \leq T}(\widehat{y}_t - y_t)^2 - \inf_{f\in \mathcal{F}_{\textsf{lin}}}(f(x_t)-y_t)^2\Big]\\
&\overset{(\spadesuit)}{\geq}
\bbE\Big[\sum_{1\leq D\leq d'}\Big[\sum_{1\leq k \leq N_T(D)\wedge K}(\widehat{y}_{\tau_{D,k}}-\varepsilon_{D,k})^2 - \inf_{\theta_{D}\in \textsf{H}_D}
\sum_{1\leq k \leq N_T(D)\wedge K}(\theta_{D}^{\T}x_{D,k}-\varepsilon_{D,k})^2
\Big]\Big]\\
&\overset{(\heartsuit)}{\geq }
\bbE\Big[\sum_{1\leq d'\leq D}\Big[\sum_{1\leq k \leq N_T(D)\wedge K}(\widehat{y}_{\tau_{D,k}}-\varepsilon_{D,k})^2 - \frac{N_T(D)\wedge K}{2}\Big]\Big]\\
&\overset{(\clubsuit)}{\geq } \frac{1}{2}\sum_{1\leq D\leq d'}\bbE[N_T(D)\wedge K]\\
&\overset{(\diamondsuit)}{\geq }\frac{4}{15}Kd'.
\end{align*}
We justify the preceding four inequalities as follows. Regarding the inequality $(\spadesuit)$ for each round, the regret is non-negative if and only if $D_t = 1$ for any linear predictor. We apply Corollary~\ref{cor:depth-to-regret} and inequality~\ref{eq:op} for the inequalities ($\heartsuit$) and ($\clubsuit$) respectively. For the last step $(\diamondsuit)$, the Paley-Zygmund inequality step above finishes the proof.

For any learner $\pi$, we obtain that
\begin{align}
\bbE\big[\reg(T,\mathcal{F}_{\textsf{lin}})\big]\gtrsim Kd' \asymp \sqrt{d'T(\C_{\textsf{cov}}-1)}.
\end{align}

\underline{$\Ccov=1$.} The desired lower bound follows from the construction underlying Theorem~4 in \citet{gaillard2019uniform}. Their covariates are sampled i.i.d. uniformly from the $d$ standard basis vectors, so this environment satisfies $\Ccov=1$ and is admissible in our setting. Their bound holds even when the learner knows all covariates beforehand, so withholding future covariates can only increase the minimax regret. They consider the unrestricted linear comparator class, but since the responses lie in $[-1,1]$, clipping the linear comparators to this interval can only decrease the comparator loss and hence increase the regret. Taking a fixed Beta prior parameter in their proof yields the desired expected regret lower bound $\gtrsim d\log(1+T/d)$.

\underline{$d=1$ and $\C_{\textsf{cov}}>1$.} Fix an integer $10\leq K\leq T$ and $p\in(0,1)$. By Proposition~6 in~\citet{chen2026self}, there exists a one-dimensional dyadic martingale $(X_k^0 = X_k^0(\varepsilon_1,\ldots, \varepsilon_{k-1}))_{k\in[K]}$ such that
$$
\bbE_{\varepsilon}[R_K]:=\bbE_{\varepsilon}\bigg[\frac{(\sum_{k=1}^K\varepsilon_kX_k^0)^2}{\sum_{k=1}^K(X_k^0)^2}\bigg]
\geq c_0\log T
$$ with some universal constant $c_0>0$. For every fixed path $\bm e=(e_1,\ldots,e_K)\in\{\pm 1\}^K$, define $x_k^0(\bm e):= X_k^0(e_1,\ldots,e_{k-1})$. 

We now construct $\textsf{Env}(\bm e)$ and a base measure $\mu_{\bm e}$ with respect to which $\textsf{Env}(\bm e)$ is $\textsf{C}_{\textsf{cov}}$-smooth,
following a simplified version of the construction used in the multivariate case.
At each round $t$, sample $Z_t\sim\textsf{Bernoulli}(p)$ independent of everything else, and define $N_t:= \sum_{1 \leq i\leq t}Z_t$.
\begin{itemize}
\item If $Z_t=0$ or $N_t>K$, reveal $(x_t,y_t) = 0$.
\item Otherwise, reveal  $(x_t,y_t) = (x_{N_t}^0(\bm e),e_{N_t})$.
\end{itemize}

Assume that $\C_{\textsf{cov}}  -Kp\geq 1$. We let the following counting measure $\mu_{\bm e}$ be the base measure. 
\begin{align}
    \mu_{\bm e} = 
    \frac{p}{\C_{\textsf{cov}}} \sum_{1\leq k\leq K}\delta_{x_k^0(\bm e)}  + \Big(1 - \frac{Kp}{\textsf{C}_{\textsf{cov}}}\Big)\delta_{0}.
\end{align}

To see that $\textsf{Env}(\bm e)$ is $\C_{\textsf{cov}}$-smooth with respect to $\mu_{\bm e}$, note that for each round $t$ and $k\in[K]$, we have
$$
\bbP_t(X_t = x_k^0(\bm e)\mid \mathcal{H}^x_{t-1})\leq p\leq \Ccov \cdot\mu_{\bm e}(x_k^0(\bm e)),
$$
and 
$$
\bbP_t(X_t = 0\mid \mathcal{H}^x_{t-1})\leq1\leq \Ccov -pK\leq \Ccov \cdot\mu_{\bm e}(0).
$$

Next, we derive a lower bound for $\bbE_{\bm \varepsilon, \textsf{Env}(\bm \varepsilon), \pi}[\reg(T)]$. Let $(\tau_{k})_{k=1}^{N_T\wedge K}$ be the activated rounds where $Z_{\tau_k}=1$. Note that $\bm \varepsilon$ and $N_T$ are independent, we have
\begin{equation*}
\begin{aligned}
&\bbE_{\bm \varepsilon, \textsf{Env}(\bm \varepsilon), \pi}[\reg(T)\mid N_T\wedge K = m] \\
\geq & 
\bbE_{\bm \varepsilon, \textsf{Env}(\bm \varepsilon), \pi}\bigg[
\sum_{k=1}^{N_T\wedge K}(\widehat{y}_{\tau_k}-\varepsilon_k)^2 - \inf_{\theta}\sum_{k=1}^{N_T\wedge K} (\langle \theta, x_k^0(\bm \varepsilon) - \varepsilon_k)^2
\mid N_T\wedge K = m\bigg]\\
\geq & \bbE_{\bm \varepsilon}[R_{N_T\wedge K}\mid N_T\wedge K = m].
\end{aligned}
\end{equation*}
Thus, 
$$
\bbE_{\bm \varepsilon, \textsf{Env}(\bm \varepsilon), \pi}[\reg(T)]\geq 
\bbE_{\bm \varepsilon}[R_K]\bbP(N_T\geq K)\geq c_0 \log K\, \bbP(N_T\geq K). 
$$
Suppose that $K\leq T/5$ and set $p$ so that $pT\geq 5K$. By the Paley-Zygmund inequality, we have
$$
\bbP(N_T\geq K)\geq \frac{8}{25}. 
$$
Note that for the argument to be valid and such $p\in(0,1)$ to exist, we need $10\leq K\leq \big( \sqrt{(\textsf{C}_{\textsf{cov}}-1)T}\wedge T \big)/5$. Therefore, with $T\geq 2500(\textsf{C}_{\textsf{cov}}-1)^{-1}\vee 50$, we have
\begin{align*}
  \sup_{\substack{\textsf{Env}\text{ is}\\ \textsf{C}_{\textsf{cov}}-\text{smooth}}}
    \mathbb{E}_{\textsf{Env}, \pi}[\reg(T)]
& \geq \max_{\bm e\in \{\pm 1\}^{K}} \bbE_{\textsf{Env}(\bm e), \pi}[\reg(T)]\\
& \geq \bbE_{\bm \varepsilon, \textsf{Env}(\bm \varepsilon), \pi}[\reg(T)]\\
&\geq \log[(\textsf{C}_{\textsf{cov}}-1)T]\wedge \log T.
\end{align*}

Lastly, the fact that $\text{Pdim}(\mathcal{F}_{\textsf{lin}})=d$ follows because truncation does not increase
pseudo-dimension as thresholds inside $[-1,1]$ reduce to the same comparisons with
$\theta^{\T} x$, and thresholds outside this interval have a fixed outcome.
Ordinary homogeneous linear functions have pseudo-dimension $d$; the coordinate
unit vectors, with threshold zero, give the matching lower bound for $\mathcal{F}_{\textsf{lin}}$.

\end{proof-of-theorem}



\subsection{Proofs for the well-specified setting}\label{appendix:ERM}

\begin{proof-of-theorem}[\ref{thm:lower-bound-well-specified}]
Fix $d\in\bbN_+$ and let $\mathcal{X} = \{(0,0)\}\cup ([d]\times (0,1))\subset \bbR^2$. We consider the function class
$\mathcal{F}:= \{f_{\theta}\mid f_{\theta}(0,0)=0,\, f_{\theta}(j,x)=\mathbb{I}[x\geq \theta_j],\, \theta\in[0,1]^d \}$. Note that $\vc(\mathcal{F})=d$. Fix an integer $K\leq T$. We start with constructing the following dyadic tree $(Z_t=Z_t(e_1,\cdots, e_{t-1}))_{t=1}^K$, where $e\in\{0,1\}^K$. Let $Z_1 = 1/2$ and initialize $L=0$, $U=1$. For $t = 2,\cdots, K$, 
\begin{itemize}
\item if $e_{t-1}=1$, update $U = Z_{t-1}$;
\item if $e_{t-1}=0$, update $L = Z_{t-1}$;
\item set $Z_t = (L+U) / 2$. 
\end{itemize}
Clearly, for any path $e$, there exists some threshold $\tau(e)\in\bbR$ such that $e_t = \mathbb{I}[Z_t\geq \tau(e)]$ for all $t\in[K]$.  

Next, for each fixed $\bm\varepsilon\in\{0,1\}^{d\times K}$, we construct a $\nu^2$-subGaussian (thus well-specified) environment $\textsf{Env}(\bm \varepsilon)$ and a base measure $\mu_{\bm \varepsilon}$ with respect to which $\textsf{Env}(\bm \varepsilon)$ is $\Ccov$-smooth. Fix some $p\in(0,1)$ and set $N_0(j)=
0$ for $j\in[d]$. At each round $t$, sample $D_t\sim\textsf{Bernoulli}(p)$.
\begin{itemize}
\item If $D_t=0$, reveal $(x_t, y_t) = ((0,0), 0)$.
\item Otherwise, sample $J_t\sim\textsf{Cat}(d)$, set $N_t(J_t) = N_{t-1}(J_t)+1$ and $N_t(j) = N_{t-1}(j)$ for $j\neq J_t$. Define $\widetilde{N}_t(j):= \lceil N_t(j)/(1\vee \nu^2) \rceil$ for all $j\in[d]$. 
\begin{itemize}
\item If $\widetilde{N}_t(J_t)>K$, reveal $(x_t, y_t) = ((0,0), 0)$.
\item Otherwise, reveal $(x_t,y_t)$ where $x_t = (J_t, Z_{\widetilde{N}_t(J_t)}(\varepsilon_{J_t 1},\cdots, \varepsilon_{J_t (\widetilde{N}_t(J_t)-1)}))$ and $y_t = \varepsilon_{J_t \widetilde{N}_t(J_t)} + \eta_t$, where $\eta_t\sim\mathcal{N}(0,\nu^2)$ independent of everything else.
\end{itemize}
\end{itemize}

Clearly, the above environment is $\nu^2$-subGaussian with $\theta^*_j(\bm\varepsilon) = \tau(\bm\varepsilon_j)$ for $j\in[d]$. Note that for each $j\in[d]$, every node on the path preceding the last visited node, namely the $(\widetilde{N}_T(j)\wedge K)$-th node, is visited exactly $\lfloor 1\vee \nu^2\rfloor$ times.
Suppose $p$ and $K$ are selected so that $\Ccov -pK\geq 1$.  $\textsf{Env}(\bm \varepsilon)$ is $\Ccov$-smooth with respect to the base measure
$$
\mu_{\bm\varepsilon} = 
\frac{p}{d\Ccov} \sum_{1\leq j \leq d}\sum_{1\leq k\leq K}\delta_{(j,Z_{jk})}  + \Big(1 - \frac{pK}{\Ccov}\Big)\delta_{(0,0)},
$$
where $Z_{jk}:= Z_k(\varepsilon_{j1}, \cdots, \varepsilon_{j(k-1)})$. 
To see this, note that 
$$
\bbP_t(X_t = (j,Z_{jk})\mid \mathcal{H}^x_{t-1})\leq p/d\leq \Ccov \cdot\mu_{\bm\varepsilon}((j,Z_{jk})),
$$
and 
$$
\bbP_t(X_t = (0,0)\mid \mathcal{H}^x_{t-1})\leq1\leq \Ccov -pK\leq \Ccov \cdot\mu_{\bm\varepsilon}((0,0)).
$$

Now we proceed to analyze the regret. For $j\in[d]$, $k\in[\widetilde{N}_T(j)\wedge K]$ and $r\in[\lfloor 1\vee \nu^2\rfloor]$, we denote by $\tau_{jkr}$ the $r$-th visit to the $k$-th node of index $j$. Since $\textsf{Env}(\bm \varepsilon)$ is well-specified, for any algorithm $\pi$,
\begin{align*}
\bbE_{\textsf{Env}(\bm \varepsilon), \pi}\big[\reg(T)\big] &\geq \bbE_{\textsf{Env}(\bm \varepsilon), \pi}\bigg[\sum_{t=1}^T \big(\widehat{y}_t-f^\star(x_t)\big)^2\bigg]\\
&\geq
\bbE_{\textsf{Env}(\bm \varepsilon), \pi}\bigg[
\sum_{j=1}^d \sum_{k=1}^{(\widetilde{N}_T(j)\wedge K)-1}\sum_{r=1}^{\lfloor 1\vee \nu^2\rfloor} \big( \widehat{y}_{\tau_{jkr}} - \varepsilon_{jk} \big)^2
\bigg].
\end{align*}
Now suppose that $\varepsilon_{jk}\overset{\mathrm{i.i.d.}}{\sim}\textsf{Bernoulli}(1/2)$. Conditional on $(D_t, J_t)_{t=1}^T$, for $r>1$, we have 
\begin{align*}
\bbE_{\bm\varepsilon, \textsf{Env}(\bm \varepsilon), \pi}\Big[ \big( \widehat{y}_{\tau_{jkr}} - \varepsilon_{jk} \big)^2
\Big]\geq&
\bbE_{\bm\varepsilon, \textsf{Env}(\bm \varepsilon)}\bigg[ \Big( \bbE\big[\varepsilon_{jk}\,|\,y_{\tau_{jk1}},\cdots, y_{\tau_{jk(r-1)}}\big] - \varepsilon_{jk} \Big)^2
\bigg]\\
\geq&\frac{1}{8} \Big( 1-\sqrt{\frac{r-1}{2} \mathsf{KL}\big(\mathcal{N}(0, \nu^2)\,||\,\mathcal{N}(1, \nu^2)\big) } \Big)\\
\geq & \frac{1}{8} \Big( 1-\sqrt{\frac{r-1}{2\nu^2}} \Big)\geq \frac{1}{32},
\end{align*}
where the second inequality follows from a standard lower bound for the Bayes risk, which we formally stated as Lemma~\ref{lemma:lecam}. Note that the same bound holds for $r=1$. Hence, 
\begin{align*}
\bbE_{\bm\varepsilon, \textsf{Env}(\bm \varepsilon), \pi}\big[\reg(T)\big] \geq& 
\frac{1\vee \nu^2}{32} \sum_{j=1}^d \Big(\bbE\big[\widetilde{N}_T(j)\wedge K\big] - 1\Big)\\
\geq & 
\frac{1\vee \nu^2}{32} \sum_{j=1}^d \Big(K\bbP\big[N_T(j)\geq (1\vee \nu^2)K\big] - 1\Big).
\end{align*}
Note that $N_T(j)\sim\textsf{Binomial}(T, p/d)$. Suppose that $K\leq T/\big(5(1\vee \nu^2)d\big)$ and set $p$ so that $pT/d\geq 5(1\vee \nu^2)K$. Then it follows from the Paley-Zygmund inequality that 
$$
\bbP\big(N_T(j)\geq (1\vee \nu^2)K\big)\geq \frac{8}{25}.
$$
Therefore, when $K\geq 10$, we have 
\begin{align*}
\sup_{\substack{\textsf{Env}\text{ is}\\ \Ccov-\text{smooth}\\\text{and $\nu^2$-subGaussian}}}
\mathbb{E}_{\textsf{Env}, \pi}[\reg(T)]
& \geq \max_{\bm \varepsilon\in \{0,1\}^{d \times K}} \bbE_{\textsf{Env}(\bm \varepsilon), \pi}[\reg(T)]\\
& \geq \bbE_{\bm \varepsilon, \textsf{Env}(\bm \varepsilon), \pi}[\reg(T)]\\
&\geq \frac{(1\vee \nu^2)Kd}{160}.
\end{align*}
To complete the proof, note that for our argument to be valid, we can choose any $K$ satisfying
$$
10 \leq K\leq \frac{T}{5(1\vee \nu^2)d}\wedge \sqrt{\frac{(\Ccov-1)T}{5(1\vee \nu^2)d}}.
$$

\end{proof-of-theorem}

\begin{lemma}\label{lemma:lecam}
Let $\theta_0\neq \theta_1\in\bbR^d$ index two distributions on $\mathcal{X}$, $P_0$ and $P_1$, respectively. Suppose that the parameter $\theta$ follows the uniform prior $(\delta_{\theta_0}+\delta_{\theta_1})/2$ and data $X\in\mathcal{X}$ satisfies $X\,|\,\theta\sim P_{\theta}$. Then the optimal Bayes risk satisfies
\begin{align*}
\bbE\Big[ \big( \bbE[\theta\,|\,X] - \theta \big)^2 \Big]=&
\inf_{\widehat{\theta}}\bbE\Big[ \big( \widehat{\theta}(X) - \theta \big)^2 \Big]\\
\geq & \frac{\|\theta_1 - \theta_0\|_2^2}{8}
\Big( 1-\sqrt{ \mathsf{KL}(P_0||P_1) /2 } \Big). 
\end{align*}
\end{lemma}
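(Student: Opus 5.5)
The plan is to compute the Bayes risk of the posterior mean exactly, lower bound it by the overlap $\int\min(p_0,p_1)$, and finish with Pinsker's inequality.

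\textbf{Step 1: the first equality.} For any estimator $\widehat\theta(X)$, expand
$\bbE\|\widehat\theta(X)-\theta\|_2^2=\bbE\|\widehat\theta(X)-\bbE[\theta\mid X]\|_2^2+\bbE\|\bbE[\theta\mid X]-\theta\|_2^2$.
The cross term vanishes by the tower property. Hence the infimum is attained at the posterior mean, and the Bayes risk equals $\bbE[\operatorname{tr}\textsf{Var}(\theta\mid X)]$.

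\textbf{Step 2: explicit posterior.} Take the dominating measure $\lambda=P_0+P_1$ and densities $p_i=\d P_i/\d\lambda$. The posterior weights are
$\pi_i(X)=p_i(X)/(p_0(X)+p_1(X))$.
Since $\theta$ takes only two values, writing $\Delta=\theta_1-\theta_0$ gives
$\textsf{Var}(\theta\mid X)$ with trace $\|\Delta\|_2^2\,\pi_0(X)\pi_1(X)$.
The marginal law of $X$ has density $(p_0+p_1)/2$ with respect to $\lambda$. Therefore
\[
\bbE\big[\|\bbE[\theta\mid X]-\theta\|_2^2\big]=\frac{\|\Delta\|_2^2}{2}\int\frac{p_0p_1}{p_0+p_1}\,\d\lambda .
\]

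\textbf{Step 3: reduce to total variation, then apply Pinsker.} Use the pointwise inequality
$\frac{p_0p_1}{p_0+p_1}\geq \frac12\min(p_0,p_1)$,
which holds because $\max(p_0,p_1)\geq (p_0+p_1)/2$. Combined with the identity
$\int\min(p_0,p_1)\,\d\lambda=1-\textsf{TV}(P_0,P_1)$,
this shows the risk is at least $\frac{\|\Delta\|_2^2}{4}(1-\textsf{TV}(P_0,P_1))$. Pinsker's inequality $\textsf{TV}(P_0,P_1)\leq\sqrt{\mathsf{KL}(P_0\|P_1)/2}$ then gives the bound with constant $1/4$. This is stronger than the claimed $1/8$.

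If $\mathsf{KL}(P_0\|P_1)\geq 2$ the right-hand side is nonpositive and the claim is trivial, so no case needs separate care. The only mildly delicate point is Step 2: the marginal density of $X$ must be written correctly, because that factor is what converts the posterior-variance expectation into the overlap integral.
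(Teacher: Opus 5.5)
Your proof is correct, but it takes a different route from the paper. The paper simply defers to Le Cam's two-point method in Chapter~15 of \citet{wainwright_high-dimensional_2019}, and that argument never looks at the posterior. Any estimator $\widehat\theta$ induces the test $\psi(X)=\arg\min_{j}\|\widehat\theta(X)-\theta_j\|_2$. On the event $\{\psi(X)\neq\theta\}$ the triangle inequality forces $\|\widehat\theta(X)-\theta\|_2\geq\|\theta_1-\theta_0\|_2/2$. Every test has Bayes error at least $(1-\textsf{TV}(P_0,P_1))/2$ under the uniform prior. Multiplying gives $\tfrac{\|\theta_1-\theta_0\|_2^2}{8}(1-\textsf{TV}(P_0,P_1))$, and Pinsker finishes; this is where the constant $1/8$ in the statement comes from. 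You instead compute the Bayes risk exactly as $\tfrac{\|\Delta\|_2^2}{2}\int\tfrac{p_0p_1}{p_0+p_1}\,\mathrm{d}\lambda$ and bound the integrand pointwise. That step loses only the factor $\max(p_0,p_1)\geq(p_0+p_1)/2$, so it yields the sharper constant $1/4$, attained when $P_0=P_1$. As a by-product you also justify the first equality in the statement (optimality of the posterior mean), which the citation leaves implicit. The price is generality: your computation uses the squared loss and the explicit two-point posterior in an essential way. The testing reduction applies verbatim to any loss $\Phi(\|\widehat\theta-\theta\|_2)$ with $\Phi$ nondecreasing, and it bounds the minimax risk directly. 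One small point: contrary to your remark that no case needs separate care, the case $\mathsf{KL}(P_0\|P_1)>2$ does need the trivial bound you invoke. There the bracket is negative, so $\tfrac{1}{4}x\geq\tfrac{1}{8}x$ fails and your $1/4$ bound does not imply the stated one.
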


\begin{proof-of-lemma}[\ref{lemma:lecam}]
See, for example,  \citet[Chapter 15]{wainwright_high-dimensional_2019}.
\end{proof-of-lemma}

\begin{definition}
\label{def:wills}
For $m$ vectors $z_1,\cdots,z_m\in \mathcal{X}\subseteq \mathbb{R}^d$, the Wills functional of a function class $\mathcal{F}:\mathcal{X}\to \bbR$ is defined as 
\begin{align}\label{eq:wills}
    W_m(\mathcal{F}) = \bbE_{\bm \xi}\Big[\exp\Big(\sup_{f\in \mathcal{F}} \sum_{1\leq i \leq m}\xi_i f(z_i) - \frac{1}{2}f(z_i)^2\Big)\Big],
\end{align}
where $\bbE_{\bm \xi}$ denotes expectation over $m$ independent standard Gaussian random variables.
\end{definition}

\begin{proof-of-proposition}[\ref{prop:will}]
Let $\mathcal{X}=[K]$, $\mu=\textsf{Unif}([K])$, and $\mathcal{F}:= \{f_\theta\mid f_{\theta}(k)=\theta_k,\, \theta\in\{0,1\}^K\}$. Without loss of generality, we assume $f^\star=0$. Let $N_{kt}:= \sum_{n=1}^t\mathbb{I}[X_n=k]$ be the number of visits to site $k$ up to time $t$, and define $S_{kt}:= \sum_{n=1}^t\mathbb{I}[X_n=k]\eta_t$. Clearly, $\widehat{f}_{t+1}(k)=1$ for all $\widehat{f}_{t+1}\in \widehat{\mathcal{F}}_{t+1}$ if and only if $2S_{kt}- N_{kt}> 0$. One can show that 
$$
\bbE[\reg(T)]\leq \bbE[\mathsf{Err}(T)] + \sum_{k=1}^K\bbE[(2S_{kT}-N_{kT})_+], 
$$
where
$$
\mathsf{Err}(T):= \sum_{t=1}^T \sup_{\widehat{f}_t\in\widehat{\mathcal{F}}_t}\big( \widehat{f}_t(X_t)-f^\star(X_t) \big)^2.
$$

We first derive an upper bound for $\bbE[\mathsf{Err}(T)]$. Let $\tau_{kn}:= \inf\{t: N_{kt}\geq n\}$ be the time of the $n$-th visit to site $k$. We have
$$
\mathsf{Err}(T)\leq \sum_{k=1}^K\bigg\{
1 + \sum_{n=1}^{T-1}\mathbb{I}\big[ \tau_{kn}\leq T,\, 2S_{k\tau_{kn}}-n> 0\big].
\bigg\}
$$
Recall that $Y_t=f^\star(X_t)+\eta_t$ with $\eta_t\mid \mathcal{H}_{t-1},X_t\sim \textsf{subG}(v^2)$. Fix any $k\in[K]$, and let $Z_{kt}(\lambda) = \exp\{\lambda S_{kt} - v^2\lambda^2 N_{kt}/ 2\}$. It is straightforward to verify that $\bbE [Z_{kt}(\lambda)\mid \mathcal{H}_{t-1}]\leq Z_{k(t-1)}(\lambda)$, thus $Z_{kt}(\lambda)$ is a supermartingale. By Ville's inequality, for any $a\in\bbR$, we have
\begin{align}\label{eq:villes}
\bbP \Big(\sup_{1\leq t\leq T} \Big( \lambda S_{kt}-\frac{v^2\lambda^2}{2} N_{kt} \Big)\geq a\Big)\leq e^{-a}.
\end{align}
Setting $\lambda = (2v^2)^{-1}$ and $a = n(8v^2)^{-1}$, given $\tau_{kn}\leq T$, \eqref{eq:villes} implies
\begin{align*}
\bbP \big( 2S_{k\tau_{kn}}- n > 0\big)\leq \exp \Big\{-\frac{n}{8v^2}\Big\}.
\end{align*}
Therefore, we have
$$
\bbE[\mathsf{Err}(T)]\leq K \bigg[ 1 + \sum_{n=1}^T\exp \Big\{-\frac{n}{8v^2}\Big\} \bigg]\leq \frac{K}{1-e^{-(8v^2)^{-1}}}\leq (1+8v^2)K. 
$$

To upper bound $\bbE[(2S_{kT}-N_{kT})_+]$, setting $\lambda = v^{-2}$, \eqref{eq:villes} implies
\begin{align*}
\bbP \big( 2S_{kT}- N_{kT} \geq a\big)\leq \exp \Big\{-\frac{a}{2v^2}\Big\}.
\end{align*}
Hence, 
$$
\bbE[(2S_{kT}-N_{kT})_+] = \int_{0}^{+\infty} \bbP \big( 2S_{kT}- N_{kT} \geq a\big) \d a\leq 2v^2. 
$$
Consequently, we have
$$
\bbE[\reg(T)]\leq (1+10v^2)K. 
$$

Finally, we derive the order of $\log \bbE_{\mu}[W_m(\mathcal{F})]$ when $\mu=\textsf{Unif}([K])$. Recall that given $Z_1,\cdots,Z_m\in \mathcal{X}$, 
\begin{align}
W_m(\mathcal{F}) = \bbE_{\bm \xi}\Big[\exp\Big(\sup_{f\in \mathcal{F}} \sum_{1\leq i \leq m}\xi_i f(Z_i) - \frac{1}{2}f(Z_i)^2\Big)\Big],
\end{align}
where $\xi_i\overset{\mathrm{i.i.d.}}{\sim}\mathcal{N}(0,1)$. Define $m_k:= \sum_{i=1}^m\mathbb{I}[Z_i=k]$, and suppose that $m_k\geq 1$ for all $k$. Define $G_k = (m_k)^{-1/2} \sum_{i=1}^m \mathbb{I}[Z_i=k] \xi_i$. Clearly $G_k\,|\,m_k \overset{\mathrm{i.i.d.}}{\sim}\mathcal{N}(0,1)$. Some algebra yields
$$
\sup_{f\in \mathcal{F}} \sum_{1\leq i \leq m}\xi_i f(Z_i) - \frac{1}{2}f(Z_i)^2 = \sum_{k=1}^K \Big(\sqrt{m_k}G_k - \frac{m_k}{2}\Big)_+,
$$
and 
$$
\bbE\,\exp\Big\{ \Big(\sqrt{m_k}G_k - \frac{m_k}{2}\Big)_+ \Big\} = 2\Phi(\sqrt{m_k}/2), 
$$
where $\Phi$ denotes the standard normal CDF. Hence, $W_m(\mathcal{F}) = 2^K\Pi_{k=1}^K \Phi(\sqrt{m_k}/2)\leq 2^K$. Note that the equality still holds if some $m_k$'s are zero. When $m\geq K\log(2K)$, $\bbP(\exists k\in[K]\text{ s.t. }m_k=0)\leq 1/2$. Therefore, we have $\log\bbE_{\mu}[W_m(\mathcal{F})]\asymp K$. 

\end{proof-of-proposition}

\subsection{A correction to Theorem 8 of \citet{block2024performance}}\label{sec:correction}

The upper bound in \citet[Theorem~8]{block2024performance} has a noise-dependent factor $(1\vee\nu)^{1/2}$, whereas our lower bound shows that a factor of $1\vee\nu$ is necessary. This discrepancy originates in their proof of Lemma~13. Adopting their notation, let $B:= 2 (1\vee\nu) \sqrt{\log (T/\delta)}$. The inequality
$$
\mathbb{E}\left[\sup _{g \in \mathcal{G}} 8 \cdot\langle\varepsilon \cdot| \eta|, g\rangle_{T-1}-\|g\|_{T-1}^2\right] \leq B \cdot \mathbb{E}\left[\sup _{g \in \mathcal{G}} 8 \cdot\langle\varepsilon, g\rangle_{T-1}-\|g\|_{T-1}^2\right]+8 T \delta
$$
in their proof should be replaced by
$$
\mathbb{E}\left[\sup _{g \in \mathcal{G}} 8 \cdot\langle\varepsilon \cdot| \eta|, g\rangle_{T-1}-\|g\|_{T-1}^2\right] \leq B^2 \cdot \mathbb{E}\left[\sup _{g \in \mathcal{G}/B} 8 \cdot\langle\varepsilon, g\rangle_{T-1}-\|g\|_{T-1}^2\right]+ \frac{32\nu^2}{B}\Big(\frac{\delta}{T}\Big)^2.
$$
Factoring out the noise amplitude requires rescaling the quadratic penalty, yielding a quadratic rather than linear factor $B^2$. With this minor correction and the contraction property of the Wills functional, the bound in Lemma~13 becomes
$$
\mathbb{E}\left[\left\|\widehat{f}_T-f^{\star}\right\|_{T-1}^2\right] \lesssim \frac{(1\vee\nu^2)\log T}{T}   \Big(\log \mathbb{E}_{Z_{t, j}}\left[W_{k(T-1)}\left(256\left(\mathcal{F}-f^{\star}\right)\right)\right]+T e^{-\sigma k}\Big).
$$
Following the remaining steps of the proof of \citet[Theorem~8]{block2024performance} then yields an upper bound for ERM with the correct dependence on $\nu$.